\def\BState{\State\hskip-\ALG@thistlm}
\newcommand{\Enote}[1]{\begin{center}\fbox{\begin{minipage}{35em}
                        {{\bf Emmanuel Note:} {#1}} \end{minipage}}\end{center}}
\newcommand\independent{\protect\mathpalette{\protect\independent}{\perp}} 
\def\independent#1#2{\mathrel{\rlap{$#1#2$}\mkern2mu{#1#2}}}
\newcommand{\F}{\mathcal{F}} 
\newcommand{\mR}{\mathbb{R}}
\newcommand{\pp}{\mathbb{P}}
\newcommand{\E}{\mathbb{E}}
\DeclareMathOperator{\Var}{Var}
\newcommand{\e}{\varepsilon}
\newcommand{\X}{\mathcal{X}}
\newcommand{\Y}{\mathcal{Y}}
\newcommand{\Z}{\mathcal{Z}}
\newcommand{\1}{\mathbb{1}}
\newcommand{\B}{\mathbb{B}}
\newcommand{\tX}{\tilde{X}}
\newcommand{\tY}{\tilde{Y}}
\newcommand{\tW}{\tilde{W}}
\newtheorem{theorem}{Theorem}
\newtheorem{lemma}{Lemma}
\newtheorem{corollary}{Corollary}
\newtheorem{definition}{Definition}
\newtheorem{remark}{Remark}
\newtheorem{example}{Example}
\begin{document}



\title{Provable limitations of deep learning}

\author{Emmanuel Abbe \\ EPFL
\and  Colin Sandon \\ MIT 
}

\date{}
\maketitle

\begin{abstract}
As the success of deep learning reaches more grounds, one would like to also envision the potential limits of deep learning. This paper gives a first set of results proving that certain deep learning algorithms fail at learning certain efficiently learnable functions. 
The results put forward a notion of cross-predictability (on function distributions) that characterizes when such failures take place. Parity functions provide an extreme example with a cross-predictability that decays exponentially, while a mere super-polynomial decay of the cross-predictability is shown to be sufficient to obtain failures. Examples in community detection and arithmetic learning are also discussed.

Recall that it is known that the class of neural networks (NNs) with polynomial network size can express any function that can be implemented in polynomial time, and that their sample complexity scales polynomially with the network size. Thus NNs have favorable approximation and estimation errors. The challenge is with the optimization error, as the ERM is NP-hard and there is no known efficient training algorithm with provable guarantees. The success behind deep learning is to train deep NNs with descent algorithms, such as coordinate, gradient or stochastic gradient descent. 

The failures shown in this paper apply to training poly-size NNs on function distributions of low cross-predictability with a descent algorithm that is either run with limited memory per sample or that is initialized and run with enough randomness (such as exponentially small Gaussian noise for GD). We further claim that such types of constraints are necessary to obtain failures, in that exact SGD with careful non-random initialization can be shown to learn parities. The cross-predictability in our results plays a similar role the statistical dimension in statistical query (SQ) algorithms, with distinctions explained in the paper. 
The proof techniques are based on exhibiting algorithmic constraints that imply a statistical indistinguishability between the algorithm's output on the test  model v.s.\ a null model, using information measures to bound the total variation distance. This is then translated into an algorithmic failure based on the limitations of the null model.

\end{abstract}

\newpage
\tableofcontents

\newpage

\section{Introduction}

It is known that the class of neural networks (NNs) with polynomial network size can express any function that can be implemented in a given polynomial time, and that their sample complexity scales polynomially with the network size. Thus NNs have favorable approximation and estimation errors. The main challenge is with the optimization error, as there is no known efficient training algorithm for NNs with provable guarantees, in particular, it is NP-hard to implement the ERM rule  \cite{net_hard,daniely}.

The success behind deep learning is to train {\it deep} NNs with {\it descent} algorithms (e.g., coordinate, gradient or stochastic gradient descent); this gives record performances in image \cite{imagenet}, speech \cite{speech} and document recognitions \cite{document}, and the scope of applications is increasing on a daily basis \cite{deep_nature,deep_book}. While deep learning operates in an overparametrized regime, and while SGD optimizes a highly non-convex objective function, the training by SGD gives astonishingly low generalization errors in these applications. A major research effort is devoted to explaining these successes, with various components claimed responsible, such as the compositional structure of neural networks matching that of real signals, the implicit regularizations behind SGD (e.g., its stochastic component), the increased size of data sets and the augmented computational power, among others. 

With the wide expansion of the field, one would like to also envision the potential limits of deep learning.
This is the focus of this paper.  To understand the limitations of deep learning, we look for classes of functions that are efficiently learnable by some algorithm, but not for deep learning. 

The function that computes the parity of a Boolean vector is a well-known candidate \cite{lecun_perso}, as most functions that SGD is likely to try would be essentially uncorrelated with it, making it difficult to get close enough to the right function in a manageable time. 
However, any Boolean function that can be computed in time $O(T(n))$ can also be expressed by a neural network of size $O(T(n)^2)$ \cite{parberry,shaishai}, and so one could always start with a neural net that is set to compute the desired function, such as the parity function.
The problem is thus meaningful only if one constraints the type of initialization (e.g., random initializations) or if one deals with a class of functions (concept class) rather than a specific one, as commonly done for parities  \cite{ohad2,ran_memory}. We next discuss the example of parities before going back to general function distributions in Sections \ref{insight}  and \ref{results}.


\subsection{Learning parities}\label{model}
The problem of learning parities is  formulated as follows.
Define the class of all parity functions by $\F=\{ p_s:  s \subseteq [n] \}$, where $p_s: \{+1,-1\}^n  \to \{+1,-1\}$ is such that $$p_s(x)=\prod_{i  \in s} x_i.$$ 
Nature picks $S$ uniformly at random in $2^{[n]}$, and with access to $\F$ but not to $S$, the problem is to run a descent algorithm for a polynomial number of steps $t$ (in $n$) to obtain $w^{(t)}$ (e.g., coordinate, gradient or stochastic gradient descent using labeled samples $(X_i,P_S(X_i))$ where the $X_i$'s are independently and uniformly drawn in $\{+1,-1\}^n$).


The goal is to have the neural network output a label $eval_{w^{(t)}}(X)$ on a uniformly random input $X$ that (at least) correlates with the true label $p_S(X)$, such as    
$$I(eval_{w^{(t)}}(X); p_S(X)) =\Omega_n(1),$$
for some notion of mutual information (e.g., TV, KL or Chi-squared mutual information),
or $$\pp\{eval_{w^{(t)}}(X) = p_S(X)\} =1/2+\Omega_n(1),$$ 
if the output is made binary. 


Note that this is a weak learning requirement, thus failing at this is discarding any stronger requirements related to PAC-learning as mentioned in Section \ref{results}.   
Note also that this objective can be achieved if we do not restrict ourselves to using a NN trained with a descent algorithm. In fact, one can simply take an algorithm that builds a basis from enough samples (e.g., $n + \Omega(\log(n))$) and solves the resulting system of linear equations to reconstruct $S$. This seems however far from how deep learning proceeds. For instance, SGD is ``memoryless'' in that it updates the weights of the NN at each step with a sample but does not a priori explicitly remember the previous samples. Since each sample gives very little information about the true $S$, it thus seems unlikely for SGD to make any progress on a polynomial time horizon. However, it is far from trivial to argue this formally if we allow the NN to be arbitrarily large and with arbitrary initialization (albeit of polynomial complexity), and in particular inspecting the gradient is typically not sufficient. 
In fact, we claim that this is wrong, and deep learning can learn the parity function with a careful (though poly-time) initialization --- See Sections \ref{positive} and \ref{universality}.

On the other hand, if the initialization is done at random, as commonly assumed \cite{shaishai}, and the descent algorithm is run with perturbations, as sometimes advocated in different forms \cite{perturbed_sgd,langevin1,langevin2}, or if one does not move with the full gradient such as in (block-)coordinate descent or more generally bounded-memory update rules, then we show that it is in fact hard to learn parities. We will provide results in such settings showing the failure of deep learning.

Note also that having GD run with little noise is not equivalent to having noisy labels for which learning parities can be hard irrespective of the algorithm used \cite{parity_blum,regev}; in fact, the amount of noise that we need for running GD to obtain failure is exponentially small, which would effectively represent no noise if that noise was added on the labels itself (e.g., Gaussian elimination would still work).


\subsection{An illustrative experiment}\label{exp}
To illustrate the phenomenon, we consider the following data set and numerical experiment in PyTorch \cite{paszke2017automatic}. The elements in $\X$ are images with a white background and either an even or odd number of black dots, with the parity of the dots determining the label  --- see Figure \ref{images}. 
The dots are drawn by building a $k \times k$ grid with white background and activating each square with probability $1/2$.

We then train a neural network to learn the parity label of these images. The architecture is a 3 hidden linear layer perceptron with 128 units and ReLU non linearities trained using binary cross entropy. The training and testing dataset are composed of 1000 images of grid-size $k=13$. We used PyTorch implementation of SGD with step size 0.1 and i.i.d.\ rescaled uniform weight initialization \cite{imagenet2}.


\begin{figure}[H]
\centering
  \includegraphics[width=0.5\linewidth]{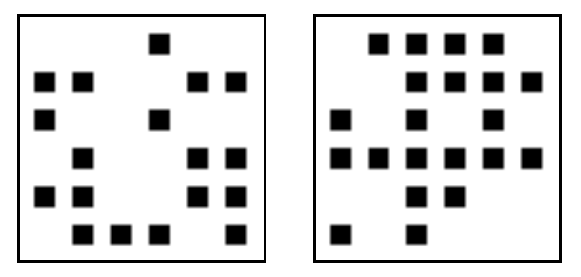}
\caption{Two images of $13^2=169$ squares colored black with probability $1/2$. The left (right) image has an even (odd) number of black squares. The experiment illustrates the incapability of deep learning to learn the parity.}
\label{images}
\end{figure}

Figure \ref{accuracy} show the evolution of the training loss, testing and training errors. As can be seen, the net can learn the training set but does not  generalize better than random guessing.  

\begin{figure}[H]
\centering
  \includegraphics[width=.9\linewidth]{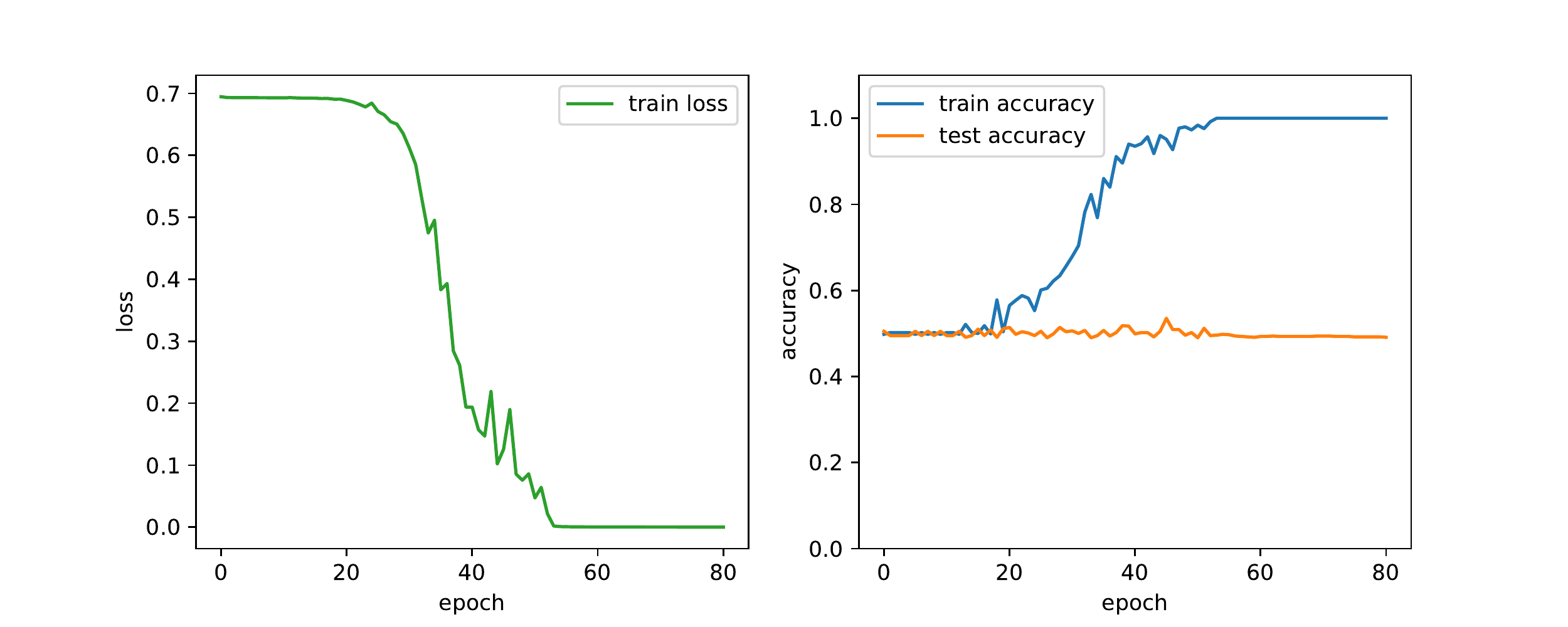}
\caption{Training loss (left) and training/testing errors (right) for up to 80 SGD epochs.}
\label{accuracy}
\end{figure}

Note that this is not exactly the model of Section \ref{model}. 
In the experiment, each image can be viewed as a Boolean vector of dimension  $k^2$, but the parity is taken over the entire vector, rather than a subset $S$.
This is however similar to our setup due to the following observation.\footnote{Another minor distinction is to sample from a pre-set training set v.s.\ sampling  fresh samples, but both can be implemented similarly for the experiment.} 
First consider the same experiment where one only takes the parity on the set $S$ consisting of the first half of the image; this would have the same performance outcome. Now, since the net is initialized with i.i.d.\ random weights, taking $S$ has the first half or any other random subset of $\approx k^2/2$ components leads to the same outcome by symmetry. Therefore, we expect failure for the same reason as we expect failure in our model with enough randomness in the initialization.

\subsection{Learning general function and input distributions}
In this paper, we will investigate the effect of general input distribution $P_\X$ and function distribution $P_\F$ on deep learning.

\begin{definition}
Let $n>0$, $\epsilon>0$, $P_{\X}$ be a probability distribution on $\X=\mathcal{D}^n$ for some set $\mathcal{D}$, and $P_\F$ be a probability distribution on the set of functions from $\X$ to $\{+1,-1\}$. 

Consider an algorithm $A$ that, given access to some information about $P_\X$ and $F \sim P_\F$ (e.g., samples under $P_\X$ labelled by $F$) outputs a function $\hat{F}$.
Then $A$ learns $(P_\F,P_{\X})$ with accuracy $\alpha$ if  $\pp\{\hat{F}(X)= F(X)\} \ge \alpha$, where the previous probability is taken over $(X,F) \sim P_\X \times P_\F$ and any randomness potentially used by $\hat{F}$. In particular, we say that $A$ (weakly) learns $(P_\F,P_{\X})$ if it learns $(P_\F,P_{\X})$ with accuracy $1/2+\Omega_n(1)$.
\end{definition}

\subsection{Insights about failures and successes of deep learning}\label{insight}
Our negative results reveal a measure that captures when the considered deep learning algorithms fail.

For a probability measure $P_\X$ on the data domain $\X$, and a probability measure $P_{\F}$ on the class of functions $\F$ from $\X$ to $\Y=\{+1,-1\}$, we define the cross-predictability of $P_{\F}$ with respect to $P_\X$ by 
\begin{align}
\mathrm{Pred}( P_\X, P_{\F} ) = \E_{F,F'} (\E_{X} F(X) F'(X))^2, \quad (X,F,F') \sim P_\X \times  P_{\F} \times P_{\F}.
\end{align}



This measures how predictable a sampled function is from another one on a typical input, and our results primarily exploit a low cross-predictability to obtain negative learning results. In particular, one can obtain failure results for SGD with bounded memory per sample or noise, as well as GD with noise for any distribution of cross-predictability that vanishes at a super-polynomial rate. One can further express this property in terms of the Fourier-Walsh expansion of the functions, as $\mathrm{Pred}( P_\X, P_{\F} )=\E_{F,F'} \langle \hat{F}, \hat{F'} \rangle^2= \| \E_F \hat{F}^{\otimes 2} \|_2^2$. In particular, taking parities on any random subset of $k=\omega(1)$ components already suffices to make the cross predictability decay super-polynomially, and thus to imply the failure of learning. 

The main insight is as follows. All of the algorithms that we consider essentially take a neural net, attempt to compute how well the functions computed by the net and slightly perturbed versions of the net correlate with the target function, and adjust the net in the direction of higher correlation. If none of these functions have significant correlation with the target function, this will generally make little or no progress. The descent algorithm evolves in a flat minima where no significant progress is made in a polynomial time horizon.

Of course, for any specific target function, we could initialize the net to correlate with it. However, if the target function is randomly drawn from a class with negligible cross-predictability, and if one cannot operate with GD or SGD perfectly, then no function is significantly correlated with the target function with nonnegligible probability and a descent algorithm will generally fail to learn the function in polynomial time.


\subsubsection{Learning a fixed function with random initialization}
Consider a symmetric function distribution, i.e., one that is invariant under a permutation of the input variables (e.g., random monomials). 
If one cannot learn this distribution with any initialization of the net, one cannot learn any given function in the distribution support with a random i.i.d.\ initialization of the net. This is because a random i.i.d.\ initialization of the net is itself symmetric under permutation.        

Further, we claim that the cross-predictability measure can be used to understand when a given function $h$ cannot be learned in poly-time with GD/SGD on poly-size nets that are randomly initialized, {\it without} imposing noise or memory constraints on how GD/SGD are run.  

Namely, define the cross-predictability between a target function and a random neural net as 
\begin{align}
\mathrm{Pred}( P_\X, h , \mu_{NN} ) = \E_{G} (\E_{X} h(X) \mathrm{eval}_{G,f}(X))^2, 
\end{align}
where $(G,f)$ is a random neural net under the distribution $\mu_{NN}$, i.e., $f$ is a fixed non-linearity, $G$ is a random graph that consists
of complete bipartite\footnote{One could consider other types of graphs but a certain amount of randomness has to be present in the model.} graphs between consecutive layers of a poly-size NN, with weights i.i.d.\ centered Gaussian of variance equal to one over the width of the previous layer, and $X \sim P_\X$ is independent of $G$. We then claim that if such a cross-predictability decays super-polynomially, training such a  random neural net with a polynomial number of steps of GD or SGD will fail at learning {\it even without memory or noise constraints}.
Again, as mentioned above, if the target function is permutation invariant, it cannot be learned with a random initialization and noisy GD with  small random noise. So the claim is that the random initialization gives already enough randomness in one step to cover all the added randomness from noisy  GD.


\subsubsection{Succeeding with large  cross-predictability}
In the case of random degree $k$ monomials, i.e., parity functions on a uniform subset $S$ of size $k$ with uniform inputs, we will show that deep learning fails at learning under memory or noise constraints as soon as $k=\omega(1)$. This is because the cross-predictability scales as ${n \choose k}^{-1}$, which is already super-polynomial when $k=\omega(1)$.  

On the flip side, if $k$ is constant, it is not hard to show that GD can learn this function distribution by inputting all the ${n \choose k}$ monomials in the first layer (and for example the cosine non-linearity to compute the parity in one hidden layer). Therefore, for random degree $k$ monomials, the deep learning algorithms described in our theorems will succeed at learning {\it if and only if} $k=O(1)$.  Thus one can only learn ``local'' functions in that sense.


We believe that small cross-predictability does not take place for typical labelling functions concerned with images or sounds, where many of the functions we would want to learn are correlated both with each other and with functions a random neural net is reasonably likely to compute. For instance, the objects in an image will correlate with whether the image is outside, which will in turn correlate with whether the top left pixel is sky blue. A randomly initialized neural net is likely to compute a function that is nontrivially correlated with the last of these, and some perturbations of it will correlate with it more, which means the network is in position to start learning the functions in question.

Intuitively, this is due to the fact that images and image classes have more compositional structures (i.e., their labels are well explained by combining `local' features). Instead, parity functions of large support size, i.e., not constant size but growing size, are not well explained by the composition of local features of the vectors, and require more global operations on the input.   



\subsection{Succeeding beyond cross-predictability}




As previously mentioned, certain methods of training a neural net cannot learn a random function drawn from any distribution with a cross-predictability that goes to zero at a superpolynomial rate. This raises the question of whether or not these methods can successfully learn a random function drawn from any distribution with a cross-predictability that is at least the inverse of a polynomial.

The first obstacle to learning such a function is that some functions cannot be computed to a reasonable approximation by any neural net of polynomial size. A probability distribution that always yields the same function has a cross-predictability of $1$, but if that function cannot be computed with nontrivial accuracy by any polynomial-sized neural net, then any method of training such a net will fail to learn it. 

Now, assume that every function drawn from $P_{\F}$ can be accurately computed by a neural net with polynomial size. If $P_{\F}$ has an inverse-polynomial cross-predictability, then two random functions drawn from the distribution will have an inverse-polynomial correlation on average. In particular, there exist a family of functions $f_0$ and a constant $c$ such that if $F\sim P_{\F}$ then $\E_{F} (\E_{X} F(X) f_0(X))^2=\Omega(n^{-c})$. Now, consider a neural net $(G,\phi)$ that computes $f_0$. Next, let $(G',\phi)$ be the neural net formed by starting with $(G,\phi)$ then adding a new output vertex $v$ and an intermediate vertex $v'$. Also, add an edge of very low weight from the original output vertex to $v$ and an edge of very high weight from $v$ to $v'$. This ensures that changing the weight of the edge to $v'$ will have a very large effect on the behavior of the net, and thus that SGD will tend to primarily alter its weight. That would result in a net that computes some multiple of $f_0$. If we set the loss function equal to the square of the difference between the actual output and the desired output, then the multiple of $f_0$ that has the lowest expected loss when trying to compute $F$ is $\E_{X}[f_0(X)F(X)] f_0$, with an expected loss of $1-\E^2_{X}(f_0(X)F(X))$. We would expect that training $(G',\phi)$ on $F$ would do at least this well, and thus have an expected loss over all $F$ and $X$ of at most $1-\E_{F} (\E_{X} F(X) f_0(X))^2=1-\Omega(n^{-c})$. That means that it will compute the desired function with an average accuracy of $1/2+\Omega(n^{-c})$. Therefore, if the cross-predictability is polynomial, one can indeed learn with at least a polynomial accuracy.  


However, we cannot do much better than this. To demonstrate that, consider a probability distribution over functions that returns the function that always outputs $1$ with probability $1/\ln(n)$, the function that always outputs $-1$ with probability $1/\ln(n)$, and a random function otherwise. This distribution has a cross-predictability of $\theta(1/\ln^2(n))$. However, a function drawn from this distribution is only efficiently learnable if it is one of the constant functions. As such, any method of attempting to learn a function drawn from this distribution that uses a subexponential number of samples will fail with probability $1-O(1/\ln(n))$. In particular, this type of example demonstrates that for any $g=o(1)$, there exists a probability distribution of functions with a cross-predictability of at least $g(n)$ such that no efficient algorithm can learn this distribution with an accuracy of $1/2+\Omega(1)$.

However, one can likely prove that a neural net trained by noisy GD or noisy SGD can learn $P_{\F}$ if it satisfies the following property. Let $m$ be polynomial in $n$, and assume that there exists a set of functions $g_1,...,g_m$ such that each of these functions is computable by a polynomial-sized neural net and the projection of a random function drawn from $P_{\F}$ onto the vector space spanned by $g_1,...,g_m$ has an average magnitude of $\Omega(1)$. In order to learn $P_{\F}$, we start with a neural net that has a component that computes $g_i$ for each $i$, and edges linking the outputs of all of these components to its output. Then, the training process can determine how to combine the information provided by these components to compute the function with an advantage that is within a constant factor of the magnitude of its projection onto the subspace they define. That yields an average accuracy of $1/2+\Omega(1)$. However, we do not think that this is a necessary condition to be able to learn a distribution using a neural net trained by noisy SGD or the like.



\subsection{Related works}\label{related}

The difficulty of learning parities with NNs is not new. The parity was already known to be hard based on the early works on the perceptron \cite{perceptron}, see also \cite{hastad,allender}. We now discuss various works related to ours.

{\bf Statistical querry algorithms.} The lack of correlations between two parity functions and its implication in learning parities appear also in the context of statistical query learning algorithms \cite{kearns,query}, which are algorithms that access estimates of the expected value of some query function over the underlying data distribution (e.g., the first moment statistics of inputs' coordinates). 

In particular, gradient-based algorithms with approximate oracle access are realizable as statistical query algorithms, and \cite{kearns} implies that the class of parity functions cannot be learned by such algorithms, which  
gives a result similar in nature to our Theorem \ref{thm2'}. 
However, the result from \cite{kearns} and its generalization in \cite{parity_blum} have a few differences from those presented  here; first these papers define successful learning for {\it any} function in a class of function, where as we will work with {\it typical} functions from a function distributions; second these papers require the noise to be adversarial, while we use statistical (and thus less restrictive) noise; finally the proof techniques are different, mainly based on Fourier analysis in \cite{parity_blum} and on hypothesis testing here. 

One could also use \cite{kearns} to obtain a variant of our Theorem \ref{thm2}. 
Technically \cite{kearns} only says that a SQ algorithm with a polynomial number of queries and inverse polynomial noise cannot learn a parity function, but the proof would still work with appropriately chosen exponential parameters. To further convert this to the setting with statistical noise, one could use an argument saying that the Gaussian noise is large enough to mostly drown out the adversarial noise if the latter is small enough, but the resulting bounds would be slightly looser than ours because that would force one to make trade offs between making the amount of adversarial noise in the SQ result low and minimizing the probability that one of the queries does provide meaningful information. Alternately, one could probably rewrite their proof using Gaussian noise instead of bounded adversarial noise and bound sums of $L_1$ differences between the probability distributions corresponding to different functions instead of arguing that with high probability the bound on the noise quantity is high enough to allow the adversary to give a generic response to the query. 

To see how Theorem \ref{thm2'} departs from the setting of \cite{parity_blum} beyond the statistical noise discussed above, note that the cross-predictability captures the expected inner product $\langle F_1,F_2 \rangle_{P_\X}$ over two i.i.d.\ functions $F_1,F_2$ under $P_\F$, whereas the statistical dimension defined in \cite{parity_blum} is the largest number $d$ of functions $f_i \in \F$ that are nearly orthogonal, i.e, $|\langle f_i,f_j \rangle_{P_\X}| \le 1/d^3$, $1 \le i< j \le d$. Therefore, while the cross-predictability and statistical dimension tend to be negatively correlated, one can construct a family $\F$ that contains many almost orthogonal functions, yet with little mass under $P_\F$ on these so that the distribution has a high cross-predictability.\footnote{
For example, take a class containing two types of functions, hard and easy, such as parities on sets of components and almost-dictatorships which agree with the first input bit on all but $n$ of the inputs. The parity functions are orthogonal, so the union contains a set of size  $2^n$ that is pairwise orthogonal. However, there are about $2^n$ of the former and $2^{n^2}$ of the latter, so if one picks a function uniformly at random on the union, it will belong to the latter group with high probability, and the cross-predictability will be $1-o(1)$.
} So one can build examples of function classes where it is possible to learn with a moderate cross-predictability while the statistical dimension is large and learning fails in the sense of \cite{parity_blum}. 


There have been further extensions of the statistical-dimension, such as in \cite{vempala2} which allows for a probability measure on the functions as well. The statistical dimension as defined in Definition 2.6 of \cite{vempala2} measures the maximum probability subdistribution with a sufficiently high correlation among its members.\footnote{Another difference with \cite{vempala2} is that our results show failures for weak rather than exact learning.} As a result, any probability distribution with a low cross predictability must have a high statistical dimension in that sense. However, a distribution of functions that are all moderately correlated with each other could have an arbitrarily high statistical dimension despite having a reasonably high cross-predictability. For example, using definition 2.6 of \cite{vempala2} with constant $\bar{\gamma}$ on the collection of functions from $\{0,1\}^n->\{0,1\}$ that are either 1 on $1/2+\sqrt{\gamma}/4$ of the possible inputs or 1 on $1/2-\sqrt{\gamma}/4$ of the inputs, gives a statistical dimension with average correlation $\gamma$ that is doubly exponential in $n$. However,this has a cross predictability of $\gamma^2/16$.

One could imagine a way to prove Thm \ref{thm1'} with prior SQ works as follows, (i) generalize the paper of \cite{parity_conj} that establishes a result similar to our Thm \ref{thm1'} for the special case of parities to the class of low cross-predictability functions, (ii) show that this class has the right notion of statistical dimension that is high. However, the distinction between low cross-predictability and high stat-dimensional would kick in at this point. If we take the example mentioned in the previous paragraph, the version of SGD used in Theorem \ref{thm1'} could learn to compute a function drawn from this distribution with expected accuracy $1/2+\sqrt{\gamma}/8$ given $O(1/\gamma)$ samples, so the statistical dimension of the distribution is not limiting its learnability by such algorithms in an obvious way. One might be able to argue that a low cross-predictability implies a high statistical dimension with a value of $\gamma$ that vanishes sufficiently quickly and then work from there. However, it is not clear exactly how one would do that, or why it would give a preferred approach.

Paper \cite{query2} also shows that gradient-based algorithms with approximate oracle access are realizable as statistical query algorithms, however, \cite{query2} makes a convexity assumption that is not satisfied by non-trivial neural nets. SQ lower bounds for learning with data generated by neural networks is also investigated in \cite{song} and for neural network models with one hidden nonlinear activation layer in \cite{wilmes}.


Finally, the current SQ framework does not apply to noisy SGD (even for adversarial noise). In fact, we believe that it is possible to learn parities with better noise-tolerance and complexity than any SQ algorithm will do --- see further discussion below and in Sections \ref{positive} and \ref{universality}.


{\bf Memory-sample trade-offs.} In \cite{ran_memory}, it is shown that one needs either quadratic memory or an exponential number of samples in order to learn parities, settling a conjecture from \cite{parity_conj}. This gives a  non-trivial lower bound on the number of samples needed for a learning problem and a first complete negative result in this context, with applications to bounded-storage cryptography  \cite{ran_memory}. 
Other works have extended the results of \cite{ran_memory}; in particular \cite{ran_sparse} applies to k-sparse sources, \cite{ran17} to other functions than parities, and \cite{grt} exploits properties of two-source extractors to obtain comparable memory v.s.\ sample complexity trade-offs, with similar results obtained in \cite{bogy}. The cross-predictability has also similarity with notions of almost orthogonal matrices used in $L_2$-extractors for two independent sources \cite{two_source,grt}.

    In contrast to this line of works (i.e., \cite{ran_memory} and follow-up papers), our Theorem \ref{thm1'} specialized to the parity functions shows that one needs exponentially many samples to learn parities if less than $n/24$ pre-assigned bits of memory are used {\it per sample}. These are thus different models and results. Our result does not say anything interesting about our ability to learn parities with an algorithm that has free access to memory, while the result of \cite{ran_memory} says that it would need to have $\Omega(n^2)$ total memory or an exponential number of samples. On the flip side, our result shows that an algorithm with unlimited amounts of memory will still be unable to learn a random parity function from a subexponential number of samples if there are sufficiently tight limits on how much it can edit the memory while looking at each sample. The latter is relevant to study SGD with bounded memory as discussed in this paper. 
    
    Note also that for the special case of parities, one could aim for Theorem \ref{thm1'} using \cite{parity_conj} with the following argument. If bounded-memory SGD could learn a random parity function with nontrivial accuracy, then we could run it a large number of times, check to see which iterations learned it reasonably successfully, and combine the outputs in order to compute the parity function with an accuracy that exceeded that allowed by Corollary 4 in \cite{parity_conj}. However, in order to obtain a generalization of this argument to low cross-predictability functions, one would need to address the points made previously regarding Theorem \ref{thm1'} and \cite{parity_conj} (point  (i) and (ii)).

{\bf Gradient concentration.} Finally, \cite{ohad}, with an earlier version in \cite{ohad2} from the first author, also give strong support to the impossibility of learning parities. 
In particular the latter discusses whether specific assumptions on the ``niceness'' of the input distribution or the target function (for example based on notions of smoothness, non-degeneracy, incoherence or random choice of parameters), are sufficient to guarantee learnability using gradient-based methods, and evidences are provided that neither class of assumptions alone is sufficient.

\cite{ohad} gives further theoretical insights  and practical experiments on the failure of learning parities in such context. More specifically, it proves that the gradient of the loss function of a neural network will be essentially independent of the parity function used. This is achieved by a variant of our  Lemma \ref{new-pred} below with the requirement in \cite{ohad} that the loss function is 1-Lipschitz\footnote{The proofs are both simple but slightly different, in particular Lemma \ref{new-pred} does not make  regularity assumptions.}. This provides a strong intuition of why one should not be able to learn a random parity function using gradient descent or one of its variants, and this is backed up with theoretical and experimental evidence, bringing up the issue of the flat minima. However, it is not proved that one cannot learn parity using stochastic gradient descent or the like. The implication is far from trivial, as with the right algorithm, it is indeed possible to reconstruct the parity function from the gradients of the loss function on a list of random inputs. In fact, as mentioned above and further discussed in this paper, we believe that it is possible to learn a random parity function in polynomial time by using GD or SGD with a careful poly-time initialization of the net (that is of course also agnostic to the parity function). As we show further here, obtaining formal negative results requires more specific assumptions and elaborate proofs, already for GD and particularly for SGD.


\section{Results}\label{results}

\subsection{Definitions and models}

Before we can talk about the effectiveness of deep learning at learning parity functions, we have to establish some basic notions about deep learning. First of all, in this paper we will be using the following definition for a neural net.

\begin{definition}
A neural net is a pair of a function $f:\mathbb{R}\rightarrow \mathbb{R}$ and a weighted directed graph $G$ with some special vertices and the following properties. First of all, $G$ does not contain any cycle. Secondly, there exists $n>0$ such that $G$ has exactly $n+1$ vertices that have no edges ending at them, $v_0$, $v_1$,...,$v_n$. We will refer to $n$ as the input size, $v_0$ as the constant vertex and $v_1$, $v_2$,..., $v_n$ as the input vertices. Finally, there exists a vertex $v_{out}$ such that for any other vertex $v'$, there is a path from $v'$ to $v_{out}$ in $G$.  We also use denote by $w(G)$ the weights on the edges of $G$. 
\end{definition}

\begin{definition}
Given a neural net $(f,G)$ with input size $n$, and $x\in\mathbb{R}^n$, the evaluation of $(f,G)$ at $x$, written as $eval_{(f,G)}(x)$, is the scalar computed by means of the following procedure:
\begin{enumerate}
\item Define $y\in \mathbb{R}^{|G|}$ where $|G|$ is the number of vertices in $G$, set $y_{v_0}=1$, and set $y_{v_i}=x_i$ for each $i$.

\item Find an ordering $v'_1,...,v'_m$ of the vertices in $G$ other than the constant vertex and input vertices such that for all $j>i$, there is not an edge from $v'_j$ to $v'_i$.

\item For each $1\le i\le m$, set 
\[y_{v'_i}=f\left(\sum_{v: (v,v'_i)\in E(G)} w_{v,v'_i} y_v\right)\]

\item Return $y_{v_{out}}$.
\end{enumerate}
\end{definition}

Generally, we want to find a neural net that computes a certain function, or at least a good approximation of that function. A reasonable approach to doing that is to start with some neural network and then attempt to adjust its weights until it computes a reasonable approximation of the desired function. The trademark of deep learning is to do this by defining a loss function in terms of how much the network's outputs differ from the desired outputs, and then use a descent algorithm to try to adjust the weights. More formally, if our loss function is $L$, the function we are trying to learn is $h$, and our net is $(f,G)$, then the net's loss at a given input $x$ is $L(h(x)-eval_{(f,G)}(x))$ (or more generally $L(h(x),eval_{(f,G)}(x))$). Given a probability distribution for the function's inputs, we also define the net's expected loss as $\E[L(h(X)-eval_{(f,G)}(X))]$. We now discuss three common descent algorithms.

The gradient descent (GD) algorithm does the following. Its input includes the initial neural net, a learning rate, and the number of time steps the algorithm runs for. At each time step, the algorithm computes the derivative of the net's expected loss with respect to each of its edge weights, and then decreases each edge weight by the derivative with respect to that weight times the learning rate. After the final time step, it returns the current neural net.

One problem with GD is that it requires computing an expectation over every possible input, which is generally impractical. One possible fix to that is to use stochastic gradient descent (SGD) instead of gradient descent. The input to SGD includes the initial neural net, a learning rate, and the number of time steps the algorithm runs for. However, instead of computing an expectation over all possible inputs in each time step, SGD randomly selects a single input in each time step, computes the derivative of the net's loss at that input with respect to each edge weight, and decreases every edge weight by the corresponding derivative times the learning rate. Note that the stochasticity in SGD is sometimes also claimed to help with the generalization error of deep learning; with the insight that it helps with stability, implicit regularization or bad critical points \cite{faster,rethink,bad_min,bad_min2}.

A third option is to use coordinate (or block-coordinate) descent (CD) instead of gradient descent. This works as SGD, except that in each time step CD only updates a small number of edge weights, with all other weights remaining fixed. There are multiple options for deciding which edge weights to change, some of which would base the decision on the chosen input, but our result involving CD will be generic and not depend on the details of how the edges are chosen.

It is also possible to use a noisy version of any of the algorithm mentioned above. This would be the same as the noise-free version, except that in each time step, the algorithm independently draws a noise term for each edge from some probability distribution and adds it to that edge's weight. Adding noise can help  avoid getting stuck in local minima or regions where the derivatives are small \cite{perturbed_sgd}, however it can also drown out information provided by the gradient, and some learning algorithms are extremely sensitive to noise.

Finally, each of these algorithms needs to start with an initial set of weights before initiating the descent.
A priori, the initialization could be done according to any rule, albeit with manageable complexity since we will focus in this paper on the efficiency of algorithms. However, in practice, SGD implementations in deep learning typically start with random initializations, see \cite{shaishai}, or variants such as \cite{imagenet2} that involve different types of probability  distributions for the initial weights. Note that random initializations have also been shown to help with escaping certain bad extremal points \cite{random_init}.

We want to answer the question of whether or not training a neural net with these algorithms is a universal method of learning, in the sense that it can learn anything that is reasonably learnable. We next define exactly what this means. 

\begin{definition}
Let $n>0$, $\epsilon>0$, $P_{\X}$ be a probability distribution on $\{0,1\}^n$, and $P_\F$ be a probability distribution on the set of functions from $\{0,1\}^n$ to $\{0,1\}$. Also, let $X_0, X_1,...$ be independently drawn from $P_{\X}$ and $F\sim P_\F$. An algorithm learns $(P_\F,P_{\X})$ with accuracy $1/2+\epsilon$ if the following holds. There exists $T>0$ such that if the algorithm is given the value of $(X_i, F(X_i))$ for each $i<T$ and it is given the value of $X_T$, it returns $Y_T$ such that $P[F(X_T)=Y_T]\ge 1/2+\epsilon$.
\end{definition}
In particular, we talk about  ``learning parities" in the case where $P_\F$ picks a parity function uniformly at  random and $P_\X$ is uniform on $\{+1,-1\}^n$, as defined in Section \ref{model}.

\begin{remark}
As we have defined them, neural nets generally give outputs in $\mathbb{R}$ rather than $\{0,1\}$. As such, when talking about whether training a neural net by some method learns a given function, we will implicitly be assuming that the output of the net on the final input is thresholded at some predefined value or the like. None of our results depend on exactly how we deal with this part.
\end{remark}



\begin{definition}
For each $n>0$, let\footnote{Note that these are formally sequences of distributions.} $P_{\X}$ be a probability distribution on $\{0,1\}^n$, and $P_{\F}$ be a probability distribution on the set of functions from $\{0,1\}^n$ to $\{0,1\}$. We say that $(P_\F,P_{\X})$ is efficiently learnable if there exists $\epsilon>0$, $N>0$, and an algorithm with running time polynomial in $n$ such that for all $n\ge N$, the algorithm learns $(P_{\F},P_{\X})$ with accuracy $1/2+\epsilon$.
\end{definition}

\subsection{Negative results}

In order to disprove the universality of learning of these algorithms, we need an efficiently learnable function that they fail to learn. In this paper, we will use a random parity function with input that is uniformly distributed in $\{0,1\}^n$. One can easily learn such a function by taking a linear sized sample of its values on random inputs, finding a basis of $\{0,1\}^n$ in the inputs sampled, and then using the fact that these parity functions are linear. However, as we will show, some of the algorithms listed above are unable to learn such a function. The fundamental problem is that any two different parity functions are uncorrelated, so no function is significantly correlated with a nonnegligible fraction of them. As a result, the neural nets will generally fail to even come close enough to computing the desired function for the gradient to provide useful feedback on how to improve it. We will formalize this idea by defining a quantity called cross-predictability and showing that it is exponentially small for random parity functions. Similar negative results would hold for other families of functions with comparably low cross-predictability. The results in question are the following:






\begin{theorem}\label{thm1'}
Let $\epsilon>0$, and $P_{\F}$ be a probability distribution over functions with a cross-predictability of $\mathrm{c_p}=o(1)$. For each $n > 0$, let $(f,g)$ be a neural net of polynomial size in $n$ such that each edge weight is recorded using $O(\log(n))$ bits of memory. Run stochastic gradient descent on $(f,g)$ with at most $\mathrm{c_p}^{-1/24}$ time steps and with $o(|\log(\mathrm{c_p})|/\log(n))$ edge weights updated per time step.  For all sufficiently large $n$, this algorithm fails at learning functions drawn from $P_{\F}$ with accuracy $1/2 + \epsilon$.
\end{theorem}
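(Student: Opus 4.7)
The plan is to show that the trajectory of the bounded-memory SGD procedure on labels generated by $F\sim P_{\F}$ is statistically close in total variation to its trajectory under a null model in which the labels are replaced by fresh independent $\pm1$ coins. Under the null model the final neural net $W_T$ is independent of $F$, so any hypothesis derived from it predicts $F(X_{T+1})$ with probability exactly $1/2$. If the two trajectories are $o(1)$-close in TV, the true-model accuracy is at most $1/2+o(1)<1/2+\epsilon$ for all large $n$, contradicting weak learning. This reduces the theorem to a per-step information bound combined with a union (chain-rule) bound over the $T\le \mathrm{c_p}^{-1/24}$ iterations.

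First I would count the number of possible per-step updates. Writing $k=o(|\log \mathrm{c_p}|/\log n)$ for the number of weights modified at a given step and recalling that each weight is stored in $O(\log n)$ bits, the update is specified by choosing which $k$ edges are touched (out of polynomially many) and the new quantized weights, so the update lies in a set of cardinality
\[
M \;\le\; \binom{|E|}{k}\cdot 2^{O(k\log n)} \;=\; 2^{O(k\log n)} \;=\; \mathrm{c_p}^{-o(1)}.
\]
This quantization is the only way that dependence on the sample $(X_t,F(X_t))$ enters the state, so all divergence between the two models flows through an $M$-valued random variable per step.

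The heart of the argument is the per-step divergence bound. Conditionally on the shared trajectory up to time $t-1$ and the fresh input $X_t$, the update $\Delta_t$ is a function $g_t(X_t,L_t,\xi_t)$ of the label $L_t$ and auxiliary randomness $\xi_t$; under the true model $L_t=F(X_t)$, under the null $L_t$ is independent uniform. I would bound, in expectation over $F\sim P_\F$, the chi-squared divergence between $\mathrm{Law}(\Delta_t\mid F)$ and the null $Q_t=\E_F\mathrm{Law}(\Delta_t\mid F)$. Writing $A_v=\{x:g_t(x,+1,\cdot)=v\}$ and expanding
\[
\sum_v \E_{F,F'}\!\bigl(\Pr[\Delta_t=v\mid F]-\Pr[\Delta_t=v\mid F']\bigr)^2
\]
yields cross terms that reduce to $\E_{F,F'}(\E_X F(X)F'(X))^2=\mathrm{c_p}$. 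A Cauchy–Schwarz step relating $\ell_1$ to $\ell_2$ over $M$ bins then gives an expected KL per step of order $\mathrm{c_p}\cdot\log M=\mathrm{c_p}^{1-o(1)}$.

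Summing by the KL chain rule and applying Pinsker yields total TV at most $\sqrt{T\cdot\mathrm{c_p}^{1-o(1)}}=\mathrm{c_p}^{(1-1/24)/2-o(1)}=o(1)$, finishing the reduction sketched above. The main obstacle is the per-step bound: an adversarially chosen quantizer $g_t$ with $\log M$ bits of entropy could in principle convey up to $\log M$ bits about a single sample, so one must argue that only an $O(\mathrm{c_p})$ fraction of those bits actually separates $F$ from the null on average. The technical care lies in (i) showing the chi-squared bound above holds uniformly in the choice of $g_t$ by reducing it to inner products $\langle F,F'\rangle_{P_{\X}}^{2}$ which average to $\mathrm{c_p}$, and (ii) handling the fact that in the true model the conditioning trajectory $W_{t-1}$ already carries accumulated dependence on $F$, so the chain rule must be applied with the null-model marginals as reference so that per-step contributions do not compound.
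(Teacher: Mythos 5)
Your reduction — show the weight trajectory under the true model is $o(1)$-close in total variation to its trajectory under a ``null'' model with independent coin labels, then observe that the null model trivially yields accuracy $1/2$ — is exactly the paper's strategy, and your counting of the per-step update alphabet $M=\mathrm{c_p}^{-o(1)}$ is also right. But there are two substantive problems in the core of the argument.

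First, the per-step information bound is overstated by a square root. When you expand $\sum_v \E_{F,F'}\bigl(\Pr[\Delta_t=v\mid F]-\Pr[\Delta_t=v\mid F']\bigr)^2$, the cross term $\E_{F,F'}\langle F,F'\rangle^2_{P_\X}=\mathrm{c_p}$ appears inside a Cauchy--Schwarz, not as the final answer: writing the deviation as $\langle A_g,B_F\rangle$ one gets $\E_F\langle A_g,B_F\rangle^2=\langle A_g^{\otimes 2},\E_F B_F^{\otimes 2}\rangle \le \|A_g\|_2^2\,\|\E_F B_F^{\otimes 2}\|_2 = \|A_g\|_2^2\,\sqrt{\mathrm{c_p}}$. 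The per-step $\ell_2^2$ bound the cross-predictability actually yields is $\E_F\|P_{W_t|W^{t-1},F}-P_{W_t|W^{t-1}}\|_2^2\le\sqrt{\mathrm{c_p}}$, not $\mathrm{c_p}$. The exponents still close (since $T\le\mathrm{c_p}^{-1/24}$ leaves lots of slack), so this alone would only need an arithmetic fix, but it matters for the next point.

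Second, and more seriously, the $\ell_2\Rightarrow$ KL step and the KL chain rule both break. An $\ell_2$ bound does not control KL when the reference conditional $Q(w_t\mid w^{t-1})$ can be tiny at points where $P$ has mass: nothing in the per-step bound rules out $P(w)\approx\mathrm{c_p}^{1/4}\gg Q(w)$, which sends $P(w)\log(P(w)/Q(w))$ off to infinity. You would need a lower bound on the null conditionals before KL is even finite. Moreover, the KL chain rule $D(P_{W^T|F}\|Q_{W^T}) = \sum_t \E_{W^{t-1}\sim P(\cdot\mid F)}\bigl[D(P_{W_t\mid W^{t-1},F}\|Q_{W_t\mid W^{t-1}})\bigr]$ takes the step-$t$ expectation over the \emph{true-model} marginal $P_{W^{t-1}\mid F}$, so by the time you average over $F$ you are averaging the per-step divergence against the posterior of $F$ given $W^{t-1}$, not the prior $P_\F$ under which your $\sqrt{\mathrm{c_p}}$ bound was derived. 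You flag this as issue (ii), but ``apply the chain rule with the null-model marginals as reference'' is not a step that exists; it is precisely the thing that has to be manufactured. The paper manufactures it by abandoning KL entirely: it works with the pointwise ratio $P(w^m\mid\rho_f)/P(w^m\mid\star)=\prod_t P(w_t\mid w^{t-1},\rho_f)/P(w_t\mid w^{t-1},\star)$, declares a trajectory $w^m$ \emph{typical} if every null conditional exceeds $\mathrm{c_p}^{1/8}$ (a high-probability event under the null, by union bound and $|\mathcal{W}|\le\mathrm{c_p}^{-1/24}$), and declares $(w^m,f)$ \emph{good} if every per-step ratio is within $\mathrm{c_p}^{1/12}$ of $1$. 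On good pairs the product telescopes to $1+O(\mathrm{c_p}^{1/24})$; the probability of bad pairs is bounded by Markov using the $\sqrt{\mathrm{c_p}}$ per-step $\ell_2$ bound, with the $f$-average taken under the \emph{prior} and the $w^m$-average under the \emph{null}, so no posterior ever enters. That two-part decomposition is the missing idea, and without it the chain-rule route stalls. Once you have the TV bound, your observation that the final net is useless under the null and hence useless under the true model is the same closing move the paper makes (phrased there by adjoining a ``did the last prediction succeed'' bit to the memory).
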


\begin{corollary}
Coordinate descent with a polynomial number of steps and precision fails at learning parities with non-trivial accuracy.   
\end{corollary}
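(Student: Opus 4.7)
The plan is to derive this corollary by instantiating Theorem~\ref{thm1'} with $P_\F$ equal to the uniform distribution over all $2^n$ parity functions and $P_\X$ uniform on $\{+1,-1\}^n$, and then observing that coordinate descent fits the hypothesis of the theorem.

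First I would compute the cross-predictability for parities. For independent $S, S' \subseteq [n]$ drawn uniformly, and $X$ uniform on $\{+1,-1\}^n$,
\[
\E_X\, p_S(X)\, p_{S'}(X) = \prod_{i \in S \triangle S'} \E X_i = \mathbb{1}[S = S'],
\]
so $\mathrm{Pred}(P_\X, P_\F) = \E_{S,S'}\,\mathbb{1}[S=S'] = 2^{-n}$. Hence $\mathrm{c_p} = 2^{-n}$, giving $\mathrm{c_p}^{-1/24} = 2^{n/24}$ and $|\log \mathrm{c_p}|/\log n = n/\log n$.

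Second I would check that coordinate descent, with polynomial precision and a polynomial number of steps, is a legal choice of algorithm in the sense of Theorem~\ref{thm1'}. By the definition in the paper, CD is SGD with the additional restriction that only a small (by default $O(1)$, and in any case certainly $o(n/\log n)$) number of edge weights are updated per time step, which is well within the bound $o(|\log \mathrm{c_p}|/\log n) = o(n/\log n)$. Polynomial precision on each weight amounts to $O(\log n)$ bits of memory per edge, as required. Any polynomial number of time steps $\mathrm{poly}(n)$ is for large $n$ much smaller than the permitted $2^{n/24}$. The neural net being trained is assumed poly-size as in the theorem.

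Third, applying Theorem~\ref{thm1'} with these parameters yields that, for every fixed $\epsilon > 0$ and all sufficiently large $n$, the coordinate descent procedure fails to learn a uniformly random parity function with accuracy $1/2 + \epsilon$; that is, it cannot achieve non-trivial accuracy. There is no real obstacle here beyond the bookkeeping of fitting CD into the template of Theorem~\ref{thm1'}: the substantive content (the failure) has already been absorbed into that theorem, and the corollary is essentially an instantiation. The only care needed is to read ``a small number of edge weights'' in the definition of CD as being compatible with the $o(n/\log n)$ bound, which it is under any reasonable reading.
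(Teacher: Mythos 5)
Your proposal is correct and takes exactly the route the paper intends: you compute $\mathrm{c_p} = 2^{-n}$ for random parities (which the paper also records in its examples section), observe that polynomially many time steps and polynomial precision per weight satisfy the hypotheses of Theorem~\ref{thm1'}, and note that coordinate descent falls within its class since it updates at most $o(n/\log n)$ weights per step. This matches the paper's own treatment, which states the corollary as a direct specialization of Theorem~\ref{thm1'} and spells out the parity case in the following remark.
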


\begin{remark}
Specializing previous theorem to the case of parities, one obtains the following. 
Let $\epsilon>0$. For each $n > 0$, let $(f,g)$ be a neural net of polynomial size in $n$ such that each edge weight is recorded using $O(\log(n))$ bits of memory. Run stochastic gradient descent on $(f,g)$ with at most $2^{n/24}$ time steps and with $o(n/\log(n))$ edge weights updated per time step.  For all sufficiently large $n$, this algorithm fails at learning parities with accuracy $1/2 + \epsilon$.

As discussed in Section \ref{related},
one could obtain the special case of Theorem \ref{thm1'} for parities using \cite{parity_conj} with the following argument. If bounded-memory SGD could learn a random parity function with nontrivial accuracy, then we could run it a large number of times, check to see which iterations learned it reasonably successfully, and combine the outputs in order to compute the parity function with an accuracy that exceeded that allowed by Corollary 4 in \cite{parity_conj}. However, in order to obtain a generalization of this argument to low cross-predictability functions, one would need to address the points made in Section \ref{related} regarding statistical dimension and cross-predictability.
\end{remark}

\begin{theorem}\label{thm2}
For each $n > 0$, let $(f,g)$ be a neural net of polynomial size in $n$. Run gradient descent on $(f,g)$ with less than $2^{n/10}$ time steps, a learning rate of at most $2^{n/10}$, Gaussian noise with variance at least $2^{-n/10}$ and overflow range of at most $2^{n/10}$. For all sufficiently large $n$, this algorithm fails at learning parities with accuracy $1/2 + 2^{-n/10}$.
\end{theorem}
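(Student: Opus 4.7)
\textbf{Proof plan for Theorem \ref{thm2}.} The strategy is to compare the noisy GD trajectory obtained when the target is a random parity $p_S$ to the trajectory obtained when the target is the constant-zero function, driven by the same noise realizations; I will show that the two resulting laws on trajectories are super-polynomially close in total variation, and deduce that the algorithm's thresholded output on a fresh test input $X_{T+1}$ is nearly independent of $S$. Fix a deterministic initialization $w^{(0)}$ and let $\mu_S$ denote the law of $(w^{(t)})_{t\le T}$ under target $p_S$ (with uniform $X\in\{\pm 1\}^n$ and square loss $\mathcal{L}_S(w)=\E_X[(\mathrm{eval}_w(X)-p_S(X))^2]$), and $\mu_0$ the corresponding trajectory under the null target identically equal to $0$.

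The key per-step identity is that
\[
\nabla\mathcal{L}_S(w)-\nabla\mathcal{L}_0(w) \;=\; -2\,\E_X\!\left[p_S(X)\,\nabla_{\!w}\mathrm{eval}_w(X)\right] \;=:\; -2\,\delta_S(w).
\]
Expanding each coordinate in the Walsh basis, $\partial_e\mathrm{eval}_w(X)=\sum_T \widehat{a}_{e,w}(T)\,p_T(X)$, and applying Parseval gives the cross-predictability bound
\[
\E_S\|\delta_S(w)\|_2^2 \;=\; \sum_e \E_S[\widehat{a}_{e,w}(S)^2] \;=\; 2^{-n}\sum_e \E_X\!\left[(\partial_e\mathrm{eval}_w(X))^2\right] \;\le\; 2^{-n}\,B,
\]
where $B$ is a uniform upper bound on $\sum_e \E_X[(\partial_e\mathrm{eval}_w(X))^2]$ along the trajectory. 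The overflow cap of $2^{n/10}$ on all weights, activations and intermediate backpropagation quantities, combined with the polynomial size of the net, guarantees $B\le 2^{O(n/10)}$.

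Conditional on $w^{(t)}$, the laws of $w^{(t+1)}$ under $\mu_S$ and $\mu_0$ are both Gaussians with the same covariance $\sigma^2 I$ and mean vectors differing by $\eta(\nabla\mathcal{L}_S-\nabla\mathcal{L}_0)(w^{(t)})=-2\eta\,\delta_S(w^{(t)})$. Hence the per-step $\mathrm{KL}$ divergence equals $2\eta^2\|\delta_S(w^{(t)})\|^2/\sigma^2$. Averaging over $S$, using $\eta\le 2^{n/10}$ and $\sigma^2\ge 2^{-n/10}$, the per-step contribution is at most $2\cdot 2^{n/5}\cdot 2^{n/10}\cdot 2^{-n}\cdot B \le 2^{-n/2+O(n/10)}$; summing over $T\le 2^{n/10}$ steps via the chain rule for $\mathrm{KL}$ gives $\E_S\,\mathrm{KL}(\mu_S\|\mu_0)\le 2^{-\Omega(n)}$, and Pinsker's inequality yields $\E_S\,\mathrm{TV}(\mu_S,\mu_0)\le 2^{-\Omega(n)}$. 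Under $\mu_0$ the trajectory is independent of $S$, so for any $X_{T+1}\notin\{\pm\mathbf{1}\}$ the value $p_S(X_{T+1})$ is uniform on $\{\pm 1\}$ conditional on the thresholded output $Y$, giving $\Pr_{\mu_0}[Y=p_S(X_{T+1})]=1/2+O(2^{-n})$. Combining with the TV bound, $\Pr_{\mu_S}[Y=p_S(X_{T+1})]\le 1/2+2^{-\Omega(n)}<1/2+2^{-n/10}$ for all large $n$.

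The main obstacle is controlling $B$ uniformly along the trajectory: the theorem's overflow range directly bounds only weights (and arguably activations), but the partial derivatives appearing in backpropagation are products of weights and activation derivatives along paths, and these must be shown to stay at most $2^{O(n/10)}$ in $L^2$ for the $2^{-n}$ gain from the parity cross-predictability to dominate the combined $\eta^2/\sigma^2\le 2^{3n/10}$ and $T\le 2^{n/10}$ factors. Once this backpropagation-style bookkeeping is in place, the rest of the argument is a clean application of Parseval on the Walsh basis, Gaussian KL, and Pinsker.
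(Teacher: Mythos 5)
The paper's proof is in the same spirit (Parseval on the Walsh basis $\Rightarrow$ the $S$-averaged gradient difference is exponentially small $\Rightarrow$ Gaussian noise drowns it $\Rightarrow$ TV between trajectory laws is tiny $\Rightarrow$ accuracy $\approx 1/2$), but the two arguments diverge in ways that matter, and there are two gaps in your proposal.

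First, your quantity $B$ is not the quantity the theorem's overflow range controls. You set
$B=\sup_w\sum_e\E_X\bigl[(\partial_e\,\mathrm{eval}_w(X))^2\bigr]$,
but the algorithm's overflow range caps the per-input derivative of the \emph{loss}, $\Psi_B\bigl(\partial_e L(\mathrm{eval}_w(x)-y)\bigr)$, not $\partial_e\,\mathrm{eval}_w(x)$. Your identity $\nabla\mathcal{L}_S-\nabla\mathcal{L}_0=-2\E_X[p_S(X)\nabla\,\mathrm{eval}_w(X)]$ is a clean factorization that holds for the uncapped square loss, but $\Psi_B$ does not commute with it: once you clip the integrand, the $p_S\cdot\nabla\,\mathrm{eval}$ structure is destroyed. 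The paper sidesteps this entirely: Lemma \ref{new-pred} is a Parseval bound for an \emph{arbitrary} function $f:\B^{n+1}\to\mathbb{R}$, namely $\sum_{s}(\E f(X,Y)-\E f(X,p_s(X)))^2\le \E f^2(X,Y)$, and it is applied directly to $f=\gamma\,\Psi_B\bigl(\partial_e L\bigr)$ coordinate-by-coordinate. This gives $\sum_s\|w^{(s)}-w^{(\star)}\|_2\le \gamma B\sqrt{m\,2^n}$ with $B$ the actual overflow range and no loss-specific factorization. It also means the paper's argument is loss-agnostic, whereas yours is locked into the square loss. Flagging the $B$-bookkeeping as ``the main obstacle'' is correct, but it is not merely bookkeeping; as stated the hypothesis does not give you your $B$.

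Second, the direction of the KL chain rule creates a coupling problem. You write
$\E_S\,\mathrm{KL}(\mu_S\|\mu_0)=\sum_t\E_S\,\E_{w^{(t)}\sim\mu_S}\bigl[2\eta^2\|\delta_S(w^{(t)})\|^2/\sigma^2\bigr]$,
and then invoke the Parseval bound $\E_S\|\delta_S(w)\|^2\le 2^{-n}B$, which holds for each \emph{fixed} $w$. But in the chain-rule expectation $w^{(t)}$ is drawn from $\mu_S$, so $w^{(t)}$ and $S$ are dependent, and you cannot push $\E_S$ inside. Either switch to $\mathrm{KL}(\mu_0\|\mu_S)$ (where the outer expectation is under $\mu_0$, independent of $S$, and Pinsker still applies), or do what the paper does: a triangle inequality $\|Q_\star^{(t)}-Q_s^{(t)}\|_1\le\|Q_\star^{(t)}-Q_{\star s}^{(t)}\|_1+\|Q_{\star s}^{(t)}-Q_s^{(t)}\|_1$, where the first term applies the one-step bound to a state drawn from the \emph{null} trajectory $Q_\star^{(t-1)}$ and the second is bounded by the previous step's TV via data processing. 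Both fixes are routine, but as written your accumulation step is not valid.

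Minor differences worth noting: you use the constant-zero target as the null model, the paper uses uniformly random labels $\star$ (the paper's choice is what makes Lemma \ref{new-pred} loss-agnostic); you use KL + Pinsker, the paper bounds TV between shifted Gaussians directly via the $\ell_2$ distance of means; both give the same exponents. Your final observation that $p_S(X_{T+1})$ is uniform conditional on the null-trajectory output for $X_{T+1}\neq\mathbf{1}$ is correct.
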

See Section \ref{related} for how the above compares to \cite{kearns}; in particular, an application of \cite{kearns} would not give the above exponents.

More generally, we have the following result that applies to low cross-predictability functions (and beyond some cases of large statistical dimension --- see Section \ref{related}).

\begin{theorem}\label{thm2'}
 Let $P_\X,P_{\F}$ be such that the output distribution is balanced,\footnote{Non-balanced cases can be handled by modifying definitions appropriately.} i.e., $\pp\{F(X)=0\}=\pp\{F(X)=1\}$ when $(X,F) \sim P_\X \times P_{\F}$, and let $\mathrm{c_p}:=\mathrm{Pred}( P_\X, P_{\F} )$.  
For each $n > 0$, let $(\phi,g)$ be a neural net of size $|E(g)|$. Run gradient descent on $(\phi,g)$ with at most $T$ time steps, a learning rate of at most $\gamma$, Gaussian noise with variance at least $\sigma^2$ and an overflow range for the derivatives of at most $B$.
If $\max(T,|E(g)|,1/\sigma,B, \gamma) = n^{O(1)}$ and $\mathrm{c_p}= n^{-\omega(1)}$, 
this algorithm fails at learning functions drawn from $P_{\F}$ with accuracy $1/2 + \Omega_n(1)$.
\end{theorem}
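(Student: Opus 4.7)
The strategy is a null-model argument. Consider a coupled null process that runs the same Gaussian noise but replaces the true gradient
\[
G(w,F)_e:=\E_X\bigl[\partial_{w_e} L(F(X),\mathrm{eval}_{\phi,w}(X))\bigr]
\]
by its $P_\F$-average $\bar G(w):=\E_{F'\sim P_\F} G(w,F')$, so that the null trajectory is independent of $F$. The aim is to show (i) the joint distributions of $(F,\hat F(X))$ under true and null are within $o(1)$ total variation, and (ii) the null accuracy is $1/2+o(1)$, which together force the true accuracy to also be $1/2+o(1)$ and contradict $1/2+\Omega_n(1)$.

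For (i), decompose the gradient using $F(X)\in\{-1,+1\}$: write $L'(F(X),\hat y)=a(\hat y)+F(X)b(\hat y)$ and obtain
\[
(G(w,F)-\bar G(w))_e=-\langle F-\bar F,\,h_{w,e}\rangle_{P_\X},\qquad h_{w,e}(X):=b(\mathrm{eval}_{\phi,w}(X))\,\partial_{w_e}\mathrm{eval}_{\phi,w}(X),
\]
with $\|h_{w,e}\|_\infty=O(B)$ by the overflow bound. The central cross-predictability lemma is that for any $f\in L^2(P_\X)$,
\[
\E_{F\sim P_\F}\langle f,F\rangle_{P_\X}^2\;=\;\E_{X,Y\sim P_\X}\bigl[f(X)f(Y)K(X,Y)\bigr]\;\le\;\|f\|_2^2\sqrt{\mathrm{c_p}},
\]
where $K(X,Y):=\E_F[F(X)F(Y)]$ and Cauchy--Schwarz combined with $\mathrm{c_p}=\E_{X,Y}[K(X,Y)^2]$ delivers the bound. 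Applied coordinatewise, this yields $\E_F\|G(w,F)-\bar G(w)\|^2\le|E(g)|\cdot O(B^2)\cdot\sqrt{\mathrm{c_p}}$ for every fixed $w$. Per step, since both transitions are Gaussians with the same covariance $\sigma^2 I$, the conditional KL is $\gamma^2\|G(w_{t-1},F)-\bar G(w_{t-1})\|^2/(2\sigma^2)$; chaining over $T$ steps and averaging over $F$ gives total KL at most $n^{O(1)}\sqrt{\mathrm{c_p}}=n^{-\omega(1)}$ under the hypotheses, and Pinsker plus data processing yield the TV claim for $(F,\hat F(X))$. For (ii), $\hat F$ is independent of $F$ under the null, so a second application of the lemma (thresholding $\hat F$ to $\{-1,+1\}$) gives
\[
\pp^{\mathrm{null}}(\hat F(X)=F(X))-\tfrac12\;=\;\tfrac12\E[\hat F(X)F(X)]\;\le\;\tfrac12\sqrt{\E_{\hat F}\E_F\langle\hat F,F\rangle_{P_\X}^2}\;\le\;\tfrac12\mathrm{c_p}^{1/4}=o(1),
\]
and combining with (i) closes the argument.

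The hard step is the KL chain, because at time $t\ge 1$ the distribution of $w_{t-1}$ under the true process is coupled with $F$, whereas the variance bound above is pointwise in $w$ but averaged over the prior $F\sim P_\F$. My preferred fix is a $\chi^2$ calculation on the $w_{0:T}$-marginals: Girsanov-style algebra for Gaussian transitions reduces the cross-term to $\E_{F,F'\sim P_\F}\E_{w\sim\mathrm{null}}\exp\!\bigl(\tfrac{\gamma^2}{\sigma^2}\sum_t\langle\Delta_F(w_{t-1}),\Delta_{F'}(w_{t-1})\rangle\bigr)$ with $\Delta_F:=G_F-\bar G$, where the linear cancellation $\E_{F,F'}[\Delta_F\!\cdot\!\Delta_{F'}]=0$ (by i.i.d.-ness and centering) kicks in and the second-order contribution is controlled by the cross-predictability lemma. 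Alternatively, one can bootstrap: bound the per-step KL using $\E_{(F,w_{t-1})\sim\mathrm{true}}[Z]\le\E_{P_\F\times\nu_{t-1}}[Z]+2\|Z\|_\infty\cdot d_{\mathrm{TV}}((F,w_{t-1})_{\mathrm{true}},P_\F\times\nu_{t-1})$ and control the latter TV inductively through $I(F;w_{0:t-1})$. Handling general losses via the $\pm 1$ decomposition of $L'$ and tracking polynomial factors in $B,\gamma,T,\sigma^{-1},|E(g)|$ is then routine but essential bookkeeping.
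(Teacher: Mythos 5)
Your overall strategy (couple the true process to a null process, control indistinguishability per step via cross-predictability, then chain) is the same in spirit as the paper's, and your cross-predictability lemma $\E_F\langle f,F\rangle_{P_\X}^2\le\|f\|_2^2\sqrt{\mathrm{c_p}}$ is precisely the inequality the paper establishes and uses. Your step (ii) is also sound. But the gap you flag in the chaining is genuine and your proposed fixes do not close it, whereas the paper avoids the problem entirely by a different coupling.

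Two specific points. First, your null model replaces the gradient $G(w,F)$ by its $P_\F$-average $\bar G(w)$; the paper instead uses the null model in which the \emph{labels} $Y$ are drawn i.i.d.\ uniform independently of $X$. These coincide only under a pointwise balance condition on $P_\F$ (for parities they agree, in general they need not), and the uniform-label null has the advantage that its accuracy against an independent $F$ is exactly $1/2$, so the analogue of your step (ii) is trivial. Second, and more importantly, the paper's multi-step argument never runs into the coupling issue you describe. Writing $Q_H^{(t)}$ for the law of $W_H^{(t)}$ ($H\in\{F,\star\}$) and $Q_{H,h}^{(t-1)}$ for the law of one $\rho_h$-gradient step applied to $W_H^{(t-1)}$, the paper uses
\begin{align}
d\bigl(Q_F^{(t)},Q_\star^{(t)}\bigr)_{TV}
&\le d\bigl(Q_{F,F}^{(t-1)},Q_{\star,F}^{(t-1)}\bigr)_{TV}
   + d\bigl(Q_{\star,F}^{(t-1)},Q_{\star,\star}^{(t-1)}\bigr)_{TV}\nonumber\\
&\le d\bigl(Q_F^{(t-1)},Q_\star^{(t-1)}\bigr)_{TV}
   + d\bigl(Q_{\star,F}^{(t-1)},Q_{\star,\star}^{(t-1)}\bigr)_{TV},\nonumber
\end{align}
where the first inequality is the triangle inequality and the second is data processing (the same $\rho_F$-step map is applied to both trajectories). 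In the remaining term, \emph{both} distributions are obtained by advancing the null trajectory $W_\star^{(t-1)}$, whose law is independent of $F$; so the one-step Gaussian TV bound applies directly after averaging over the (F-independent) law of $W_\star^{(t-1)}$, and there is no need to control a KL between processes whose states are coupled with $F$. This is the key missing ingredient in your argument: you are comparing transition kernels evaluated along the \emph{true} trajectory, which is $F$-dependent, whereas the paper evaluates the perturbation along the \emph{null} trajectory and transfers the discrepancy at time $t-1$ wholesale via data processing. Your Girsanov-style $\chi^2$ fix would require controlling an exponential moment, and your inductive-TV fix essentially rederives the paper's telescoping inequality by a more circuitous route; neither is carried out, and the first in particular does not obviously close.

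A minor point: the paper states the one-step bound directly for Gaussians with common covariance as
$d(\mathcal{N}(\mu_1,\sigma^2 I),\mathcal{N}(\mu_2,\sigma^2 I))_{TV}\le\frac{1}{\sqrt{2\pi\sigma^2}}\|\mu_1-\mu_2\|_2$,
avoiding the Pinsker loss in your KL-to-TV step; this is cosmetic but does affect the exponents in the corresponding parity statement (Theorem~\ref{thm2}).
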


\begin{corollary}
The polynomial deep learning system of previous corollary can weakly learn a random degree-$k$ monomial if and only if $k=O_n(1)$.
\end{corollary}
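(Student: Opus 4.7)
The plan is to apply Theorem \ref{thm2'} for the failure direction and give an explicit polynomial-size construction for the success direction, with the cross-predictability of the random degree-$k$ monomial distribution serving as the link. First I would compute it: for $S,S'$ i.i.d.\ uniform $k$-subsets of $[n]$ and $X$ uniform on $\{+1,-1\}^n$,
\[
\E_X[p_S(X)\,p_{S'}(X)] = \E_X\!\left[\prod_{i\in S\triangle S'} x_i\right] = \1\{S=S'\},
\]
so $\mathrm{Pred}(P_\X,P_\F) = \pp[S=S'] = \binom{n}{k}^{-1}$. Moreover, for every nonempty $S$ the variable $p_S(X)$ is uniform on $\{+1,-1\}$, so the output distribution is balanced, as required by Theorem \ref{thm2'}.

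For the ``only if'' direction, assume $k=\omega(1)$. Then $\binom{n}{k}\ge (n/k)^k$ grows super-polynomially in $n$, so $\mathrm{Pred}=n^{-\omega(1)}$. All the remaining parameters of Theorem \ref{thm2'} ($T$, $|E(g)|$, $1/\sigma$, $B$, $\gamma$) are polynomial in $n$ by hypothesis, so the theorem applies and weak learning fails.

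For the ``if'' direction, assume $k=O(1)$ and reduce the learning problem to convex quadratic regression on polynomially many features. Enumerate the $\binom{n}{k}=O(n^k)$ subsets $S\subseteq[n]$ of size $k$. For each $S$, place a single hidden neuron that computes $p_S(X)$ by a fixed linear combination of the inputs followed by a cosine activation, using the identity $\prod_{i\in S} x_i = \cos\!\bigl(\tfrac{\pi}{2}(k-\sum_{i\in S} x_i)\bigr)$ valid on $\{+1,-1\}^n$. The output neuron forms a learned linear combination $\hat F(x)=\sum_S w_S\, p_S(x)$ with the $\{w_S\}$ as the only trainable parameters. Under squared loss, orthonormality of the $p_S$'s makes the expected loss equal to the strictly convex quadratic $1-2 w_{S^\star}+\sum_S w_S^2$, whose unique minimum is $w_{S^\star}=1$ and $w_S=0$ for $S\neq S^\star$. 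Since $k=O(1)$, the whole net has polynomial size, so running noisy GD within the parameter regime of Theorem \ref{thm2'} amounts to noisy GD on a polynomial-dimensional, well-conditioned convex quadratic.

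The step I expect to be most delicate is verifying that the noisy, bounded-precision GD of Theorem \ref{thm2'} actually drives $w_{S^\star}$ to within $\Omega(1)$ of $1$ in polynomially many steps, rather than being drowned out by the injected Gaussian noise of variance $\sigma^2=n^{-O(1)}$. Since the deterministic gradient component along $w_{S^\star}$ is $\Omega(1)$ while the per-step noise is inverse polynomial, this is a routine convergence argument for noisy GD on a strongly convex quadratic of polynomial condition number; a suitable choice of learning rate $\gamma$ (within the allowed range) yields accuracy $1/2+\Omega(1)$ on a fresh sample, completing the converse.
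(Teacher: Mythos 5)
Your proposal matches the paper's argument exactly: you compute the cross-predictability $\binom{n}{k}^{-1}$ of random degree-$k$ monomials, invoke Theorem \ref{thm2'} for the failure direction when $k=\omega(1)$, and for $k=O(1)$ construct the same polynomial-size net --- all $\binom{n}{k}$ monomials fed through a cosine hidden layer --- that the paper sketches in the ``Succeeding with large cross-predictability'' subsection. The extra detail you provide (the explicit $\cos$ identity on the hypercube, the strictly convex quadratic over the output weights, and the noisy-GD convergence caveat) simply fills in what the paper leaves as ``not hard to show.''
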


\begin{remark}
An overflow range of $B$ means that any value (e.g., derivatives of the loss function for a certain input) potentially exceeding $B$ (or $-B$) is kept at $B$ (or $-B$).
\end{remark}

\begin{remark}
We could alternately have defined the algorithm such that if there is any input for which one of the derivatives is larger than the overflow range, we give up and return random predictions. In this case, the result above would still hold.

As a third option, we could define $\epsilon$ to be the probability that there is a time step in which the derivative of the loss function with respect to some edge weight is greater than the overflow range for some input. In this case, given that no such overflow occurs, the algorithm would fail to learn parities with accuracy $1/2+2^{-n/10}/(1-\epsilon)$.
\end{remark}


\begin{remark}
Note that having GD run with a little noise is not equivalent to having noisy labels for which learning parities can be hard irrespective of the algorithm used \cite{parity_blum,regev}. The amount of noise needed for GD in the above theorem is exponentially small, and if such amount of noise were added to the sample labels themselves, then the noise would essentially be ineffective (e.g., Gaussian elimination would still work with rounding, or if the noise were Boolean with such variance, no flip would take place with high probability). The  failure is thus due to the nature of the algorithm.
\end{remark}

In the case of full gradient descent, the gradients of the losses with respect to different inputs mostly cancel out, so an exponentially small amount of noise is enough to drown out whatever is left. With stochastic gradient descent, that does not happen, and we have the following instead.

\begin{definition}
Let $(f,g)$ be a NN, and recall that $w(g)$ denotes the set of weights on the edges of $g$. Define the $\tau$-neighborhood of $(f,g)$ as 
\begin{align}
N_{\tau}(f,g)=\{(f,g'): E(g')=E(g),  |w_{u,v}(g)-w_{u,v}(g')| \le \tau , \forall (u,v) \in E(g) \}.    
\end{align}
\end{definition}

\begin{theorem}\label{thm3}
For each $n>0$, let $(f,g)$ be a neural net with size $m$ polynomial in $n$, and let $B,\gamma,T>0$ such that $B$, $1/\gamma$, and $T$ are polynomial in $n$. There exist $\sigma=O(m^2\gamma^2 B^2/n^2)$ and $\sigma'=O(m^3 \gamma^3 B^3/n^2)$ such that the following holds. Perturb the weight of every edge in the net by a Gaussian distribution of variance $\sigma$ and then train it with a noisy stochastic gradient descent algorithm with learning rate $\gamma$, $T$ time steps, and Gaussian noise with variance $\sigma'$. Also, let $p$ be the probability that at some point in the algorithm, there is a neural net $(f,g')$ in $N_{\tau}(f,g)$, $\tau=O(m^2\gamma B/n)$, such that at least one of the first three derivatives of the loss function on the current sample with respect to some edge weight(s) of $(f,g')$ has absolute value greater than $B$. Then this algorithm fails to learn parities with an accuracy greater than $1/2+2p+O(Tm^4B^2\gamma^2/n)$.
\end{theorem}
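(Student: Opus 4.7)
The plan is to extend the null-model coupling strategy used in Theorem \ref{thm2'} to the stochastic (per-sample) setting. I would couple the noisy SGD trajectory on labels $p_S$ with the trajectory obtained on an independent null label model (for instance, labels drawn from an independent random parity $p_{S'}$, or fresh uniform random bits). Under the null model the final weights are independent of $S$ and so cannot yield test-time accuracy above $1/2$, hence any learning advantage of the true-parity run is bounded above by the total variation distance between the two trajectory distributions, plus a union-bound contribution from any bad events. The cross-predictability of random parities is $2^{-n}$, so one expects the two trajectories to be genuinely close in TV if the SGD noise is appropriately calibrated.

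On the good event $\mathcal{G}$ that throughout the $T$ time steps every first-, second-, and third-order derivative of the per-sample loss with respect to the edge weights stays bounded by $B$ on the entire $\tau$-neighborhood of the currently visited net, the conditional law of $w_{t+1}$ given the history and the current sample is an isotropic Gaussian of variance $(\sigma')^2$ centered at $w_t-\gamma\nabla_w L$. The chain rule for KL divergence therefore bounds the trajectory KL between the two label models by $\gamma^2/(2(\sigma')^2)$ times the sum over $t$ of the expected squared norm of the per-sample gradient difference between the two models. Failure of $\mathcal{G}$ on either of the two coupled sides costs at most $2p$ in TV by a union bound, which produces the $2p$ term in the final inequality; everything else reduces to controlling the per-step expected squared gradient difference.

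The crux is that per-step control, which I would prove by combining the initial Gaussian perturbation of variance $\sigma$ with the cross-predictability of parities. The perturbation acts as a smoother: it allows a Taylor expansion of the per-sample gradient in the perturbation variables, with odd-order moments killed by Gaussian symmetry, a zeroth-order label-dependent term killed in expectation over $S$ by the cross-predictability of $p_S$ against the perturbation-averaged gradient, and even-order and cubic-remainder contributions controlled by the $B$-bound on the third derivatives valid on $\mathcal{G}$. This yields a per-step expected squared gradient difference polynomial in $m$, $B$, and $\sigma$; plugging in the stated choices $\sigma=O(m^2\gamma^2 B^2/n^2)$ and $\sigma'=O(m^3\gamma^3 B^3/n^2)$, summing the per-step KL over the $T$ steps, and applying Pinsker's inequality produces the remaining $O(Tm^4 B^2 \gamma^2/n)$ contribution to TV.

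The main obstacle is the joint tuning of the two noise parameters and the coupled propagation of $\mathcal{G}$ through time. The initial perturbation $\sigma$ must be large enough that the Gaussian-smoothing and cross-predictability cancellations are effective, yet small enough that the perturbed net still satisfies the same $B$-bound on third derivatives used to define $\mathcal{G}$ for the unperturbed net. The SGD noise scale $\sigma'$ must be small enough that the per-step weight drift stays within the Taylor $\tau$-neighborhood, and simultaneously large enough that its appearance as $(\sigma')^2$ in the KL denominator dominates the residual per-step gradient-difference variance. The radius $\tau=O(m^2\gamma B/n)$ is calibrated precisely so that a single update of size $O(\gamma B)$ plus Gaussian noise of scale $\sigma'$ lands inside the neighborhood on which the third-derivative control is valid, so that the good event $\mathcal{G}$ can be propagated inductively along the whole trajectory; formalizing this inductive coupling while keeping all the parameters in a consistent polynomial range is the technically delicate piece.
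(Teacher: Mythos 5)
Your high-level intuition — that the initial Gaussian perturbation is what washes out the parity dependence, and that the trajectory-level comparison reduces to a per-step control — is aligned with the paper. But the central mechanism you propose, the KL chain rule applied to the conditional law of $w_{t+1}$ given the history (or given the history and the sample), does not work. Conditioned on $w_t$ alone, the law of $w_{t+1}$ under the parity model is a $2^n$-component Gaussian mixture, and under the null model it is a $2^{n+1}$-component mixture that contains the parity mixture as a weight-$1/2$ submixture together with the anti-parity mixture; these two mixtures are $\Theta(1)$ apart in TV once the noise scale $\sigma'$ is small. If instead you condition on the sample as you write, then for half the samples $Y_t \ne p_S(X_t)$ and the per-sample gradient difference is $\Theta(\gamma B)$ per coordinate, so $\gamma^2 \,\E\|\Delta_t\|^2/\sigma' \gtrsim \gamma^2 m B^2 / (m^3\gamma^3 B^3/n^2) = n^2/(m^2\gamma B) \gtrsim n$, a diverging per-step KL. The paper even flags exactly this: the expected gradient is exponentially close across models, but the per-sample gradient is not, which is why ``drowning out the signal'' by per-step noise alone is hopeless at the stated $\sigma'$. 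Your proposal implicitly assumes the per-step noise dominates the per-sample signal, which is false here.

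The paper circumvents this by never comparing conditional-given-history laws. It instead tracks a distributional invariant on $w_t$ — the $(\sigma,\epsilon)$-blurring property, meaning the law of $w_t$ is within TV-distance $2\epsilon$ of $\widehat{P}\ast\mathcal{N}(0,\sigma I)$ for some $\widehat{P}$ — and compares the one-step pushforwards of an entire Gaussian ensemble $\mathcal{N}(\mu,\sigma I)$ under parity-labeled versus null-labeled updates (Theorem~\ref{gausParity}); after the Gaussian average the two pushforwards are within $2^{-n/2}\cdot\mathrm{poly}$ in TV. Propagation through time uses a $\sigma$-revision coupling: at each step the actual $w_t$ is coupled to an idealized variable drawn from the blurred form, and one shows that with high probability the revision is vacuous; the good event $\mathcal{G}$ (your $p$) is used exactly to guarantee the stability needed for the blurring to survive a nonlinear update. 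Crucially, the per-step noise $\sigma'$ is not sized to beat the per-sample gradient signal: it is sized to exactly compensate the $(1-mB_1)^2$ shrinkage of the Gaussian blur under the nonlinear map, i.e.\ $\sigma'\approx(2mB_1-m^2B_1^2)\sigma$ with $B_1=\gamma B$, which is why $\sigma'=O(m\gamma B\cdot\sigma)=O(m^3\gamma^3 B^3/n^2)$. The dominant term in the final $O(Tm^4B^2\gamma^2/n)$ bound is the blurring degradation $\epsilon$, a polynomial Taylor error, not the exponentially small parity-distinguishing error. To repair your argument you would essentially have to import the blurring/revision framework: a pointwise KL chain rule, even sharpened with Pinsker, cannot reproduce the result because there is no filtration in which the per-step conditional laws are close.
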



\begin{remark}
Normally, we would expect that if training a neural net by means of SGD works, then the net will improve at a rate proportional to the learning rate, as long as the learning rate is small enough. As such, we would expect that the number of time steps needed to learn a function would be inversely proportional to the learning rate. This theorem shows that if we set $T=c/\gamma$ for any constant $c$ and slowly decrease $\gamma$, then the accuracy will approach $1/2+2p$ or less. If we also let $B$ slowly increase, we would expect that $p$ will go to $0$, so the accuracy will go to $1/2$. It is also worth noting that as $\gamma$ decreases, the typical size of the noise terms will scale as $\gamma^{3/2}$. So, for sufficiently small values of $\gamma$, the noise terms that are added to edge weights will generally be much smaller than the signal terms. 
\end{remark}

\begin{remark}
The bound on the derivatives of the loss function is essentially a requirement that the behavior of the net be stable under small changes to the weights. It is necessary because otherwise one could effectively multiply the learning rate by an arbitrarily large factor simply by ensuring that the derivative is very large. Alternately, excessively large derivatives could cause the probability distribution of the edge weights to change in ways that disrupt our attempts to approximate this probability distribution using Gaussian distributions. For any given initial value of the neural net, and any given $M>0$, there must exists some $B$ such that as long as none of the edge weights become larger than $M$ this will always hold. However, that $B$ could be very large, especially if the net has many layers. 
\end{remark}

\begin{definition}\cite{mohri}
A class of function $\F$ is said to be PAC-learnable if there exists an algorithm $A$ and a polynomial function $poly(·,·,·,·)$ such that for any $\e > 0$ and $\delta > 0$, for all distributions $D$ on $\X$ and for any target function $f \in \F$, the following holds for any sample size $m \ge poly(1/\e, 1/\delta, n, size(f))$:
\begin{align}
\pp_{S\sim D^m}\{ \pp_{X \sim D}\{h_S(X)\ne f(X)\} \le \e \} \ge 1 - \delta.
\end{align}
If $A$ further runs in $poly(1/\e, 1/\delta, n, size(f))$, then $\F$ is said to be efficiently PAC-learnable. When such an algorithm $A$ exists, it is called a PAC-learning algorithm for $\F$.
\end{definition}

Therefore, picking $D$ to be uniform on $\B^n$, $\e=1/10$ and $\delta=1/10$,  Theorems \ref{thm1'}, \ref{thm2}, \ref{thm3} imply that a neural network trained by one of the specified descent algorithms on a polynomial number of samples will not compute the parity function in question with accuracy $1-\epsilon$ with probability $1-\delta$. Thus, we have the following.

\begin{corollary}
 Deep learning algorithms as described in Theorems \ref{thm1'}, \ref{thm2}, \ref{thm3}, fail at PAC-learning the class of parity functions $\F=\{ p_s:  s \subseteq [n] \}$ in poly$(n)$-time.
\end{corollary}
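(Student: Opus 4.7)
The plan is to derive the corollary as a contrapositive of the weak-learning failures established in Theorems \ref{thm1'}, \ref{thm2}, and \ref{thm3}. First I would verify that the uniform distribution on parities satisfies the hypotheses of each theorem. Under $X \sim \mathrm{Unif}(\{\pm 1\}^n)$, distinct parities are orthogonal, so $\E_X F(X)F'(X) = \1\{F = F'\}$ and the cross-predictability is $\pp\{F = F'\} = 2^{-n}$, which is $o(1)$ and indeed $n^{-\omega(1)}$. The labelling is (essentially) balanced, so Theorem \ref{thm2'} applies; Theorems \ref{thm1'} and \ref{thm2} apply directly in their stated polynomial parameter regimes since parities are the guiding example.

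Next I would translate PAC-learning into weak learning. Suppose for contradiction that one of the listed algorithms PAC-learns $\F$ in poly$(n)$ time. Taking $\e = \delta = 1/10$, there exists a polynomial-sample algorithm producing a hypothesis $h_S$ such that for every sufficiently large $n$ and every $f \in \F$,
\begin{align*}
\pp_S\{\pp_X\{h_S(X) \neq f(X)\} \le 1/10\} \ge 9/10.
\end{align*}
Union-bounding the two failure events and averaging over the uniformly random parity $f \sim P_\F$, the sample $S$, and a fresh test input $X$ yields
\begin{align*}
\pp\{h_S(X) = f(X)\} \ge (1 - \delta)(1 - \e) = 81/100 > 1/2 + 0.3.
\end{align*}
Thus any PAC-learner in the stated sense would, a fortiori, be a weak learner for parities with accuracy bounded away from $1/2$ by a constant.

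The final step is to observe that this weak-learning accuracy contradicts each failure statement. Theorem \ref{thm1'} rules out accuracy $1/2 + \epsilon$ for every fixed $\epsilon > 0$, which is directly violated. Theorem \ref{thm2} rules out accuracy $1/2 + 2^{-n/10}$, also clearly violated for large $n$. Theorem \ref{thm3} rules out accuracy above $1/2 + 2p + O(Tm^4 B^2\gamma^2/n)$, and for polynomial $B,T,m$ with appropriately chosen parameters (cf.\ the Remark following the theorem, taking $T = c/\gamma$, $\gamma$ decreasing, and $B$ grown slowly so that $p\to 0$) this upper bound lies strictly below $81/100$ for large $n$. The main obstacle is Theorem \ref{thm3}: one must argue that the hypothetical PAC-learner's polynomial parameter choices can be absorbed into a regime where both the additive error term and the overflow probability $p$ are small. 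This is handled by the scaling discussion in the Remark, which shows the error term vanishes while $p\to 0$ under mild growth of $B$. Combining the three contradictions yields the corollary.
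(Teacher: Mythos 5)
Your proof is correct and follows the paper's own (very brief) argument: fix the uniform input distribution, note that $(\e,\delta)$-PAC with $\e=\delta=1/10$ would force average accuracy $\ge(1-\delta)(1-\e)=0.81>1/2+\Omega_n(1)$ over a random parity, sample, and test point, and observe this contradicts the constant-gap weak-learning failures in Theorems~\ref{thm1'}, \ref{thm2}, \ref{thm3}. One trivial terminological slip: you say ``union-bounding'' but then write the (tighter, equally valid) product bound $(1-\delta)(1-\e)$ rather than $1-\delta-\e$; either suffices, and your added care about the parameter regime in Theorem~\ref{thm3} (via the subsequent Remark) is appropriate even though the paper glosses over it.
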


\subsection{Necessary limitations of negative results}\label{positive}

We show that deep learning fails at learning parities in polynomial time \text{if} the descent algorithm is either run with limited memory or initialized and run with enough randomness. However, if initialized carefully and run with enough memory and precision, we claim that it is in fact possible to learn parities: \\

\noindent
{\it  One can construct in polynomial time in $n$ a neural net $(f,g)$ that has polynomial size in $n$ such that for a learning rate $\gamma$ that is at most polynomial in $n$ and an integer $T$ that is at most polynomial in $n$, $(f,g)$ trained by SGD with learning rate $\gamma$ and $T$ time steps learns parities with accuracy $1-o(1)$.}\\
   
In fact, we claim that a more general result holds for any efficiently learnable distribution, and that deep learning can universally learn any such distribution (though the initialization will be unpractical) --- Section \ref{universality}. Full proofs will follow in subsequent versions of the paper.

\section{Other functions that are difficult for deep learning}\label{others}

This paper focuses on the difficulty of learning parities, with proofs extending to other function/input distributions having comparably low cross-predictability.  
Parities are not the most common type of functions used to generate real signals, but they are central to the construction of good codes (in particular the most important class of codes, i.e., linear codes, that rely heavily on parities). We mention now a few additional examples of functions that we believe would be also difficult to learn with deep learning. \\

\noindent
{\bf Arithmetic.} First of all, consider trying to teach a neural net arithmetic. More precisely, consider trying to teach it the following function. The function takes as input a list of $n$ numbers that are written in base $n$ and are $n$ digits long, combined with a number that is $n+1$ digits long and has all but one digit replaced by question marks, where the remaining digit is not the first. Then, it returns whether or not the sum of the first $n$ numbers matches the remaining digit of the final number. So, it would essentially take expressions like the following, and check whether there is a way to replace the question marks with digits such that the expression is true.
\begin{align*}
&120\\
+&112\\
+&121\\
=?&?0?
\end{align*}

Here, we can define a class of functions by defining a separate function for every possible ordering of the digits. If we select inputs randomly and map the outputs to $\mathbb{R}$ in such a way that the average correct output is $0$, then this class will have a low cross predictability. Obviously, we could still initialize a neural net to encode the function with the correct ordering of digits. However, if the net is initialized in a way that does not encode the digit's meanings, then deep learning will have difficulties learning this function comparable to its problems learning parity. Note that one can sort out which digit is which by taking enough samples where the expression is correct and the last digit of the sum is left, using them to derive linear equation in the digits $\pmod{n}$, and solving for the digits.

We believe that if the input contained the entire alleged sum, then deep learning with a random initialization would also be unable to learn to determine whether or not the sum was correct. However, in order to train it, one would have to give it correct expressions far more often than would arise if it was given random inputs drawn from a probability distribution that was independent of the digits' meanings. As such, our notion of cross predictability does not apply in this case, and the techniques we use in this paper do not work for the version where the entire alleged sum is provided. The techniques instead apply to the above version.\\

\noindent
{\bf Connectivity and community detection.} Another example of a problem that we believe deep learning would have trouble with consists of determining whether or not some graphs are connected. This could be difficult because it is a global property of the graph, and there is not necessarily any function of a small number of edges that is correlated with it. Of course, that depends on how the graphs are generated. In order to make it difficult, we define the following probability distribution for random graphs.

\begin{definition}
Given $n,m,r>0$, let $AER(n,m,r)$ be the probability distribution of $n$-vertex graphs generated by the following procedure. First of all, independently add an edge between each pair of vertices with probability $m/n$ (i.e., start with an Erd\H{o}s-R\'enyi random graph). Then, randomly select a cycle of length less than $r$ and delete one of its edges at random. Repeat this until there are no longer any cycles of length less than $r$.
\end{definition}

Now, we believe that deep learning with a random initialization will not be able to learn to distinguish a graph drawn from $AER(n,10\ln(n),\sqrt{\ln(n)})$ from a pair of graphs drawn from $AER(n/2,10\ln(n),\sqrt{\ln(n)})$, provided the vertices are randomly relabeled in the latter case. That is, deep learning will not distinguish between a patching of two such random graphs (on half of the vertices) versus a single such graph (on all vertices). Note that a simple depth-first search algorithm would learn the function in poly-time. More generally, we believe that deep learning would not solve community detection on such variants of random graph models\footnote{It would be interesting to investigate the approach of \cite{bruna_cd} on such models.} (with edges allowed between the clusters as in a stochastic block model with similar loop pruning), as connectivity v.s.\ disconnectivity is an extreme case of community detection.

The key issue is that no subgraph induced by fewer than $\sqrt{\ln(n)}$ vertices provides significant information on which of these cases apply. Generally, the function computed by a node in the net can be expressed as a linear combination of some expressions in small numbers of inputs and an expression that is independent of all small sets of inputs. The former cannot possibly be significantly correlated with the desired output, while the later will tend to be uncorrelated with any specified function with high probability. As such, we believe that the neural net would fail to have any nodes that were meaningfully correlated with the output, or any edges that would significantly alter its accuracy if their weights were changed. Thus, the net would have no clear way to improve.

\section{Proof techniques}  
Our main approach to showing the failure of an algorithm (e.g., SGD) using data from a test model (e.g, parities) with limited resources  (e.g., samples and memory) for a desired task (e.g., non-trivial accuracy of prediction), will be to establish an {\it indistinguishable to null condition (INC)}, namely, a condition on the resources that implies failure to statistically distinguish the trace of the algorithm on the test model from a null model, where the null model fails to provide the desired performance for trivial reasons. The INC is obtained by manipulating information measures, bounding the total variation distance of the two posterior measures between the test and null models. The failure of achieving the desired algorithmic performance on the test model is then a consequence of the INC, either by converse arguments -- if one could achieve the claimed  performance, one would be able to use the  performance gap to distinguish the null and test models  and thus contradict the INC -- or directly using the total variation distance between the two probability distributions to bound the difference in the probabilities that the nets drawn from those distributions compute the function correctly (and we know that it fails to do so on the null model).


With more details:
\begin{itemize}
\item Let $D_1$ be the  distribution of the data for the parity learning model, i.e., i.i.d.\ samples with labels from the parity model in dimension $n$;
\item Let $R=(R_1,R_2)$ be the resource in question, i.e., the number $R_1$ of edge weights of poly$(n)$ memory that are updated and the number of steps $R_2$ of the algorithm;
\item Let $A$ be the SGD (or coordinate descent) algorithm used with a constraint $C$ on the resource $R$; 
\item Let $T$ be the task, i.e, achieving an accuracy of $1/2 + \Omega_n(1)$ on a random input.
\end{itemize}

\noindent
Our program then runs as follows:
\begin{enumerate}
    \item Chose $D_0$ as the null distribution that generates i.i.d.\ pure noise labels, such that the task $T$ is obviously not achievable for $D_0$.
\item Find a INC on $R$, i.e., a constraint $C$ on $R$ such that the trace of the algorithm $A$ is indistinguishable under $D_1$ and $D_0$; to show this,
\begin{enumerate}
    \item show that the total variation distance between the posterior distribution of the trace of $A$ under $D_0$ and $D_1$ vanishes if the INC holds; to obtain this, 
    \item show that any $f$-mutual information between the algorithm's trace and the model hypotheses $D_0$ or $D_1$ (chosen equiprobably) vanishes.
\end{enumerate}
\item Conclude that the INC on $R$ prohibits the achievement of $T$ on the test model $D_0$, either by contradiction  as one could use $T$ to distinguish between $D_1$ and $D_0$ if only the latter fails at $T$ or using the fact that for any event $\mathrm{Success}$ and any random variables $Y(D_i)$ that depend on data drawn from $D_i$ (and represent for example the algorithms outputs), we have $\pp\{Y(D_1) \in \mathrm{Success} \} \le \pp\{Y(D_0) \in \mathrm{Success} \}+TV(D_0,D_1) = 1/2 + TV(D_0,D_1)$.
\end{enumerate}
Most of the work then lies in part 2(a)-(b), which consist in manipulating information measures to obtain the desired conclusion. In particular, the Chi-squared mutual information will be convenient for us, as its ``quadratic'' form will allow us to bring  the cross-predictability as an upper-bound, which is then easy to evaluate and is small for the parity model. This is carried out in Section \ref{sla} in the general context of so-called ``sequential learning algorithms'', and then applied to SGD with bounded memory (or coordinate descent) in Section \ref{results}. For Theorem \ref{thm2}, one needs also to take into account the fact that information can be carried in the  pattern of {\it which} weights can be updated, and these are taken into account with a proper SLA implementation, with Theorem \ref{thm2} concluding from a contradiction argument as discussed in step 3.\ above.

In the case of noisy GD (Theorems \ref{thm2} and \ref{thm2'}), the program is more direct from step 2, and runs with the following specifications. When computing the full gradient, the losses with respect to different inputs mostly cancel out, which makes the gradient updates reasonably  small, and a small amount of noise suffices to cover it. In this case, working\footnote{Similar results should hold for non-Gaussian distributions; the Gaussian case is more convenient.} with Gaussian noise allows us to bound the total variation distance in terms of the $\ell_2$ distance of the weight updates between the test vs.\ null models. 
We then to bound that
$\ell_2$ distance in terms of the gradient norm using classical orthogonality properties of the parity (Fourier-Walsh) basis. 

In the case of the failure of SGD under noisy initialization and updates (Theorem \ref{thm3}), we rely on a more sophisticated version of the above program. We use again a step used for GD that consists in showing that the average value of any function on samples generated by a random parity function will be approximately the same as the average value of the function on true random samples.\footnote{This gives also a variant of the result in \cite{ohad} applying with 1-Lipschitz loss function.} This is essentially a consequence of the low cross-predictability of parities, which can be proved more directly in this context. Most of the work consist then is using this to show that if we draw a set of weights in $\mathbb{R}^m$ from a sufficiently noisy probability distribution and then perturb it slightly in a manner dependent on a sample generated by a random parity function, the probability distribution of the result is essentially indistinguishable from what it would be if the samples were truly random. Then, we argue that if we do this repeatedly and add in some extra noise after each step, the probability distribution stays noisy enough that the previous result continues to apply, with the result that the final probability distribution when this is done using samples generated by a random parity function is similar to the final probability distribution using true random samples. After that, we use that result to show that the probability distribution of the weights in a neural net trained by noisy stochastic gradient descent on a random parity function is indistinguishable from the the probability distribution of the weights in a neural net trained by noisy SGD (NSGD) on random samples, which represent most of the work. Finally, we conclude that a neural net trained by NSGD on a random parity function will fail to learn the function (step 3).

\section{Cross-predictability and sequential learning}\label{sla}

\subsection{Cross-predictability}

We denote by $\X$ the domain of the data (e.g., Boolean vectors, matrices of pixels) and by $\Y$ the domain of the labels; for simplicity, we assume that $\Y=\{-1,+1\}$.
A hypothesis is a function $f: \X \to \Y$ that labels data points in $\X$ with elements in $\Y$. 
We define $\F:=\Y^\X$.

\begin{definition}
Let $P_\X$ be a probability measure on $\X$, and $P_{\F}$ be a probability measure on $\F$ (the set of functions from $\X$ to $\Y$). Define the cross-predictability of $P_{\F}$ with respect to $P_\X$  by 
\begin{align}
\mathrm{Pred}( P_\X, P_{\F} ) = \E_{F,F'} (\E_{X} F(X) F'(X))^2, \quad (X,F,F') \sim P_\X \times  P_{\F} \times P_{\F}.
\end{align}
\end{definition}
Note that 
\begin{align}
\E_{X} F(X) F(X') &= \pp_{X}\{F(X)=F'(X)\} -  \pp_{X}\{F(X) \neq F'(X)\}\\
&= 2\pp_{X}\{F(X)=F'(X)\} -  1
\end{align}
and we also have
\begin{align}
\mathrm{Pred}( P_\X, P_{\F} ) = \E_{X,X'} (\E_{F} F(X) F(X'))^2.
\end{align}
Therefore a low cross-predictability can be interpreted as having a low correlation between two sampled  functions on a sampled input, or, as a low correlation between two sampled input on a sampled function. This suggests that two samples - such as those of two consecutive steps of SGD - may not have common information about a typical function being learned. 

We discuss in Remark \ref{sq_disc} the analogy and difference between the cross-predictability and the statistical dimension. 
We next cover some examples. 

\begin{example} 
If $P_\X=\delta_x$, $x \in \X$, then $\mathrm{Pred}( P_\X, P_{\F} )=1$ no matter what $P_{\F}$ is. That is, if the world produces always the same data point, then any labelling function has maximal cross-predictability.
\end{example}
\begin{example} 
If $P_{\F}$ is the uniform probability measure on $\F$, then $\mathrm{Pred}( P_\X, P_{\F} )=\|P_\X \|_2^2$ no matter what $P_{\X}$ is. In fact, 
\begin{align}
\mathrm{Pred}( P_\X, P_{\F} ) &= \E_{F,F'} (\E_{X \sim P_\X} F(X) F'(X))^2\\
&=(1/|\F|)^2 \sum_{f,f'} \sum_{x,x'} f(x)f'(x) f(x')f'(x') P_\X(x)P_\X(x')\\
&=(1/|\F|)^2  \sum_{x,x'}  (\sum_{f} f(x)f(x'))^2 P_\X(x)P_\X(x')\\
&=(1/|\F|)^2  \sum_{x}  (\sum_{f} f^2(x))^2 P_\X^2(x)\\
&= \sum_{x} P_\X^2(x).
\end{align}
In particular, $\mathrm{Pred}( P_\X, P_{\F} )=1/|\X|$ if $P_{\F}$ and $P_\X$ are uniform. That is, if the labelling function is ``completely random,'' then the cross-predictability depends on how ``random'' the input distribution is, measured by the $L_2$ norm of $P_\X$, also called the collision entropy.
\end{example}
\begin{example} 
Let $\X=\B^n$, and for $s \in [n]$, define $f_s: \B^n \to \B$ by $f_S(x)=\prod_{i \in  S} x_i$. 
Let $P_{\F}=P_n$ be the uniform probability measure on $\{f_s\}_{s \in \B^n}$, and let $U_n$ be the uniform probability measure on $\B^n$. Then $\mathrm{Pred}(U_n, P_n )= 2^{-n}$. In fact, $\E_{S,T} f_S(X) f_T(X) = \1(S=T)$, thus $\mathrm{Pred}(U_n, P_n)= \pp_{S,T}\{S=T\}=2^{-n}$.
\end{example}
Note that uniformly random parity functions have the same cross-predictability as uniformly random generic functions, with respect to uniform inputs. We will crucially exploit this property to prove the forthcoming results. We obtain in fact generalizations of two our results to other function distributions having cross-predictability that scales super-polynomially.

\subsection{Learning from a bit}
We now consider the following setup:
\begin{align}
&(X,F) \sim P_\X \times P_\F  \label{r1} \\
&Y=F(X) \text{ (denote by $P_\Y$ the marginal of $Y$)}  \label{r2} \\
&W=g(X,Y) \text{ where $g: \X \times \Y \to \B$}  \label{r3} \\
&(\tX,\tY) \sim P_\X \times U_\Y \text{ (independent of $(X,F)$)}
\end{align}
That is, a random input $X$ and a random hypothesis $F$ are drawn from the working model, leading to an output label $Y$. 
We store a bit $W$ after observing the labelled pair $(X,Y)$. 
We are interested in estimating how much information can this bit contain about $F$, no matter how ``good'' the function $g$ is. 
We start by measuring the information using the variance of the MSE or  Chi-squared mutual information\footnote{The Chi-squared mutual information should normalize this expression with respect to the variance of $W$ for non equiprobable random variables.}, i.e., 
\begin{align}
I_2(W;F)=\Var \E (W|F) 
\end{align}
which gives a measure on how random $W$ is given $F$.
We provide below a bound in terms of the cross-predictability of $P_\F$ with respect to $P_\X$, and the marginal probability that $g$ takes value 1 on two independent inputs, which is a ``inherent bias'' of $g$.

The Chi-squared is convenient to analyze and is stronger than the classical mutual information, which is itself stronger than the squared total-variation distance by Pinsker's inequality. More precisely\footnote{See for example \cite{boix} for details on these inequalities.}, for an equiprobable $W$, 
\begin{align}
TV(W;F) \lesssim I(W;F)^{1/2} \le I_2(W;F)^{1/2}. 
\end{align}
Here we will need to obtain such inequalities for arbitrary marginal distributions of $W$ and in a self-contain series of lemmas. We then bound the latter with the cross-predictability which allows us to bound the error probability of the hypothesis test deciding whether $W$ is dependent on $F$ or not, which we later use in a more general framework where $W$ relates to the updated weights of the descent algorithm. We will next derive the bounds that are  needed.\footnote{These bounds could be slightly tightened but are largely sufficient for our purpose.}

\begin{lemma}\label{lemma_pred}
\begin{align}
 &\Var \E (g(X,Y)|F) \le 
 \E_F (\pp_X(g(X,F(X))=1)-\pp_{\tX,\tY}(g(\tX,\tY)=1))^2 \\ 
 &\le
 \min_{i \in \{0,1\}}   \pp\{ g(\tX,\tY) =i\} \sqrt{\mathrm{Pred}( P_\X, P_{\F} )}
\end{align}
\end{lemma}

\begin{proof}
Note that
\begin{align}
\Var \E (W|F) &=  \E_F (   \pp\{W=1|F\} - \pp\{W=1\}  )^2\\
&\le  \E_F (   \pp\{W=1|F\} -c  )^2
\end{align}
for any $c  \in \mR$.
Moreover,
\begin{align}
\pp\{W=1|F=f \} &= \sum_{x} \pp\{W=1|F=f, X=x \} P_\X(x)\\
&=\sum_{x,y} \pp\{W=1|X=x, Y=y \} P_\X(x) \1(f(x)=y) .
\end{align}
Pick now
\begin{align}
c:=\sum_{x,y} \pp\{W=1|X=x,Y=y \} P_\X(x) U_{\Y}(y) \label{py}
\end{align}

Therefore, 
\begin{align}
\pp\{W=1|F=f \} - c &=\sum_{x,y} A_g(x,y) B_f(x,y) =: \langle A_g,B_f \rangle
\end{align}
where 
\begin{align}
A_g(x,y):&=\pp\{W=1|X=x, Y=y \} \sqrt{P_{\X}(x)U_{\Y}(y)} \\ &= \pp\{g(X,Y)=1|X=x, Y=y \} \sqrt{P_{\X}(x)U_{\Y}(y)} \\
B_f(x,y):&=\frac{\1(f(x)=y) - U_{\Y}(y)}{U_{\Y}(y)} \sqrt{P_{\X}(x)U_{\Y}(y)}.
\end{align}
We have
\begin{align}
 \langle A_g,B_F \rangle^2 =  \langle A_g,B_F \rangle \langle B_F,A_g \rangle =   \langle A_g^{\otimes 2}, B_F^{\otimes 2} \rangle
\end{align}
and therefore 
\begin{align}
\E_F \langle A_g,B_F \rangle^2  &=   \langle A_g^{\otimes 2}, \E_F  B_F^{\otimes 2} \rangle \\
&\le \| A_g^{\otimes 2} \|_2 \| \E_F  B_F^{\otimes 2} \|_2.
\end{align}
Moreover, 
\begin{align}
 \| A_g^{\otimes 2} \|_2 &=  \| A_g \|_2^2 \\
&= \sum_{x,y} \pp\{W=1|X=x, Y=y \}^2 P_{\X}(x)U_{\Y}(y)\\
&\le \sum_{x,y} \pp\{W=1|X=x, Y=y \}  P_{\X}(x)U_{\Y}(y)\\
&= \pp\{ W(\tX,\tY)=1 \} 
\end{align}
and
\begin{align}
\| \E_F  B_F^{\otimes 2} \|_2 &= \left(\sum_{x,y,x',y'}  (\sum_{f} B_f(x,y) B_f(x',y') P_\F(f))^2 \right)^{1/2}\\
&=  \left( \E_{F,F'}  \langle B_F, B_{F'} \rangle^2  \right)^{1/2}.
\end{align}
Moreover, 
\begin{align}
\langle B_f, B_{f'} \rangle &= \sum_{x,y}  \frac{\1(f(x)=y) - U_{\Y}(y)}{U_{\Y}(y)} \frac{\1(f'(x)=y) -U_{\Y}(y)}{U_{\Y}(y)}  P_\X(x) U_{\Y}(y) \\
& =(1/2) \sum_{x,y}  (2 \1(f(x)=y) - 1)  (2 \1(f'(x)=y) - 1)  P_\X(x) \\
&=\E_X f(X)f'(X) .
\end{align}
Therefore, 
\begin{align}
 \Var \pp \{W=1| F\} \le \pp\{ \tW=1 \} \sqrt{\mathrm{Pred}( P_\X, P_{\F} )}.
\end{align}
The same expansion holds with $ \Var \pp \{W=1| F\}=\Var \pp \{W=0| F\} \le \pp\{ \tW=0 \} \sqrt{\mathrm{Pred}( P_\X, P_{\F} )}$.

\end{proof}

Consider now the new setup where $g$ is valued in $[m]$ instead of $\{0,1\}$:
\begin{align}
&(X,F) \sim P_\X \times P_\F \label{s1} \\
&Y=F(X)  \label{s2} \\
&W=g(X,Y) \text{ where $g: \B^n \times \Y \to [m]$}. \label{s3}
\end{align}
We have the following theorem. 
\begin{theorem} \label{corol_unif2}
\[E_F\|P_{W|F}-P_W\|_2^2\le\sqrt{\mathrm{Pred}( P_\X, P_{\F} )}\]
\end{theorem}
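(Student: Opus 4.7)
The plan is to reduce the $[m]$-valued case to $m$ applications of Lemma \ref{lemma_pred} (the binary case) by thresholding on each outcome, and then to exploit the fact that the marginals $P_W$ sum to one to collapse the $m$-fold sum into a single factor of $\sqrt{\mathrm{Pred}(P_\X,P_\F)}$.

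Concretely, I would start by observing that the squared $L_2$ distance decomposes as
\begin{align*}
\|P_{W|F}-P_W\|_2^2 = \sum_{i=1}^m \bigl(\pp\{W=i\mid F\}-\pp\{W=i\}\bigr)^2.
\end{align*}
Taking expectation over $F$ and using $\pp\{W=i\}=\E_F \pp\{W=i\mid F\}$, the $i$-th summand becomes exactly the variance of the conditional probability:
\begin{align*}
\E_F\|P_{W|F}-P_W\|_2^2 = \sum_{i=1}^m \Var_F\!\bigl(\pp\{W=i\mid F\}\bigr).
\end{align*}

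Next, for each $i\in[m]$ I would introduce the binary function $g_i(x,y):=\1\{g(x,y)=i\}$ and note that $\pp\{W=i\mid F\}=\pp\{g_i(X,Y)=1\mid F\}$. This places us in precisely the setting of Lemma \ref{lemma_pred} applied to $g_i$, and the lemma gives
\begin{align*}
\Var_F\!\bigl(\pp\{g_i(X,Y)=1\mid F\}\bigr) \le \pp\{g_i(\tX,\tY)=1\}\sqrt{\mathrm{Pred}(P_\X,P_\F)} = \pp\{\tW=i\}\sqrt{\mathrm{Pred}(P_\X,P_\F)},
\end{align*}
where $\tW=g(\tX,\tY)$ with $(\tX,\tY)\sim P_\X\times U_\Y$. (Note that we are free to drop the $\min$ in Lemma \ref{lemma_pred} and keep the $\pp\{\tW=i\}$ term, since it is the one that sums nicely.)

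Summing over $i\in[m]$ and using $\sum_{i=1}^m \pp\{\tW=i\}=1$ yields
\begin{align*}
\E_F\|P_{W|F}-P_W\|_2^2 \le \sqrt{\mathrm{Pred}(P_\X,P_\F)}\sum_{i=1}^m \pp\{\tW=i\} = \sqrt{\mathrm{Pred}(P_\X,P_\F)},
\end{align*}
which is the claimed bound. There is no real obstacle here: the work was already done in Lemma \ref{lemma_pred}, and the only thing to check is that the level-set decomposition $W=i \Leftrightarrow g_i(X,Y)=1$ lets us invoke that lemma coordinatewise and that the marginal probabilities $\pp\{\tW=i\}$ telescope into the normalization $1$.
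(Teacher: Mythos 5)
Your proposal is correct and follows essentially the same route as the paper: decompose the squared $L_2$ distance over the $m$ outcomes, recognize each term as $\Var_F\pp\{W=i\mid F\}$, apply Lemma \ref{lemma_pred} coordinatewise (taking the $\pp\{\tW=i\}$ branch of the $\min$), and sum using $\sum_i\pp\{\tW=i\}=1$. The only difference is that you explicitly name the indicator $g_i=\1\{g(\cdot)=i\}$ to justify invoking the binary-valued lemma, which the paper leaves implicit.
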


\begin{proof}
From Lemma \ref{lemma_pred}, for any $i \in [m]$, 
\begin{align}
\Var \pp \{W=i|F\} \le \pp\{ g(\tX,\tY)=i\} \sqrt{\mathrm{Pred}( P_\X, P_{\F} )},
\end{align}
therefore, 
\begin{align}
E_F\|P_{W|F}-P_W\|_2^2  &= \sum_{i \in [m]} \sum_{f \in F} \pp\{F=f\} (\pp\{W=i|F=f\} -\pp\{W=i\})^2 \\
&\le\sum_{i \in [m]} \pp\{ g(\tX,\tY)=i\} \sqrt{\mathrm{Pred}( P_\X, P_{\F} )}\\\
& =\sqrt{\mathrm{Pred}( P_\X, P_{\F} )}.
\end{align}
\end{proof}

\begin{corollary}\label{thm_pred}
\begin{align}
\| P_{W,F} - P_W P_F \|_2^2  &\le \| P_\F \|_\infty   \sqrt{\mathrm{Pred}( P_\X, P_{\F} )}.
\end{align}
\end{corollary}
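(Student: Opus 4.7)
The plan is to derive the joint bound as a direct consequence of Theorem~\ref{corol_unif2}, extracting the extra factor $\|P_\F\|_\infty$ by absorbing one copy of $P_F(f)$ into a supremum. Concretely, I would start by writing $P_{W,F}(w,f) = P_F(f)\,P_{W|F}(w|f)$ so that
\begin{align}
\|P_{W,F} - P_W P_F\|_2^2 = \sum_{w,f} P_F(f)^2 \bigl(P_{W|F}(w|f) - P_W(w)\bigr)^2.
\end{align}

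Next, I would bound one factor of $P_F(f)$ by $\|P_\F\|_\infty$ and leave the other as a probability weight, obtaining
\begin{align}
\sum_{w,f} P_F(f)^2 \bigl(P_{W|F}(w|f) - P_W(w)\bigr)^2 \le \|P_\F\|_\infty \sum_{f} P_F(f) \sum_w \bigl(P_{W|F}(w|f) - P_W(w)\bigr)^2.
\end{align}
The inner double sum is exactly $\E_F\|P_{W|F} - P_W\|_2^2$, to which Theorem~\ref{corol_unif2} applies, giving the stated bound $\|P_\F\|_\infty \sqrt{\mathrm{Pred}(P_\X, P_\F)}$.

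No step is truly an obstacle here since the corollary is essentially a repackaging of Theorem~\ref{corol_unif2}; the only substantive move is the $\|P_\F\|_\infty$ substitution, which is the correct way to convert a conditional $L_2$ bound (averaged over $F$) into a bound on the joint-minus-product $L_2$ distance. The one thing I would double-check is that $P_W$ in Theorem~\ref{corol_unif2} is genuinely the marginal of $W$ under the model \eqref{s1}--\eqref{s3} (not some other reference measure like the distribution under the null $(\tX,\tY)$ used inside the proof of Lemma~\ref{lemma_pred}); if it is, the argument closes immediately, and if instead an auxiliary reference appears, I would reinterpret $P_W(w)$ here via $\sum_f P_F(f)P_{W|F}(w|f)$ and use that the orthogonal projection characterization $\|P_{W|F}-P_W\|_2\le \|P_{W|F}-c\|_2$ for any deterministic $c$ makes the substitution harmless.
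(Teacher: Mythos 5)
Your proof is correct and follows the same route as the paper: expand $\|P_{W,F}-P_W P_F\|_2^2=\sum_{w,f}P_F(f)^2(P_{W|F}(w|f)-P_W(w))^2$, pull one factor of $P_F(f)$ out via $\|P_\F\|_\infty$, and apply Theorem~\ref{corol_unif2} to the remaining average $\E_F\|P_{W|F}-P_W\|_2^2$. Your closing caution is resolved in the expected way: the left side of Theorem~\ref{corol_unif2} is precisely $\sum_i\Var_F\pp\{W=i\mid F\}$, so $P_W$ there is indeed the marginal of $W$ under \eqref{s1}--\eqref{s3} and the substitution closes immediately.
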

We next specialize the bound in Theorem \ref{thm_pred} to the case of uniform parity functions on uniform inputs, adding a bound on the $L_1$ norm due to Cauchy-Schwarz.

\begin{corollary}\label{}
Let $m,n >0$. If we consider the setup of \eqref{s1},\eqref{s2},\eqref{s3} for the case where $P_\F=P_n$, the uniform probability measure on parity functions, and $P_\X=U_n$, the uniform probability measure on $\B^n$, then 
\begin{align}
& \| P_{W,F} - P_W P_F \|_2^2 \le 2^{-(3/2)n},\\
& \| P_{W,F} - P_W P_F \|_1 \le \sqrt{m}2^{-n/4}.
\end{align}
\end{corollary}
In short, the value of $W$ will not provide significant amounts of information on $F$ unless its number of possible values $m$ is exponentially large.

\begin{corollary}\label{corol_unif}
Consider the same setup as in previous corollary, with in addition $(\tX,\tY)$ independent of $(X,F)$ such that $(\tX,\tY) \sim P_\X \times U_\Y$ where $U_\Y$ is the uniform distribution on $\Y$, and $\tW=g(\tX,\tY)$. Then,
\[\sum_{i\in[m]} \sum_{s\subseteq [n]} (P[W=i|f=p_s]-P[\tW=i])^2\le 2^{n/2}.\]
\end{corollary}

\begin{proof}
In the case where $P_{\F}=P_n$, taking the previous corollary and multiplying both sides by $2^{2n}$ yields
\[\sum_{i\in[m]} \sum_{s\subseteq [n]} (P[W=i|f=p_s]-P[W=i])^2\le 2^{n/2}.\]
Furthermore, the probability distribution of $(X,Y)$ and the probability distribution of $(\tX,\tY)$ are both $U_{n+1}$ so $P[\tW=i]=P[W=i]$ for all $i$. Thus,
\begin{align}\sum_{i\in[m]} \sum_{s\subseteq [n]} (P[W=i|f=p_s]-P[\tW=i])^2\le 2^{n/2}. \label{lastw} \end{align}
\end{proof}
Notice that for fixed values of $P_{\X}$ and $g$, changing the value of $P_{\F}$ does not change the value of $P[W=i|f=p_s]$ for any $i$ and $s$. Therefore, inequality \eqref{lastw} holds for any choice of $P_{\F}$, and we also have the  following.

\begin{corollary}
Consider the general setup of \eqref{s1},\eqref{s2},\eqref{s3} with $P_\X=U_n$, and $(\tX,\tY)$ independent of $(X,F)$ such that $(\tX,\tY) \sim P_\X \times U_\Y$, $\tW=g(\tX,\tY)$.
Then,
\[\sum_{i\in[m]} \sum_{s\subseteq [n]} (P[W=i|f=p_s]-P[\tW=i])^2\le 2^{n/2}.\]
\end{corollary}

%

\subsection{Sequential learning algorithm}
Next, we would like to analyze the effectiveness of an algorithm that repeatedly receives an ordered pair, $(X,F(X))$, records some amount of information about that pair, and then forgets it. To formalize this concept, we define the following.


\begin{definition}   
A sequential learning algorithm $A$ on $(\mathcal{Z}, \mathcal{W})$ is an algorithm that for an input of the form $(Z,(W_1,...,W_{t-1}))$ in $\mathcal{Z} \times \mathcal{W}^{t-1}$ produces an output $A(Z,(W_1,...,W_{t-1}))$ valued in $\mathcal{W}$. Given a probability distribution $D$ on $\mathcal{Z}$, a sequential learning algorithm $A$ on $(\mathcal{Z}, \mathcal{W})$, and $T\ge 1$, a $T$-trace of $A$ for $D$ is a series of pairs $((Z_1, W_1), ...,(Z_T, W_T))$ such that for each $i \in [T]$, $Z_i\sim D$ independently of $(Z_1,Z_2,...,Z_{i-1})$ and $W_i=A(Z_i,(W_1,W_2,...,W_{i-1}))$.
\end{definition}

If $|\mathcal{W}|$ is sufficiently small relative to $\mathrm{Pred}( P_\X, P_{\F} )$, then a sequential learning algorithm that outputs elements of $\mathcal{W}$ will be unable to effectively distinguish between a random function from $P_{\F}$ and a true random function in the following sense.


\begin{theorem} \label{SLAfail} 
Let $n>0$, $A$ be a sequential learning algorithm on $(\B^{n+1},\mathcal{W})$, $P_{\X}$ be the uniform distribution on $\B^n$, and $P_{\F}$ be a probability distribution on functions from $\B^n$ to $\B$. Let $\star$ be the probability distribution of $(X,F(X))$ when $F\sim P_{\F}$ and $X\sim P_{\X}$. Also, for each $f:\B^n\to\B$, let let $\rho_f$ be the probability distribution of $(X,f(X))$ when $X\sim P_{\X}$. Next, let $P_{\mathcal{Z}}$ be a probability distribution on $\B^{n+1}$ that is chosen by means of the following procedure: with probability $1/2$, set $P_{\mathcal{Z}}=\star$, otherwise draw $F\sim P_{\F}$ and set $P_{\mathcal{Z}}=\rho_F$. If $|\mathcal{W}|\le 1/\sqrt[24]{\mathrm{Pred}( P_\X, P_{\F} )}$, $m$ is a positive integer with $m<1/\sqrt[24]{\mathrm{Pred}( P_\X, P_{\F} )}$, and $((Z_1,W_1),...,(Z_m,W_m))$ is a $m$-trace of $A$ for $P_{\mathcal{Z}}$, then
\begin{align}
\| P_{W^m|P_{\mathcal{Z}}=\star} - P_{W^m|P_{\mathcal{Z}}\ne \star} \|_{1} =  O(\sqrt[24]{\mathrm{Pred}( P_\X, P_{\F} )}).
\end{align}
\end{theorem}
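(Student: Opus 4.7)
The plan is to bound the total variation between $P_{W^m \mid P_\Z = \star}$ and $P_{W^m \mid P_\Z \ne \star}$ via a hybrid argument that peels off one step of the sequential learner at a time and applies the single-step cross-predictability bound of Theorem~\ref{corol_unif2} to each.

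Let $Q^\star$ denote the law of $W^m$ when all samples $Z_i$ are drawn iid from $\star$, and for each $F$ let $Q^F$ denote the law when all samples are drawn iid from $\rho_F$. Then $P_{W^m \mid P_\Z \ne \star} = \E_F Q^F$, and convexity of the $\ell_1$ norm gives $\|Q^\star - \E_F Q^F\|_1 \le \E_F \|Q^\star - Q^F\|_1$, so it suffices to bound $\E_F \|Q^\star - Q^F\|_1$. Applying the standard chain-rule bound for total variation,
$$TV(Q^\star, Q^F) \;\le\; \sum_{i=1}^m \E_{w^{i-1} \sim Q^\star}\; TV\!\left(P^\star_{W_i \mid w^{i-1}},\; P^F_{W_i \mid w^{i-1}}\right),$$
reduces the task to a per-step comparison at every fixed history $w^{i-1}$.

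At step $i$ with $w^{i-1}$ fixed, the map $g_{w^{i-1}} : \B^{n+1} \to \mathcal{W}$ defined by $g_{w^{i-1}}(z) = A(z, w^{i-1})$ fits the setting of Theorem~\ref{corol_unif2} verbatim: $P^\star_{W_i \mid w^{i-1}}$ is the pushforward of $\star$ under $g_{w^{i-1}}$, which coincides with the $\E_F$-mixture of the pushforwards of the $\rho_F$, while $P^F_{W_i \mid w^{i-1}}$ is the latter pushforward for the specific $F$. Hence
$$\E_F \bigl\|P^\star_{W_i \mid w^{i-1}} - P^F_{W_i \mid w^{i-1}}\bigr\|_2^2 \;\le\; \sqrt{\mathrm{Pred}(P_\X, P_\F)}.$$
Cauchy--Schwarz converts the $\ell_2$ bound to an $\ell_1$ bound at the price of a factor $\sqrt{|\mathcal{W}|}$, and Jensen lets us pull $\E_F$ inside the square root, so each step contributes at most $\tfrac{1}{2}\sqrt{|\mathcal{W}|}\,\mathrm{Pred}(P_\X, P_\F)^{1/4}$ after averaging over both $F$ and $w^{i-1}$. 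Summing over the $m$ steps and substituting $m, |\mathcal{W}| \le \mathrm{Pred}^{-1/24}$ yields $\E_F \|Q^\star - Q^F\|_1 \le m\sqrt{|\mathcal{W}|}\,\mathrm{Pred}^{1/4} \le \mathrm{Pred}^{1/4 - 1/24 - 1/48} = \mathrm{Pred}^{3/16}$, which is $O(\mathrm{Pred}^{1/24})$.

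The delicate point I expect to be the main obstacle is the exponent accounting rather than any single inequality: the chain rule must be used in its additive form, since a KL--Pinsker route would square-root the sum and break the budget, and the factor $\sqrt{|\mathcal{W}|}$ paid to go from $\ell_2$ to $\ell_1$ must be absorbed, together with the factor $m$, into the budget left over by the $\mathrm{Pred}^{1/4}$ supplied by Theorem~\ref{corol_unif2}. The exponent $1/24$ in the hypotheses on $|\mathcal{W}|$ and $m$ is dictated precisely by this balance; with any comparable per-step cross-predictability bound one would obtain an analogous statement with a different constant in place of $1/24$.
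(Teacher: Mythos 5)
Your proof is correct, and it takes a genuinely different route from the paper's. You use the classical hybrid argument: telescope $\|Q^\star - Q^F\|_1$ into a sum of single-step terms via the data-processing inequality (which is the content of your ``chain rule'' display, valid because at step $j$ the two laws share the marginal $Q^\star_{W^{j-1}}$ and apply the same downstream channel), bound each step in $\ell_2$ by Theorem~\ref{corol_unif2}, pay $\sqrt{|\mathcal{W}|}$ to pass to $\ell_1$ by Cauchy--Schwarz, and absorb $m\sqrt{|\mathcal{W}|}$ into the $\mathrm{Pred}^{1/4}$ budget. The paper instead works multiplicatively with the likelihood ratio $\prod_i P[W_i=w_i\mid w^{i-1},\rho_f]/P[W_i=w_i\mid w^{i-1},\star]$, introducing ``typical'' sequences (to keep denominators from being tiny) and ``good'' pairs $(w^m,f)$ (where every factor is $1\pm q^2$), and then shows that the bad set has small probability. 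Both proofs rest on the identical per-step ingredient (Theorem~\ref{corol_unif2}), and both live within the $1/24$ budget; your additive argument is shorter and avoids the typical/good bookkeeping, and in fact yields the slightly stronger exponent $\mathrm{Pred}^{3/16}$. The one thing your write-up states but does not quite prove is the ``standard chain-rule bound'': it is not quite standard in the form with an expectation under $Q^\star$ rather than a supremum, so it deserves a one-line justification via the hybrid $R^{(j)}$ interpolating $Q^F$ to $Q^\star$ step by step and data processing on the tail channel; with that spelled out, the argument is complete.
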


%
%

\begin{proof} 
First of all, let $q=\sqrt[24]{\mathrm{Pred}( P_\X, P_{\F} )}$ and $F'\sim P_{\F}$. Note that by the triangular inequality,
\begin{align*}
&\| P_{W^m|P_{\mathcal{Z}}=\star} - P_{W^m|P_{\mathcal{Z}}\ne \star} \|_{1} \\
&=\sum_{w_1,...,w_m\in\mathcal{W}} |P[W^m=w^m|P_{\mathcal{Z}}  \ne \star]-P[W^m=w^m|P_{\mathcal{Z}}=\star]| \\
&\le \sum_{f:\B^n\to\B}P[F=f]\sum_{w^m\in\mathcal{W}^m} |P[W^m=w^m|P_{\mathcal{Z}}=\rho_s]-P[W^m=w^m|P_{\mathcal{Z}}=\star]|
\end{align*}
and we will bound the last term by $O(q)$.

We need to prove that $P[W^m=w^m|P_{\mathcal{Z}}=\rho_f]\approx P[W^m=w^m|P_{\mathcal{Z}}=\star]$ most of the time. In order to do that, we will use the fact that
\[\frac{P[W^m=w^m|P_{\mathcal{Z}}=\rho_f]}{P[W^m=w^m|P_{\mathcal{Z}}=\star]}=\prod_{i=1}^m \frac{P[W_i=w_i|W^{i-1}=w^{i-1},P_{\mathcal{Z}}=\rho_f]}{P[W_i=w_i|W^{i-1}=w^{i-1},P_{\mathcal{Z}}=\star]}\]
So, as long as $P[W_i=w_i|W^{i-1}=w^{i-1},P_{\mathcal{Z}}=\rho_f]\approx P[W_i=w_i|W^{i-1}=w^{i-1},P_{\mathcal{Z}}=\star]$ and $P[W_i=w_i|W^{i-1}=w^{i-1},P_{\mathcal{Z}}=\star]$ is reasonably large for all $i$, this must hold for the values of $w^m$ and $f$ in question. As such, we plan to define a good value for $(w^m,f)$ to be one for which this holds, and then prove that the set of good values has high probability measure.

First, call a sequence $w^m\in \mathcal{W}^m$ {\it typical} if for each $1\le i\le m$, we have that 
\[t(w^i):= P[W_i=w_i|W^{i-1}=w^{i-1},P_{\mathcal{Z}}=\star]\ge q^3,\]
and denote by $\mathcal{T}$ the set of typical sequences
\begin{align}
\mathcal{T} &:= \{ w^m: \forall i \in [m], t(w^i)\ge q^3 \}.
\end{align}

We have
\begin{align}
1 &=  \pp\{ W^m \in \mathcal{T} | P_{\mathcal{Z}}=\star \}  +   \pp\{ W^m \notin \mathcal{T} | P_{\mathcal{Z}}=\star\}\\
&\le \pp\{ W^m \in \mathcal{T}| P_{\mathcal{Z}}=\star \}  + \sum_{i=1}^m  \pp\{  t(W^i) <  q^3| P_{\mathcal{Z}}=\star \} \\
&\le \pp\{ W^m \in \mathcal{T}| P_{\mathcal{Z}}=\star \}  +  m q^3 |\mathcal{W}|.
\end{align}
Thus
\begin{align}
\pp\{ W^m \in \mathcal{T}| P_{\mathcal{Z}}=\star \}&\ge 1-  m q^3 |\mathcal{W}| \ge 1-q.
\end{align}

Next, call an ordered pair of a sequence $w^m\in \mathcal{W}^m$ and an $f:\B^n\to\B$ {\it good} if $w^m$ is typical and
\begin{align}
\left| \frac{P[W_i=w_i|W^{i-1}=w^{i-1},P_{\mathcal{Z}}=\rho_f]}{P[W_i=w_i|W^{i-1}=w^{i-1},P_{\mathcal{Z}}=\star]}-1\right| 
\le q^2  , \quad  \forall i  \in [m],
\end{align}
and denote by $\mathcal{G}$ the set of good pairs. A pair which is not good is called bad.

Note that for any $i$ and any $w_1,...,w_{i-1}\in \mathcal{W}$, there exists a function $g_{w_1,...,w_{i-1}}$ such that $W_i=g_{w_1,...,w_{i-1}}(Z_i)$. So, theorem \ref{corol_unif2} implies that
\begin{align}
&\sum_{w_i\in\mathcal{W}}\sum_{f:\B^n\to\B} P[F'=f](P[W_i=w_i|W^{i-1}=w^{i-1},P_{\mathcal{Z}}=\rho_f]-P[W_i=w_i|W^{i-1}=w^{i-1},P_{\mathcal{Z}}=\star])^2  \\
&=\sum_{w_i\in\mathcal{W}}\sum_{f:\B^n\to\B} P[F'=f] (P[g_{w_1,...,w_{i-1}}(Z_i)=w_i|P_{\mathcal{Z}}=\rho_f]-P[g_{w_1,...,w_{i-1}}(Z_i)=w_i|P_{\mathcal{Z}}=\star])^2  \\
&\le q^{12}
\end{align}

Also, given any $w^m$ and $f:\B^{n}\to\B$ such that $w^m$ is typical but $w^m$ and $f$ are not good, there must exist $1\le i\le m$ such that 
\begin{align}
r(w^i,f)&:=|P[W_i=w_i|W^{i-1}=w^{i-1},P_{\mathcal{Z}}=\rho_f]-P[W_i=w_i|W^{i-1}=w^{i-1},P_{\mathcal{Z}}=\star]|\\
&\ge q^5.
\end{align}
Thus, for $w^m \in \mathcal{T}$
\begin{align}
\sum_{f: (w^m,f) \notin  \mathcal{G} } P[F'=f] &=\pp\{ (w^m,F') \notin  \mathcal{G} \} \\
&\le \pp\{ \exists i \in [m]: r(w^i,F') \ge q^5 \}\\
&\le \sum_{i=1}^m \sum_{f: r(w^i,f) \ge q^5} P[F'=f]\\
&\le  q^{-10} \sum_{i=1}^m \sum_{f:\B^n\to\B} P[F'=f]\cdot r(w^i,f)^2   \\
&\le q^{-10} \sum_{i=1}^m \sum_{w_i' \in \mathcal{W}}  \sum_{f:\B^n\to \B} P[F'=f]\cdot r((w_i',w^{i-1}),f)^2   \\
&\le q^{-10} m \cdot q^{12}  \\
\end{align}
This means that for a given typical $w^m$, the probability that $w^m$ and $F'$ are not good is at most $m q^2\le q$.

Therefore, if $P_{\mathcal{Z}}=\star$, the probability that $W^m$ is typical but $W^m$ and $F'$ is not good is at most $q$; in fact:
\begin{align}
& \pp\{ W^m \in \mathcal{T} , (W^m,F') \notin  \mathcal{G} | P_{\mathcal{Z}}=\star \}\\
& = \sum_{f,  w^m \in \mathcal{T} : (w^m,s) \notin  \mathcal{G} } \pp\{F'=f\}\cdot \pp\{W^m=w^m | P_{\mathcal{Z}}=\star \}  \\
& =\sum_{ w^m \in \mathcal{T}} \pp\{W^m=w^m | P_{\mathcal{Z}}=\star \} \sum_{f : (w^m,s) \notin  \mathcal{G} }  \pp\{F'=f\}  \\
& \le q \sum_{ w^m \in \mathcal{T}} \pp\{W^m=w^m | P_{\mathcal{Z}}=\star \} \\
& \le q.
\end{align}

We already knew that $W^m$ is typical with probability $1-q$ under these circumstances, so $W^m$ and $S$ is good with probability at least $1-2 q$ since
\begin{align}
&1-q  \le \pp\{ W^m \in \mathcal{T}  | P_{\mathcal{Z}}=\star \}\\
& = \pp\{ W^m \in \mathcal{T} , (W^m,F') \in  \mathcal{G} | P_{\mathcal{Z}}=\star \}  +   \pp\{ W^m \in \mathcal{T} , (W^m,F') \notin  \mathcal{G} | P_{\mathcal{Z}}=\star \}  \\
&\le \pp\{ (W^m,F') \in  \mathcal{G} | P_{\mathcal{Z}}=\star \} + q.
\end{align}

Next, recall that 
\[\frac{P[W^m=w^m|P_{\mathcal{Z}}=\rho_f]}{P[W^m=w^m|P_{\mathcal{Z}}=\star]}=\prod_{i=1}^m \frac{P[W_i=w_i|W^{i-1}=w^{i-1},P_{\mathcal{Z}}=\rho_f]}{P[W_i=w_i|W^{i-1}=w^{i-1},P_{\mathcal{Z}}=\star]}\]
So, if $w^m$ and $f$ is good (and thus each term in the above product is within $q^2$ of 1), we have 
\begin{align}
&\left| \frac{P[W^m=w^m|P_{\mathcal{Z}}=\rho_f]}{P[W^m=w^m|P_{\mathcal{Z}}=\star]}-1\right|  \le e^{q}-1 =O(q).
\end{align}
That implies that
\begin{align*}
&\sum_{(w^m,f) \in \mathcal{G}}P[F'=f]\cdot  |P[W^m=w^m|P_{\mathcal{Z}}=\rho_f]-P[W^m=w^m|P_{\mathcal{Z}}=\star]|\\
&\le \sum_{(w^m,f) \in \mathcal{G}} P[F'=f]\cdot O(q)\cdot P[W^m=w^m|P_{\mathcal{Z}}=\star]\\
&\le \sum_{w^m} O(q)\cdot P[W^m=w^m|P_{\mathcal{Z}}=\star]\\
&=O(q).
\end{align*}

Also,
\begin{align*}
&\sum_{(w^m,f) \notin \mathcal{G}} P[F'=f]\cdot (P[W^m=w^m|P_{\mathcal{Z}}=\rho_f]-P[W^m=w^m|P_{\mathcal{Z}}=\star])\\
&= P[(W^m,F') \notin \mathcal{G}|P_{\mathcal{Z}}\ne\star]-P[(W^m,F') \notin \mathcal{G}|P_{\mathcal{Z}}=\star]\\
&= P[(W^m,F') \in \mathcal{G}|P_{\mathcal{Z}}=\star]-P[W^m,F') \in \mathcal{G}|P_{\mathcal{Z}}\ne\star]\\
&= \sum_{(w^m,f) \in \mathcal{G}} P[F'=f]\cdot (P[W^m=w^m|P_{\mathcal{Z}}=\star]-P[W^m=w^m|P_{\mathcal{Z}}=\rho_f]) \\
&\le \sum_{(w^m,f) \in \mathcal{G}} P[F'=f]\cdot |P[W^m=w^m|P_{\mathcal{Z}}=\star]-P[W^m=w^m|P_{\mathcal{Z}}=\rho_f]| \\
&=O(q).
\end{align*}
That means that
\begin{align*}
&\sum_{(w^m,f) \notin \mathcal{G}} P[F'=f]\cdot  |P[W^m=w^m|P_{\mathcal{Z}}=\rho_f]-P[W^m=w^m|P_{\mathcal{Z}}=\star]|\\
&\le \sum_{(w^m,f) \notin \mathcal{G}} P[F'=f]\cdot  (P[W^m=w^m|P_{\mathcal{Z}}=\rho_f]+P[W^m=w^m|P_{\mathcal{Z}}=\star])\\
&= \sum_{(w^m,f) \notin \mathcal{G}} P[F'=f]\cdot 2P[W^m=w^m|P_{\mathcal{Z}}=\star]\\
&\qquad\qquad +\sum_{(w^m,f) \notin \mathcal{G}} P[F'=f]\cdot (P[W^m=w^m|P_{\mathcal{Z}}=\rho_f]-P[W^m=w^m|P_{\mathcal{Z}}=\star])\\
&=O(q).
\end{align*}
Therefore,
\begin{align}&\sum_{f:\B^n\to\B}\sum_{w^m\in\mathcal{W}} P[F'=f]\cdot |P[W^m=w^m|P_{\mathcal{Z}}=\rho_s]-P[W^m=w^m|P_{\mathcal{Z}}=\star]|=O(q),\end{align}
which gives the desired bound. 
\end{proof}

\begin{corollary} $\label{memLimit}$
Consider a data structure with a polynomial amount of memory that is divided into variables that are each $O(\log n)$ bits long, and define $m$, $\mathcal{Z}$, $\star$, and $P_{\mathcal{Z}}$ the same way as in Theorem \ref{SLAfail}. Also, let $A$ be an algorithm that takes the data structure's current value and an element of $\B^{n+1}$ as inputs and changes the values of at most $o(-\log(\mathrm{Pred}( P_\X, P_{\F} ))/\log(n))$ of the variables. If we draw $Z_1,...,Z_m$ independently from $P_{\mathcal{Z}}$ and then run the algorithm on each of them in sequence, then no matter how the data structure is initialized, it is impossible to determine whether or not $P_{\mathcal{Z}}=\star$ from the data structure's final value with accuracy greater than $1/2+O(\sqrt[24]{\mathrm{Pred}( P_\X, P_{\F} )})$.
\end{corollary}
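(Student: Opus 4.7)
The plan is to recast each step of the algorithm $A$ acting on the data structure as a single output symbol of a sequential learning algorithm on an appropriately sized alphabet $\mathcal{W}$, so that Theorem~\ref{SLAfail} applies directly. Encode the update performed at step $i$ as the pair $W_i = (S_i, v_i)$, where $S_i$ is the set of variables whose values change and $v_i$ records the new values of those variables. With $N = \poly(n)$ variables each holding $O(\log n)$ bits, and at most $k := o(-\log(\mathrm{Pred}(P_\X,P_\F))/\log n)$ variables updated per step, the number of possible updates satisfies
\[
|\mathcal{W}| \le \binom{N}{k}\cdot (2^{O(\log n)})^k \le n^{O(k)}.
\]
Since $k\log n = o(\log(1/\mathrm{Pred}(P_\X,P_\F)))$, this gives $|\mathcal{W}| \le 1/\sqrt[24]{\mathrm{Pred}(P_\X,P_\F)}$ for all sufficiently large $n$, which is precisely the alphabet-size hypothesis of Theorem~\ref{SLAfail}.

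Next, observe that for any fixed (data-independent) initialization of the data structure, the final state after $m$ steps is a deterministic function of the initial state and the trace $W^m = (W_1,\dots,W_m)$. We may assume we are in the regime where $\mathrm{Pred}(P_\X,P_\F)$ is small enough that $m < 1/\sqrt[24]{\mathrm{Pred}(P_\X,P_\F)}$ (otherwise the claimed bound $1/2 + O(\sqrt[24]{\mathrm{Pred}(P_\X,P_\F)})$ is vacuous). Then Theorem~\ref{SLAfail} yields
\[
\|P_{W^m\mid P_{\mathcal{Z}}=\star} - P_{W^m\mid P_{\mathcal{Z}}\ne\star}\|_1 \;=\; O\!\left(\sqrt[24]{\mathrm{Pred}(P_\X,P_\F)}\right),
\]
and by the data-processing inequality for total variation, the same bound transfers to the distribution of the final state of the data structure, since that state is a deterministic image of $W^m$ (and the fixed initialization).

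Finally, apply the standard optimal-test bound: any (possibly randomized) decision rule for distinguishing $P_{\mathcal{Z}}=\star$ from $P_{\mathcal{Z}}\ne\star$ from the final state succeeds with probability at most $1/2 + \tfrac{1}{2}\|P - Q\|_1$, where $P,Q$ are the two conditional laws of the final state. This gives accuracy at most $1/2 + O(\sqrt[24]{\mathrm{Pred}(P_\X,P_\F)})$, as required. The only real step is the encoding-size bookkeeping in the first paragraph; the main obstacle (such as it is) is verifying that the specific constraint on $k$ in the corollary statement translates precisely into the alphabet constraint $|\mathcal{W}| \le 1/\sqrt[24]{\mathrm{Pred}(P_\X,P_\F)}$ used by Theorem~\ref{SLAfail}, and once this is done the corollary is an immediate consequence of that theorem.
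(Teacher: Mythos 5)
Your proposal is correct and follows essentially the same argument as the paper's proof: record the per-step increment $W'_i$ (changed variables and their new values), bound $|\mathcal{W}|$ via the $O(\log n)$-bit-per-variable accounting to conclude $|\mathcal{W}| = 2^{o(\log(1/\sqrt[24]{\mathrm{Pred}}))} \le 1/\sqrt[24]{\mathrm{Pred}}$, observe that the final memory state is a deterministic function of the trace and the (fixed) initial state, and then combine Theorem~\ref{SLAfail} with data processing and the optimal hypothesis-test bound. One small note: your remark about being ``in the regime where $m < 1/\sqrt[24]{\mathrm{Pred}}$'' is unnecessary, since the corollary inherits that constraint on $m$ directly from the hypotheses of Theorem~\ref{SLAfail} via the clause ``define $m$ \ldots\ the same way as in Theorem~\ref{SLAfail}.''
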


\begin{proof}
Let $q=1/\sqrt[24]{\mathrm{Pred}( P_\X, P_{\F} )}$. Let $W_0$ be the initial state of the data structure's memory, and let $W_i=A(W_{i-1}, Z_i)$ for each $0<i\le m$. Next, for each such $i$, let $W'_i$ be the list of all variables that have different values in $W_i$ than in $W_{i-1}$, and their values in $W_i$. There are only polynomially many variables in memory, so it takes $O(\log(n))$ bits to specify one and $O(\log(n))$ bits to specify a value for that variable. $A$ only changes the values of $o(\log(q)/\log(n))$ variables at each timestep, so $W'_i$ will only ever list $o(\log(q)/\log(n))$ variables. That means that $W'_i$ can be specified with $o(\log(q))$ bits, and in particular that there exists some set $\mathcal{W}$ such that $W'_i$ will always be in $\mathcal{W}$ and $|\mathcal{W}|=2^{o(\log(q))}$. Also, note that we can determine the value of $W_i$ from the values of $W_{i-1}$ and $W'_i$, so we can reconstruct the value of $W_i$ from the values of $W'_1,W'_2,...,W'_i$. 

Now, let $A'$ be the algorithm that takes $(Z_t, (W'_1,...,W'_{t-1}))$ as input and does the following. First, it reconstructs $W_{t-1}$ from $(W'_1,...,W'_{t-1})$. Then, it computes $W_t$ by running $A$ on $W_{t-1}$ and $Z_t$. Finally, it determines the value of $W'_t$ by comparing $W_t$ to $W_{t-1}$ and returns it. This is an SLA, and $((Z_1,W'_1),...,(Z_m,W'_m))$ is an $m$-trace of $A'$ for $P_{\mathcal{Z}}$. So, by the theorem

\begin{align}&  \sum_{w_1,...,w_m} |P[W'_1=w_1,...,W'_m=w_m|P_{\mathcal{Z}}\ne \star ]-P[W'_1=w_1,...,W'_m=w_m|P_{\mathcal{Z}}=\star]|\\
&=O(1/q)\end{align}

Furthermore, since $W_m$ can be reconstructed from $(W'_1,...,W'_m)$, this implies that
\begin{align}
\sum_{w} |P[W_m=w|P_{\mathcal{Z}}\ne \star]-P[W_m=w|P_{\mathcal{Z}}=\star]|=O(1/q).
\end{align}
Finally, the probability of deciding correctly between the hypothesis $P_\Z=\star$ and $P_\Z\ne \star$ given the observation $W_m$ is at most 
\begin{align}
&1-\frac{1}{2}\sum_{w \in \mathcal{W} }  P[W_m=w|P_{\mathcal{Z}}= \star] \wedge  P[W_m=w|P_{\mathcal{Z}}\ne \star] \\
& = \frac{1}{2} + \frac{1}{4} \sum_{w \in \mathcal{W} } |P[W_m=w|P_{\mathcal{Z}}\ne \star]-P[W_m=w|P_{\mathcal{Z}}=\star]|\\
& = \frac{1}{2}  + O(1/q),
\end{align}
which implies the conclusion. 

\end{proof}


\begin{remark}
The theorem and its second corollary state that the algorithm can not determine whether or not $P_{\mathcal{Z}}=\star$. However, one could easily transform them into results showing that the algorithm can not effectively learn to compute $f$. More precisely, after running on $q/2$ pairs $(x,p_f(x))$, the algorithm will not be able to compute $p_s(x)$ with accuracy $1/2+\omega(1/\sqrt{q})$ with a probability of $\omega(1/q)$. If it could, then we could just train it on the first $m/2$ of the $Z_i$ and count how many of the next $m/2$ $Z_i$ it predicts the last bit of correctly. If $P_{\mathcal{Z}}=\star$, each of those predictions will be independently correct with probability $1/2$, so the total number it is right on will  differ from $m/4$ by $O(\sqrt{m})$ with high probability. However, if $P_{\mathcal{Z}}=\rho_f$ and the algorithm learns to compute $\rho_f$ with accuracy $1/2+\omega(1/\sqrt{q})$, then it will predict $m/4+\omega(\sqrt{m})$ of the last $m/2$ correctly with high probability. So, we could determine whether or not $P_{\mathcal{Z}}=\star$ with greater accuracy than the theorem allows by tracking the accuracy of the algorithm's predictions.
\end{remark}
\section{Negative results for deep learning}
\subsection{Neural nets and deep learning}\label{dl}

Before we can talk about the effectiveness of deep learning at learning these parity functions, we will have to establish some basic information about deep learning. First of all, in this paper we will be using the following definition for a neural net.

\begin{definition}\label{def_g}
A neural net is a pair of a function $f:\mathbb{R}\rightarrow \mathbb{R}$ and a weighted directed graph $G$ with some special vertices and the following properties. Frist of all, $G$ does not contain any cycles. Secondly, there exists $n>0$ such that $G$ has exactly $n+1$ vertices that have no edges ending at them, $v_0$, $v_1$,...,$v_n$. We will refer to $n$ as the input size, $v_0$ as the constant vertex and $v_1$, $v_2$,..., $v_n$ as the input vertices. Finally, there exists a vertex $v_{out}$ such that for any other vertex $v'$, there is a path from $v'$ to $v_{out}$ in $G$. We also use denote by $w(G)$ the weights on the edges of $G$. 
\end{definition}

\begin{definition}
Given a neural net $(f,G)$ with input size $n$, and $x\in\mathbb{R}^n$, the evaluation of $(f,G)$ at $x$, is the number computed by means of the following procedure.
\begin{enumerate}
\item Define $y\in \mathbb{R}^{|G|}$ where $|G|$ is the number of vertices in $G$, set $y_{v_0}=1$, and set $y_{v_i}=x_i$ for each $i$.

\item Find an ordering $v'_1,...,v'_m$ of the vertices in $G$ other than the constant vertex and input vertices such that for all $j>i$, there is not an edge from $v'_j$ to $v'_i$.

\item For each $1\le i\le m$, set 
\[y_{v'_i}=f\left(\sum_{v: (v,v'_i)\in E(G)} w_{v,v'_i} y_v\right)\]

\item Return $y_{v_{out}}$.
\end{enumerate}

We will write the evaluation of $(f,G)$ at $x$ as $eval_{(f,G)}(x)$.
\end{definition}

Generally, we want to find a neural net that computes a certain function, or at least a good approximation of that function. A reasonable approach to doing that is to start with some neural network and then attempt to adjust its weights until it computes a reasonable approximation of the desired function. A common way to do that is to define a loss function in terms of how much the network's outputs differ from the desired outputs, and then use gradient descent to try to adjust the weights. Of course, that may not be well defined if $f$ is not differentiable, it may not be possible to find weights for which the network approximates the desired function if it has too few vertices or is missing some key edges, and the gradient descent algorithm has an increased risk of getting stuck in a local minimum if $f$ has local minima. Allowing the loss function or the derivative of $f$ to take on arbitrarily large values under some circumstances can also cause problems, which can be mitigated by redefining our function to have input in $[0,1]^n$ and output in $[0,1]$ and then using an $f$ with output in $[0,1]$. We would also like to ensure that $f$ can take on values arbitrarily close to any desired output in that range. As such, we give the following criteria for a neural net to be considered well behaved.

\begin{definition}
Let $(f,G)$ be a neural net. Then $(f,G)$ is normal if it satisifes the following properties. $f$ must be a a smooth function, the derivative of $f$ must be positive everywhere, the derivative of $f$ must be bounded, it must be the case that $\lim_{x\to -\infty} f(x)=0$ and $\lim_{x\to \infty} f(x)=1$, and $G$ must have an edge from the constant vertex to every other vertex except the input vertices.
\end{definition}

More formally, given a target function $h$, a probability distribution $P_\mathcal{X}$ for the input of $h$, and a loss function $L:\mathbb{R}\rightarrow \mathbb{R}$, we would like to find a neural net $(f, G)$ that has a small value of 
\[E_{X\sim P_\mathcal{X}}[L(eval_{(f,G)}(X)-h(X))]\]
In order to do that, we could try starting with some neural net $(f,G_0)$, and then using the following algorithm to assign new weights to the graph's edges.

\vspace{1 cm}
\noindent
{\em GradientDescentStep(f, G, h, $P_\mathcal{X}$, L, $\gamma$, B)}:
\begin{enumerate}
\item For each $(v,v')\in E(G)$: \begin{enumerate}
\item Set 
\[w'_{v,v'}=w_{v,v'}-\gamma \frac{\partial E_{X\sim P_\mathcal{X}} [L(eval_{(f,G)}(X)-h(X))]}{\partial w_{v,v'}}\]

\item If $w'_{v,v'}<-B$, set $w'_{v,v'}=-B$.

\item If $w'_{v,v'}>B$, set $w'_{v,v'}=B$.

\end{enumerate}

\item Return the graph that is identical to $G$ except that its edge weight are given by the $w'$.
\end{enumerate}

\vspace{1 cm}
\noindent
{\em GradientDescentAlgorithm(f, G, h, $P_\mathcal{X}$, L, $\gamma$, B, t)}:
\begin{enumerate}
\item Set $G_0=G$.

\item If any of the edge weights in $G_0$ are less than $-B$, set all such weights to $-B$.

\item If any of the edge weights in $G_0$ are greater than $B$, set all such weights to $B$.

\item For each $0\le i<t$, set $G_{i+1}=GradientDescentStep(f, G_i, h, P_\mathcal{X}, L, \gamma, B)$.

\item Return $G_t$.
\end{enumerate}

The hope is that if we set $G'=GradientDescentAlgorithm(f, G, h, P_\mathcal{X}, L, \gamma, B, t)$ for a small enough $\gamma$ and large enough $t$ then $eval_{(f,G')}$ will be a good approximation of $h$. Of course, actually running this algorithm requires us to compute $eval_{(f,G)}(X)$ for every possible value of $X$ in every step, which is generally impractical. As a result, we are more likely to pick a single value of $X$ at each step, and adjust the net to give a better output on that input. More formally, we would use the following algorithm.

\vspace{1 cm}
\noindent
{\em SampleGradientDescentStep(f, G, $Y$, $X$, L, $\gamma$, B)}:
\begin{enumerate}
\item For each $(v,v')\in E(G)$: \begin{enumerate}
\item Set 
\[w'_{v,v'}=w_{v,v'}-\gamma \frac{\partial L(eval_{(f,G)}(X)-Y)}{\partial w_{v,v'}}\]

\item If $w'_{v,v'}<-B$, set $w'_{v,v'}=-B$.

\item If $w'_{v,v'}>B$, set $w'_{v,v'}=B$.

\end{enumerate}

\item Return the graph that is identical to $G$ except that its edge weights are given by the $w'$.
\end{enumerate}

\vspace{1 cm}
\noindent
{\em StochasticGradientDescentAlgorithm(f, G, $P_\mathcal{Z}$, L, $\gamma$, B, t)}:
\begin{enumerate}
\item Set $G_0=G$.

\item If any of the edge weights in $G_0$ are less than $-B$, set all such weights to $-B$.

\item If any of the edge weights in $G_0$ are greater than $B$, set all such weights to $B$.

\item For each $0\le i<t$:
\begin{enumerate}

\item Draw $(X_i, Y_i)\sim P_\mathcal{Z}$, independently of all previous values.

\item Set $G_{i+1}=SampleGradientDescentStep(f, G_i, Y_i, X_i, L, \gamma, B)$
\end{enumerate}

\item Return $G_t$.
\end{enumerate}

\subsection{Proof of Theorem  1}

One possible variant of this is to only adjust a few weights at each time step, such as the $k$ that would change the most or a random subset. However, any such algorithm cannot learn a random parity function in the following sense.

\begin{theorem}
Let $n>0$, $k=o(n/\log(n))$, and $(f,g)$ be a neural net of size polynomial in $n$ in which each edge weight is recorded using $O(\log n)$ bits. Also, let $\star$ be the uniform distribution on $\B^{n+1}$, and for each $s\subseteq [n]$, let $\rho_s$ be the probability distribution of $(X,p_s(X))$ when $X$ is chosen randomly from $\B^n$. Next, let $P_{\mathcal{Z}}$ be a probability distribution on $\B^{n+1}$ that is chosen by means of the following procedure. First, with probability $1/2$, set $P_{\mathcal{Z}}=\star$. Otherwise, select a random $S\subseteq[n]$ and set $P_{\mathcal{Z}}=\rho_S$. Then, let $A$ be an algorithm that draws a random element from $P_{\mathcal{Z}}$ in each time step and changes at most $k$ of the weights of $g$ in response to the sample and its current values. If $A$ is run for less than $2^{n/24}$ time steps, then it is impossible to determine whether or not $P_{\mathcal{Z}}=\star$ from the resulting neural net with accuracy greater than $1/2+O(2^{-n/24})$.
\end{theorem}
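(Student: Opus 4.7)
The plan is to reduce this essentially verbatim to Corollary \ref{memLimit}, since the hypotheses on the neural net are exactly matched to the ones imposed on the abstract data structure there. First I would observe that the neural net $(f,g)$ is determined (once $f$ and the underlying graph structure of $g$ are fixed) by its list of edge weights, and since $g$ has polynomial size it has polynomially many edges, so the collection of edge weights forms a data structure with polynomial total memory divided into variables each of size $O(\log n)$ bits. The algorithm $A$ takes as input the current neural net and a fresh sample $Z_i \sim P_{\mathcal{Z}}$, and by hypothesis changes at most $k = o(n/\log n)$ of these variables per time step, which is exactly the update budget required by Corollary \ref{memLimit}.

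Next I would check that the number of time steps $T < 2^{n/24}$ lines up with the parameter $m$ in Corollary \ref{memLimit}, and that the mixture distribution on $P_{\mathcal{Z}}$ (equal prior between $\star$ and a uniformly chosen $\rho_S$) is literally the distribution considered there. With these identifications, Corollary \ref{memLimit} applies verbatim and yields that any decision rule based on the final state of the memory (i.e., the final neural net) cannot distinguish $P_{\mathcal{Z}} = \star$ from $P_{\mathcal{Z}} \ne \star$ with accuracy exceeding $1/2 + O(2^{-n/24})$.

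The only mild subtlety, and the one place where care is needed, is the initialization: Corollary \ref{memLimit} guarantees indistinguishability \emph{no matter how the data structure is initialized}, and the theorem states a bound that holds regardless of the initial weights of $g$. So I would briefly note that the initial value of the edge weights in $g$ simply corresponds to $W_0$ in the proof of Corollary \ref{memLimit}, and the conclusion there is uniform over $W_0$. Nothing about the specific semantics of neural-net evaluation is used; the proof only needs that the decision is a (possibly randomized) function of the final memory contents.

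I do not expect a serious obstacle here, since the hard analytical work (the bound on the total-variation distance between the traces under $\star$ and under $\rho_S$) has already been done in Theorem \ref{SLAfail} and transferred to bounded-memory algorithms in Corollary \ref{memLimit}. The only thing to be careful about is making the parameter matching explicit: polynomial number of edges, $O(\log n)$ bits per weight, $k = o(n/\log n)$ weights updated per step, and $T < 2^{n/24}$ steps, all of which are direct hypotheses of the present theorem.
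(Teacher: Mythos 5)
Your proposal is correct and matches the paper's own proof exactly: the paper disposes of this theorem in one line, ``This follows immediately from Corollary \ref{memLimit},'' relying on precisely the parameter identifications you spell out (polynomially many edges as $O(\log n)$-bit variables, $k=o(n/\log n)$ updates per step matching $o(|\log \mathrm{Pred}|/\log n)$ for parity with cross-predictability $2^{-n}$, and fewer than $2^{n/24}$ steps matching $m < 1/\sqrt[24]{\mathrm{Pred}}$). Your elaboration is simply a more explicit version of what the paper leaves implicit, and the observation that the conclusion is uniform over initializations is indeed guaranteed by the corollary's statement.
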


\begin{proof}
This follows immediately from corollary \ref{memLimit}.
\end{proof}

\begin{remark}
The theorem state that one cannot determine whether or not $P_{\mathcal{Z}}=\star$ from the final network. However, if we used a variant of corollary \ref{memLimit} we could get a result showing that the final network will not compute $p_s$ accurately. More precisely, after training the network on $2^{n/24-1}$ pairs $(x,p_s(x))$, the network will not be able to compute $p_s(x)$ with accuracy $1/2+\omega(2^{-n/48})$ with a probability of $\omega(2^{-n/24})$. 
\end{remark}

We can also use this reasoning to prove theorem \ref{thm1'}, which is restated below.

\begin{theorem}
Let $\epsilon>0$, and $P_{\F}$ be a probability distribution over functions with a cross-predictability of $\mathrm{c_p}=o(1)$. For each $n > 0$, let $(f,g)$ be a neural net of polynomial size in $n$ such that each edge weight is recorded using $O(\log(n))$ bits of memory. Run stochastic gradient descent on $(f,g)$ with at most $\mathrm{c_p}^{-1/24}$ time steps and with $o(|\log(\mathrm{c_p})|/\log(n))$ edge weights updated per time step.  For all sufficiently large $n$, this algorithm fails at learning functions drawn from $P_{\F}$ with accuracy $1/2 + \epsilon$.
\end{theorem}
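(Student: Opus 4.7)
The plan is to reduce weak-learning to distinguishing the true model from a null model, and then invoke Corollary \ref{memLimit} to bound the distinguishing accuracy. Throughout, let $q := \mathrm{c_p}^{-1/24} = \omega(1)$, so that the hypotheses $|\mathcal{W}|, m \le q$ in Theorem \ref{SLAfail} become the hypotheses $T \le q$ and $o(\log q / \log n)$ weight updates per step that appear in our theorem.

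First I would observe that the specified SGD procedure is exactly an instance of the setup of Corollary \ref{memLimit}. The state of the algorithm is the tuple of edge weights of $g$: a polynomial number of variables, each $O(\log n)$ bits, so the total memory is polynomial. Each SGD step takes one sample $Z_i = (X_i, F(X_i)) \in \mathbb{B}^{n+1}$ and modifies at most $o(|\log \mathrm{c_p}|/\log n) = o(\log q / \log n)$ weights. Hence, with $P_{\mathcal{Z}}$ the $1/2$-mixture of $\star$ and $\rho_F$ for $F \sim P_{\mathcal{F}}$, Corollary \ref{memLimit} gives that for any function of the final network state, the probability of correctly deciding whether $P_{\mathcal{Z}} = \star$ is at most $1/2 + O(\mathrm{c_p}^{1/24}) = 1/2 + o(1)$.

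Second I would derive the failure of weak learning by contradiction. Suppose that for infinitely many $n$ the trained net achieves $\pp\{\hat F(X) = F(X)\} \ge 1/2 + \epsilon$ when $P_{\mathcal{Z}} \ne \star$. Under $P_{\mathcal{Z}} = \star$, labels are independent of inputs, so any predictor has accuracy exactly $1/2$ on a fresh sample. Using $\Theta(1/\epsilon^2)$ further independent samples from $P_{\mathcal{Z}}$ to form an empirical test accuracy for $\hat F$, Hoeffding's inequality guarantees that the estimator lies above $1/2 + \epsilon/2$ with probability $1 - o(1)$ in the non-null case and below it with probability $1 - o(1)$ in the null case. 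Thresholding at $1/2 + \epsilon/2$ yields a distinguisher with accuracy $1/2 + \Omega(1)$, contradicting the $1/2 + o(1)$ bound from the previous paragraph for all sufficiently large $n$.

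The main obstacle is keeping the testing phase inside the sequential-learning framework so that Corollary \ref{memLimit} still applies to the combined procedure. I would handle this by folding the test phase into the SLA: add a single $O(\log n)$-bit counter variable which, during the test-sample steps, is incremented whenever $\hat F(X_i) = Y_i$. This adds only one additional variable update per step, which remains well within the $o(\log q / \log n)$ budget; and it uses $O(1/\epsilon^2) = O(1)$ extra samples, so the total sample count is still below $q$ for all large $n$. The decision ``is the counter above $(1/2 + \epsilon/2)T_{\mathrm{test}}$'' is then a function of the final SLA state, which is precisely the kind of distinguisher controlled by Corollary \ref{memLimit}, closing the contradiction for every fixed $\epsilon > 0$.
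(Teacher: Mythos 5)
Your proof is correct and follows essentially the paper's route: fold the accuracy assessment into the sequential-learning data structure, invoke Corollary~\ref{memLimit} to cap the distinguishing accuracy between $\star$ and the mixture of $\rho_F$, and conclude that weak learning would contradict that bound. The only substantive difference is the implementation of the accuracy tracker. The paper appends a single auxiliary Boolean $b$ that is overwritten at \emph{every} step with whether the net's prediction on the current sample matched the sampled label; under $\star$ the label is an independent fair coin, so $b$ is exactly unbiased, and the total-variation bound from Corollary~\ref{memLimit} then directly caps the bias of $b$ under $\rho_F$, hence caps the accuracy, at $1/2 + O(\mathrm{c_p}^{1/24})$ with no extra samples, no concentration step, and no contradiction. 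Your $\Theta(1/\epsilon^2)$-sample hold-out counter also works but requires two small repairs you should make explicit: the extra $O(1)$ samples nominally push the trace length past the $m < \mathrm{c_p}^{-1/24}$ hypothesis of Theorem~\ref{SLAfail}, so either carve the test samples out of that budget or observe the proof tolerates $m = O(\mathrm{c_p}^{-1/24})$ with only constant-factor loss; and you must freeze the weight updates during the hold-out steps so that Hoeffding is applied to a fixed predictor $\hat F$ rather than a moving one. The single-bit device sidesteps both and additionally delivers the sharper quantitative conclusion $1/2 + O(\mathrm{c_p}^{1/24})$, where your argument gives only the qualitative statement for each fixed $\epsilon$.
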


\begin{proof}
Consider a data structure that consists of a neural net $(f,g')$ and a boolean value $b$. Now, consider training $(f,g)$ with any such coordinate descent algorithm while using the data structure to store the current value of the net. Also, in each time step, set $b$ to $True$ if the net computed the output corresponding to the sampled input correctly and $False$ otherwise. This constitutes a data structure with a polynomial amount of memory that is divided into variables that are $O(\log n)$ bits long, such that $o(|\log(\mathrm{c_p})|/\log(n))$ variables change value in each time step. As such, by corollary \ref{memLimit}, one cannot determine whether the samples are actually generated by a random parity function or whether they are simply random elements of $\B^{n+1}$ from the data structure's final value with accuracy $1/2+\omega(\mathrm{c_p}^{1/24})$. In particular, one cannot determine which case holds from the final value of $b$. If the samples were generated randomly, it would compute the final output correctly with probability $1/2$, so $b$ would be equally likely to be $True$ or $False$. So, when it is trained on a random parity function, the probability that $b$ ends up being $True$ must be at most $1/2+O(\mathrm{c_p}^{1/24})$. Therefore, it must compute the final output correctly with probability $1/2+O(\mathrm{c_p}^{1/24})$.
\end{proof}

\subsection{Proof of Theorem \ref{thm2}}
Before we talk about the effectiveness of noisy gradient descent, we need to formally define it. So, we define the following. 


\vspace{1 cm}
\noindent
{\em NoisyGradientDescentStep(f, G, $P_\mathcal{Z}$, L, $\gamma$, $\delta$)}:
\begin{enumerate}
\item For each $(v,v')\in E(G)$, set 
\begin{align}
    w'_{v,v'}=w_{v,v'}-\gamma \frac{\partial E_{(X,Y)\sim P_\mathcal{Z}} [L(eval_{(f,G)}(X)-Y)]}{\partial w_{v,v'}}+\delta
\end{align}

\item Return the graph that is identical to $G$ except that its edge weights are given by $w'$.
\end{enumerate}


\vspace{1 cm}
\noindent
{\em NoisyGradientDescentAlgorithm(f, G, $P_\mathcal{Z}$, L, $\gamma$, $\Delta$, t)}:
\begin{enumerate}
\item Set $G_0=G$.

\item For each $0\le i<t$:
\begin{enumerate}

\item Generate $\delta^{(i)}$ by independently drawing $\delta^{(i)}_{v,v'}$ from $\Delta$ for each $(v,v')\in E(G)$.

\item Set $G_{i+1}=NoisyGradientDescentStep(f, G_i, h, P_\mathcal{Z}, L, \gamma, \delta^{(i)})$.

\end{enumerate}

\item Return $G_t$.
\end{enumerate}

\vspace{1 cm}
The case where one of the derivatives is very large causes enough problems that it is convenient to define versions of the algorithms that treat the derivative of the loss function with respect to a given edge weight on a given input as having some maximum value if it is actually larger. Then we can prove results for these algorithms, and argue that if none of the derivatives are that large using the normal versions must yield the same result. As such, we define the following:

\begin{definition}
For every $B>0$ and $x\in\mathbb{R}$, let $\Psi_B(x)$ be $B$ if $x>B$, $-B$ if $x<-B$, and $x$ otherwise.
\end{definition}

\vspace{1 cm}
\noindent
{\em BoundedNoisyGradientDescentStep(f, G, $P_\mathcal{Z}$, L, $\gamma$, $\delta$, $B$)}:
\begin{enumerate}

\item For each $(v,v')\in E(G)$, set 
        \begin{align}
            w'_{v,v'}=w_{v,v'}-\gamma E_{(X,Y)\sim P_\mathcal{Z}} \left[ \Psi_B\left(\frac{\partial L(eval_{(f,G)}(X)-Y)}{\partial w_{v,v'}}\right)\right]+\delta
        \end{align}

\item Return the graph that is identical to $G$ except that its edge weights are given by $w'$.
\end{enumerate}

\vspace{1 cm}
\noindent
{\em BoundedNoisyGradientDescentAlgorithm(f, G, $P_\mathcal{Z}$, L, $\gamma$, $\Delta$, $B$, t)}:
\begin{enumerate}
\item Set $G_0=G$.

\item For each $0\le i<t$:
\begin{enumerate}

\item Generate $\delta^{(i)}$ by independently drawing $\delta^{(i)}_{v,v'}$ from $\Delta$ for each $(v,v')\in E(G)$.

\item Set $G_{i+1}=BoundedNoisyGradientDescentStep(f, G_i, h, P_\mathcal{Z}, L, \gamma, \delta^{(i)}, B)$.

\end{enumerate}

\item Return $G_t$.
\end{enumerate}

In order to prove that noisy gradient descent fails to learn a random parity function, we first show that the gradient will be very small, and then argue that the noise will drown it out. The first steps are the following basic inequalities.

\begin{lemma}\label{new-pred}
Let $n>0$ and $f:\B^{n+1}\rightarrow \mathbb{R}$. Also, let $X$ be a random element of $\B^n$ and $Y$ be a random element of $\B$ independent of $X$. Then
\[\sum_{s\subseteq[n]} (\E f(X,Y)-\E f(X,p_s(X)) )^2\le \E f^2(X,Y) \]
\end{lemma}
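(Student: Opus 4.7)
The plan is to reduce the sum to a Parseval identity on the Walsh basis $\{p_s\}_{s\subseteq[n]}$ and then absorb a constant factor using the elementary inequality $(a-b)^2 \le 2(a^2+b^2)$.

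First I would split off the dependence of $f$ on its last coordinate by writing $h(x) := f(x,1)-f(x,-1)$ and observing that, since $Y$ is uniform on $\B$ and independent of $X$, and since $\mathbf{1}(p_s(X)=\pm 1) = (1 \pm p_s(X))/2$, one has
\begin{align*}
\E f(X,Y) &= \tfrac{1}{2}\E f(X,1) + \tfrac{1}{2}\E f(X,-1),\\
\E f(X,p_s(X)) &= \E\bigl[\tfrac{1+p_s(X)}{2} f(X,1) + \tfrac{1-p_s(X)}{2} f(X,-1)\bigr]\\
&= \tfrac{1}{2}\E f(X,1) + \tfrac{1}{2}\E f(X,-1) + \tfrac{1}{2}\E\bigl[p_s(X) h(X)\bigr].
\end{align*}
Thus $\E f(X,Y) - \E f(X,p_s(X)) = -\tfrac{1}{2}\,\widehat{h}(s)$, where $\widehat{h}(s) := \E[p_s(X)h(X)]$ is the Walsh--Fourier coefficient of $h$ on $\B^n$ with the uniform measure.

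Next I would apply Parseval's identity on the orthonormal basis $\{p_s\}_{s\subseteq[n]}$ of $L^2(\B^n, U_n)$: since $\E p_s(X)p_t(X)=\mathbf{1}(s=t)$, we have $\sum_{s\subseteq[n]}\widehat{h}(s)^2 = \E h(X)^2$. Therefore
\begin{align*}
\sum_{s\subseteq[n]} \bigl(\E f(X,Y) - \E f(X,p_s(X))\bigr)^2 = \tfrac{1}{4}\E\bigl[(f(X,1)-f(X,-1))^2\bigr].
\end{align*}

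Finally I would bound the right-hand side by $\E f^2(X,Y)$ using $(a-b)^2\le 2(a^2+b^2)$:
\begin{align*}
\tfrac{1}{4}\E\bigl[(f(X,1)-f(X,-1))^2\bigr] \le \tfrac{1}{2}\bigl(\E f(X,1)^2 + \E f(X,-1)^2\bigr) = \E f^2(X,Y),
\end{align*}
where the last equality uses the independence and uniformity of $Y$. This gives the claim. There is no real obstacle here: the only subtlety is packaging the conditioning on $Y$ cleanly so that Parseval on $\B^n$ can be invoked directly, which is handled by introducing $h=f(\cdot,1)-f(\cdot,-1)$.
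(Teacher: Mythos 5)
Your proof is correct and takes essentially the same route as the paper's: both reduce the left-hand side to $\tfrac14\sum_s \widehat{h}(s)^2$ for $h(x)=f(x,1)-f(x,\text{other})$, apply Parseval on the Walsh basis of $\B^n$, and finish with $(a-b)^2\le 2(a^2+b^2)$. The only cosmetic difference is that you invoke Parseval abstractly via Fourier coefficients while the paper carries out the orthogonality sum explicitly.
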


\begin{proof}
For each $x\in\B^n$, let $g(x)=f(x,1)-f(x,0)$.

\begin{align}
&\sum_{s\subseteq[n]} (\E[f(X,Y)]-\E[f(X,p_s(X))])^2\\
&=\sum_{s\subseteq[n]} \left(2^{-n-1}\sum_{x\in\B^n} (f(x,0)+f(x,1)-2f(x,p_s(x)))\right)^2\\
&=\sum_{s\subseteq[n]} \left(2^{-n-1}\sum_{x\in\B^n} g(x)(-1)^{p_s(x)}\right)^2   \label{pars1} \\
&=2^{-2n-2}\sum_{x_1,x_2\in\B^n,s\subseteq[n]}g(x_1)(-1)^{p_s(x_1)}\cdot g(x_2)(-1)^{p_s(x_2)}\\
&=2^{-2n-2}\sum_{x_1,x_2\in\B^n}g(x_1)g(x_2) \sum_{s\subseteq[n]}(-1)^{p_s(x_1)}(-1)^{p_s(x_2)}\\
&=2^{-2n-2}\sum_{x\in\B^n}2^n g^2(x) \label{pars2} \\
&= 2^{-n-2}\sum_{x\in\B^n} [f(x,1)-f(x,0)]^2  \\
&\le 2^{-n-1}\sum_{x\in\B^n} f^2(x,1)+f^2(x,0)\\
&=\E[f^2(X,Y)]
\end{align}
where we note that the equality from \eqref{pars1} to \eqref{pars2} is Parserval's identity for the Fourier-Walsh basis (here we used Boolean outputs for the parity functions). \end{proof}
Note that by the triangular inequality the above implies
\begin{align}
\Var_F \E_{X}f(X,F(X))\le 2^{-n} \E_{X,Y}f^2(X,Y).
\end{align}
As mentioned earlier, this is similar to Theorem 1 in \cite{ohad} that requires in addition the function to be the gradient of a 1-Lipschitz loss function.   



We also mention the following corollary of Lemma \ref{new-pred} that results from Cauchy-Schwarz.

\begin{corollary}\label{parityAverage}
Let $n>0$ and $f:\B^{n+1}\rightarrow \mathbb{R}$. Also, let $X$ be a random element of $\B^n$ and $Y$ be a random element of $\B$ independent of $X$. Then
\[\sum_{s\subseteq[n]} |E[f((X,Y))]-E[f((X,p_s(X)))]|\le 2^{n/2}\sqrt{E[f^2((X,Y))]}.\]
\end{corollary}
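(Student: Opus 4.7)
The plan is to derive this corollary as a straightforward consequence of Lemma \ref{new-pred} via the Cauchy--Schwarz inequality. Define, for each $s \subseteq [n]$, the quantity
\[a_s := \E f(X,Y) - \E f(X, p_s(X)).\]
Lemma \ref{new-pred} asserts that $\sum_{s \subseteq [n]} a_s^2 \le \E f^2(X,Y)$, so all that remains is to convert a bound on the sum of squares into a bound on the sum of absolute values.

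The first step is to apply Cauchy--Schwarz to the inner product between the vector $(|a_s|)_{s \subseteq [n]}$ and the all-ones vector indexed by subsets of $[n]$:
\[\sum_{s \subseteq [n]} |a_s| \le \sqrt{\sum_{s \subseteq [n]} a_s^2} \cdot \sqrt{\sum_{s \subseteq [n]} 1} = \sqrt{\sum_{s \subseteq [n]} a_s^2} \cdot 2^{n/2},\]
using that $|\{s : s \subseteq [n]\}| = 2^n$. The second step is to substitute the bound from Lemma \ref{new-pred} into the right-hand side:
\[\sum_{s \subseteq [n]} |a_s| \le 2^{n/2} \sqrt{\E f^2(X,Y)},\]
which is exactly the claimed inequality.

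Since both steps are routine (one invocation of Cauchy--Schwarz plus one substitution), there is essentially no obstacle to overcome: the corollary is a two-line consequence of the preceding lemma. The only thing worth double-checking is the cardinality count $|2^{[n]}| = 2^n$ rather than $2^{n+1}$, which matches the $2^{n/2}$ factor appearing in the statement of the corollary.
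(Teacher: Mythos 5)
Your proof is correct and is exactly the route the paper takes: the paper presents the corollary as an immediate consequence of Lemma \ref{new-pred} via Cauchy--Schwarz, which is precisely your two-step argument.
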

In other words, the expected value of any function on an input generated by a random parity function is approximately the same as the expected value of the function on a true random input. This in turn implies the following:

\begin{lemma}
Let $(f, g)$ be a neural net with $m$ edges, $B, \gamma,\sigma>0$, and $L:\mathbb{R}\rightarrow\mathbb{R}$ be a differentiable function. Also, let $\star$ be the uniform distribution on $\B^{n+1}$, and for each $s\subseteq [n]$, let $\rho_s$ be the probability distribution of $(X,p_s(X))$ when $X$ is chosen randomly from $\B^n$. Next, let $Q_\star$ be the probability distribution of the output of BoundedNoisyGradientDescentStep(f, g, $\star$, L, $\gamma$, $\delta$, B) when $\delta\sim\mathcal{N}(0,\sigma^2 I)$ and $Q_s$ be the probability distribution of the output of BoundedNoisyGradientDescentStep(f, g, $\rho_s$, L, $\gamma$, $\delta$, B) when $\delta\sim\mathcal{N}(0,\sigma^2 I)$ for each $s\subseteq[n]$. Then
\[\sum_{s\subseteq[n]} ||Q_\star-Q_s||_1\le \gamma \sqrt{\frac{1}{\pi\sigma^2}\sum_{(x,y)\in \B^{n+1},(v,v')\in E(g)} \left(\frac{\partial [L(eval_{(f,g)}(x)-y)]}{\partial w_{v,v'}}\right)^2}.\]

\[\sum_{s\subseteq[n]} ||Q_\star^{(T)}-Q^{(T)}_s||_1\le \gamma B \sqrt{\frac{m 2^{n+1}}{\pi\sigma^2}}\]


\end{lemma}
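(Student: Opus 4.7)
The plan is to view $Q_\star$ and $Q_s$ as multivariate Gaussians on $\mR^{|E(g)|}$ with common covariance $\sigma^2 I$ and then combine (i) the standard Gaussian $L^1$ comparison, (ii) one Cauchy--Schwarz over $s\subseteq[n]$, and (iii) Lemma \ref{new-pred}, applied coordinate by coordinate.

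First I would write the mean vectors $\mu_\star,\mu_s\in\mR^{E(g)}$ of $Q_\star,Q_s$ explicitly: only the averaged clipped-gradient term distinguishes them, so
\begin{equation*}
(\mu_\star-\mu_s)_{(v,v')} \;=\; \gamma\,\bigl(\E_{(X,Y)\sim\rho_s}[\Psi_B(\partial L / \partial w_{v,v'})] - \E_{(X,Y)\sim\star}[\Psi_B(\partial L / \partial w_{v,v'})]\bigr).
\end{equation*}
Projecting onto the unit vector along $\mu_\star-\mu_s$ reduces the comparison of two isotropic Gaussians to the one-dimensional case, yielding $\|Q_\star-Q_s\|_1 \le \sqrt{2/\pi}\,\|\mu_\star-\mu_s\|_2/\sigma$.

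Next I sum over $s$ and apply Cauchy--Schwarz, $\sum_{s\subseteq[n]}\|\mu_\star-\mu_s\|_2 \le \sqrt{2^n\sum_s\|\mu_\star-\mu_s\|_2^2}$, then expand the inner norm edge by edge. For each $(v,v')$, I apply Lemma \ref{new-pred} to $h_{v,v'}(x,y)=\Psi_B(\partial L(eval_{(f,g)}(x)-y)/\partial w_{v,v'})$, obtaining $\sum_s(\E_\star h_{v,v'}-\E_{\rho_s} h_{v,v'})^2 \le \E_\star[h_{v,v'}^2]$. Using $|\Psi_B(z)|\le|z|$ bounds this by $2^{-(n+1)}\sum_{(x,y)\in\B^{n+1}}(\partial L/\partial w_{v,v'})^2$. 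Summing over edges, the $2^n$ from Cauchy--Schwarz cancels against the $2^{-(n+1)}$ from averaging, leaving a $\gamma^2/2$; combined with the Gaussian factor $\sqrt{2/\pi}/\sigma$ this produces exactly $\gamma/(\sigma\sqrt{\pi})\cdot\sqrt{\sum(\partial L/\partial w)^2}$, which is the first claimed bound.

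For the second inequality I would rerun the same three steps but replace the per-input bound $\Psi_B^2\le(\partial L/\partial w)^2$ with the uniform bound $\Psi_B^2\le B^2$; this yields $\sum_s\|\mu_\star-\mu_s\|_2^2\le\gamma^2 m B^2$ without averaging over inputs, and the Gaussian and Cauchy--Schwarz steps then deliver $\gamma B\sqrt{2^{n+1} m/(\pi\sigma^2)}$. I do not expect a serious obstacle: the substantive analytic input is Lemma \ref{new-pred}, which is already established, and the rest is constant-tracking. The one thing requiring care is matching the $2^n$ from Cauchy--Schwarz against the Parseval-type $2^{-(n+1)}$ arising from Lemma \ref{new-pred}. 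The only genuine ambiguity is the notation $Q^{(T)}$ in the second display---since the bound does not depend on $T$, the natural reading is that this refers to the same single-step distribution, with the superscript simply recording that it is the coarser, gradient-free form intended for iteration downstream.
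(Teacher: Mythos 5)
Your proposal is correct and follows essentially the same route as the paper: the Gaussian $L^1$ estimate $\|Q_\star-Q_s\|_1\le\sqrt{2/\pi}\,\|\mu_\star-\mu_s\|_2/\sigma$, then Cauchy--Schwarz over $s\subseteq[n]$, then Lemma \ref{new-pred} applied coordinate by coordinate to the clipped-gradient function, with constants matching. You also correctly read the $Q^{(T)}$ superscript in the second display as a slip of notation for the single-step quantity.
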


\begin{proof}
Let $w^{(\star)}\in\mathbb{R}^m$ be the edge weights of the graph output by BoundedNoisyGradientDescentStep $(f, g, \star, L, \gamma, 0, B)$ and $w^{(s)}\in\mathbb{R}^m$ be the edge weights of the graph output by BoundedNoisyGradientDescentStep $(f, g, \rho_s, L, \gamma, 0, B)$ for each $s$. Observe that
\begin{align*}
&\sum_{s\subseteq[n]} ||w^{(s)}-w^{(\star)}||_2\\
&\le 2^{n/2}\sqrt{\sum_{s\subseteq[n]} ||w^{(s)}-w^{(\star)}||_2^2}\\
&\le 2^{n/2}\sqrt{\sum_{(v,v')\in E(g)} \sum_{s\subseteq[n]} \left(w_{(v,v')}^{(s)}-w_{(v,v')}^{(\star)}\right)^2}\\
&\le 2^{n/2} \sqrt{\sum_{(v,v')\in E(g)} 2^{-n-1}\sum_{(x,y)\in \B^{n+1}} \gamma^2 \Psi_B^2\left( \frac{\partial [L(eval_{(f,g)}(x)-y)]}{\partial w_{v,v'}}\right)}\\
&\le 2^{n/2} \sqrt{\sum_{(v,v')\in E(g)} 2^{-n-1}\sum_{(x,y)\in \B^{n+1}} \gamma^2 B^2}\\
&= \gamma B \sqrt{m 2^n}
\end{align*}
where the second to last inequality follows by applying the previous lemma to the formula in BoundedNoisyGradientDescentStep for the changes in edge weight. Next, observe that $Q_\star$ is a Gaussian distribution with mean $w^{(\star)}$ and covariance $\sigma^2 I$. Similarly, $Q_s$ is a Gaussian distribution with mean $w^{(s)}$ and covariance $\sigma^2 I$ for each $s$. As such,

\begin{align*}
&||Q_\star-Q_s||_1\\
&=2-2\int_{-\infty}^{\infty} \frac{1}{\sqrt{2\pi\sigma^2}}\min(e^{-x^2/2\sigma^2},e^{-(x-||w^{(\star)}-w^{(s)}||_2)^2/2\sigma^2}) dx\\
&=2-2\int_{-\infty}^{||w^{(\star)}-w^{(s)}||_2/2} \frac{1}{\sqrt{2\pi\sigma^2}} e^{-(x-||w^{(\star)}-w^{(s)}||_2)^2/2\sigma^2} dx-2\int_{||w^{(\star)}-w^{(s)}||_2/2}^{\infty} \frac{1}{\sqrt{2\pi\sigma^2}} e^{-x^2/2\sigma^2} dx\\
&=2-2\int_{-\infty}^{-||w^{(\star)}-w^{(s)}||_2/2} \frac{1}{\sqrt{2\pi\sigma^2}} e^{-x^2/2\sigma^2} dx-2\int_{||w^{(\star)}-w^{(s)}||_2/2}^{\infty} \frac{1}{\sqrt{2\pi\sigma^2}} e^{-x^2/2\sigma^2} dx\\
&=2\int_{-||w^{(\star)}-w^{(s)}||_2/2}^{||w^{(\star)}-w^{(s)}||_2/2} \frac{1}{\sqrt{2\pi\sigma^2}} e^{-x^2/2\sigma^2} dx\\
&\le 2||w^{(\star)}-w^{(s)}||_2/\sqrt{2\pi\sigma^2}
\end{align*}
for all $s$. 

Therefore,
\begin{align*}
&\sum_{s\subseteq[n]} ||Q_\star-Q_s||_1\\
&\le \sum_{s\subseteq[n]} 2||w^{(s)}-w^{(\star)}||_2/\sqrt{2\pi\sigma^2}\\
&\le \gamma B \sqrt{m 2^{n+1}/\pi\sigma^2}
\end{align*}
\end{proof}

This allows us to prove the following.

\begin{corollary}
Let $(f, g)$ be a neural net with $m$ edges, $B, \gamma,\sigma, T>0$, and $L:\mathbb{R}\rightarrow\mathbb{R}$ be a differentiable function. Also, let $\star$ be the uniform distribution on $\B^{n+1}$, and for each $s\subseteq [n]$, let $\rho_s$ be the probability distribution of $(X,p_s(X))$ when $X$ is chosen randomly from $\B^n$. Next, for each $0\le t\le T$, let $Q_\star^{(t)}$ be the probability distribution of the output of BoundedNoisyGradientDescentAlgorithm(f, g, $\star$, L, $\gamma$, $\mathcal{N}(0,\sigma^2 I)$, $B$, t) and $Q_s^{(t)}$ be the probability distribution of the output of BoundedNoisyGradientDescentAlgorithm(f, g, $\rho_s$, L, $\gamma$,$\mathcal{N}(0,\sigma^2 I)$,$B$, t) for each $s\subseteq[n]$. Then
\[\sum_{s\subseteq[n]} ||Q_\star^{(T)}-Q^{(T)}_s||_1\le \gamma B T \sqrt{\frac{m 2^{n+1}}{\pi\sigma^2}}\]
\end{corollary}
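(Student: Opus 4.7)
The plan is to reduce the $T$-step bound to $T$ applications of the previous one-step lemma via a hybrid argument. Fix $s\subseteq[n]$ and couple the two algorithms by using the same noise vectors $\delta^{(0)},\ldots,\delta^{(T-1)}$. Define, for $0\le k\le T$, the hybrid distribution $H_s^{(k)}$ as the law of the net produced by running BoundedNoisyGradientDescentAlgorithm for $T$ steps, using $\rho_s$ in the first $k$ steps and $\star$ in the remaining $T-k$ steps. By construction $H_s^{(0)}=Q_\star^{(T)}$ and $H_s^{(T)}=Q_s^{(T)}$, so the triangle inequality yields
\[
\|Q_\star^{(T)}-Q_s^{(T)}\|_1 \;\le\; \sum_{k=1}^{T}\|H_s^{(k-1)}-H_s^{(k)}\|_1.
\]

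Next I would argue that each term in this sum is controlled by a single-step comparison averaged over the state of the net at time $k-1$. The distributions $H_s^{(k-1)}$ and $H_s^{(k)}$ are both obtained by first running $k-1$ steps with $\rho_s$ (producing an intermediate random net with some distribution $\mu_s^{(k-1)}$), then performing one step using $\star$ or $\rho_s$ respectively, and finally running $T-k$ further steps with $\star$. Since the post-step procedure is identical and $L_1$ distance is non-expansive under any Markov kernel (data processing), and since $L_1$ distance is convex in each argument,
\[
\|H_s^{(k-1)}-H_s^{(k)}\|_1 \;\le\; \E_{g'\sim \mu_s^{(k-1)}}\bigl\|Q_{\star,g'}^{(1)}-Q_{s,g'}^{(1)}\bigr\|_1,
\]
where $Q_{\star,g'}^{(1)}$ and $Q_{s,g'}^{(1)}$ denote the one-step output distributions when starting from the fixed net $(f,g')$.

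Now sum over $s$ and interchange the sums:
\[
\sum_{s\subseteq[n]}\|Q_\star^{(T)}-Q_s^{(T)}\|_1
\;\le\; \sum_{k=1}^T \E_{g'\sim\mu_s^{(k-1)}}\sum_{s\subseteq[n]}\bigl\|Q_{\star,g'}^{(1)}-Q_{s,g'}^{(1)}\bigr\|_1.
\]
The previous lemma, applied to the (arbitrary but fixed) net $(f,g')$, gives the pointwise bound
\[
\sum_{s\subseteq[n]}\bigl\|Q_{\star,g'}^{(1)}-Q_{s,g'}^{(1)}\bigr\|_1 \;\le\; \gamma B\sqrt{\tfrac{m\,2^{n+1}}{\pi\sigma^2}},
\]
independently of $g'$ (this is the second displayed inequality in the statement of that lemma). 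Since the bound does not depend on $g'$, the expectation over $\mu_s^{(k-1)}$ disappears, and summing over $k=1,\ldots,T$ yields the desired inequality with the extra factor $T$.

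The only subtle point is the interchange of $\sum_s$ with the expectation: strictly speaking $\mu_s^{(k-1)}$ depends on $s$, so one must either move the sum over $s$ inside before taking expectations or, equivalently, apply the pointwise single-step bound for each realization of $g'$ separately. Both routes work because the single-step bound is uniform in the starting net. No other obstacle arises, and no additional properties of the noise beyond what was used in the one-step lemma are needed.
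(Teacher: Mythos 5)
Your hybrid is oriented in the wrong direction, and this is a genuine gap rather than a bookkeeping nuisance. You define $H_s^{(k)}$ with $\rho_s$ in the first $k$ steps, so the intermediate net after step $k-1$ has law $\mu_s^{(k-1)}=Q_s^{(k-1)}$, which depends on $s$ because those first samples were drawn from $p_s$. After summing over $s$ you are therefore confronted with $\sum_{s}\E_{g'\sim Q_s^{(k-1)}}\bigl\|Q_{\star,g'}^{(1)}-Q_{s,g'}^{(1)}\bigr\|_1$, a sum in which each term is averaged against a \emph{different} distribution. Your closing remark that ``both routes work because the single-step bound is uniform in the starting net'' misreads what that lemma delivers: it bounds the \emph{sum over $s$} for one fixed net $g'$, exploiting near-orthogonality so that most summands are exponentially small. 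For any individual $s$, the term $\|Q_{\star,g'}^{(1)}-Q_{s,g'}^{(1)}\|_1$ can be as large as $2$ at an adversarial $g'$. Since $Q_s^{(k-1)}$ arises from training on $p_s$, it is precisely the distribution one would worry about concentrating on nets correlated with $p_s$, i.e.\ on the bad $g'$ for that $s$. In the worst case $\sum_s\E_{Q_s^{(k-1)}}[\cdot]$ is of order $2^{n+1}$, which overwhelms the target $\gamma B\sqrt{m\,2^{n+1}/\pi\sigma^2}$. Trying to patch this by inductively relating $Q_s^{(k-1)}$ to $Q_\star^{(k-1)}$ brings in an extra accumulated error at every $k$ and does not recover a linear-in-$T$ bound.

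The fix is to reverse the hybrid: let $H_s^{(k)}$ use $\star$ for the first $k$ steps and $\rho_s$ for the remaining $T-k$, so $H_s^{(0)}=Q_s^{(T)}$ and $H_s^{(T)}=Q_\star^{(T)}$. Now $H_s^{(k-1)}$ and $H_s^{(k)}$ differ only at step $k$; before it both run $k-1$ steps of $\star$, so the pre-switch net has law $Q_\star^{(k-1)}$, independent of $s$, and the Fubini interchange with $\sum_s$ is legitimate; after it both run the same $(T-k)$-step $\rho_s$-kernel, so data processing still applies. This yields $\sum_s\|H_s^{(k-1)}-H_s^{(k)}\|_1\le\E_{g'\sim Q_\star^{(k-1)}}\sum_s\|Q_{\star,g'}^{(1)}-Q_{s,g'}^{(1)}\|_1\le\gamma B\sqrt{m\,2^{n+1}/\pi\sigma^2}$, and summing $k=1,\dots,T$ gives the claim. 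This reversed hybrid, when written as a recursion in $t$ rather than a telescoping sum, is exactly the paper's proof.
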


\begin{proof}
Clearly, $Q_s^{(0)}=Q_\star^{(0)}$ for all $s$. Now, for each $s\subseteq [n]$ and $t>0$, let $Q_{\star s}^{(t)}$ be the probability distribution of the output of BoundedNoisyGradientDescentStep(f, $G'_t$, $\rho_s$, L, $\gamma$, $\mathcal{N}(0,\sigma^2 I)$, $B$), where $G'_t\sim Q_\star^{(t-1)}$. Next, observe that $Q_\star^{(t)}$ is the probability distribution of the output of BoundedNoisyGradientDescentStep(f, $G'_t$, $\star$, L, $\gamma$, $\mathcal{N}(0,\sigma^2 I)$, $B$). So, by the previous lemma, 
\[\sum_{s\subseteq[n]} ||Q_{\star}^{(t)}-Q_{\star s}^{(t)}||_1\le \gamma B \sqrt{\frac{m 2^{n+1}}{\pi\sigma^2}}\]

Also, for each $s$ and $t$, $Q_s^{(t)}$ is the probability distribution of the output of BoundedNoisyGradientDescentStep(f, $G''_t$, $\rho_s$, L, $\gamma$, $\mathcal{N}(0,\sigma^2 I)$, $B$), where $G''_t\sim Q_s^{(t-1)}$. So, $||Q_{\star s}^{(t)}-Q_s^{(t)}||_1\le ||Q_{\star}^{(t-1)}-Q_s^{(t-1)}||_1$. Thus,
\[\sum_{s\subseteq[n]} ||Q_{\star}^{(t)}-Q_{s}^{(t)}||_1\le \gamma B \sqrt{\frac{m 2^{n+1}}{\pi\sigma^2}}+\sum_{s\subseteq[n]} ||Q_{\star}^{(t-1)}-Q_s^{(t-1)}||_1\]
by the triangle inequality. The desired result follows by induction.
\end{proof}

This in turn implies the following elaboration of Theorem \ref{thm2}.

\begin{theorem}
Let $(f, g)$ be a neural net with $m$ edges, $B, \gamma,\sigma, T>0$, and $L:\mathbb{R}\rightarrow\mathbb{R}$ be a differentiable function. Also, let $\star$ be the uniform distribution on $\B^{n+1}$, and for each $s\subseteq [n]$, let $\rho_s$ be the probability distribution of $(X,p_s(X))$ when $X$ is chosen randomly from $\B^n$. Then for a random $S\subseteq[n]$ and $X\in\B^n$ the probability that the net output by BoundedNoisyGradientDescentAlgorithm(f, g, $\rho_S$, L, $\gamma$,$\mathcal{N}(0,\sigma^2 I)$, B,T) computes $p_S(X)$ correctly is at most 
\[1/2+\gamma BT\sqrt{\frac{m 2^{-n}}{2\pi\sigma^2}}\]
\end{theorem}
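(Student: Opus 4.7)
The plan is to reduce the theorem to the previous corollary, which bounds $\sum_{s\subseteq[n]}\|Q_\star^{(T)}-Q_s^{(T)}\|_1$, by first showing that under the ``null'' training distribution $\star$ the resulting net's prediction on a fresh input is essentially uncorrelated with $p_S(X)$, and then transporting this null statement to the test distribution via total variation.

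First I would set up the null baseline. Fix any deterministic post-processing rule $\mathrm{pred}(\cdot,\cdot)$ turning a net and a test input into a bit, let $G_S\sim Q_S^{(T)}$ and $G_\star\sim Q_\star^{(T)}$, and let $S$ be uniform on $2^{[n]}$ and $X$ be an independent uniform element of $\B^n$. Set $P_{\mathrm{test}}=\pp[\mathrm{pred}(G_S,X)=p_S(X)]$ and $P_{\mathrm{null}}=\pp[\mathrm{pred}(G_\star,X)=p_S(X)]$ with $G_\star$ independent of $(S,X)$. Since the training samples from $\star$ carry labels independent of $S$, $G_\star$ is independent of $S$; conditional on $(G_\star,X)$ the prediction is a fixed bit, while $S\mapsto p_S(X)$ is a balanced function of $S$ unless $X$ happens to be the all-$(+1)$ vector. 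Averaging over $S$ therefore gives $P_{\mathrm{null}}=1/2$ up to a $2^{-n}$ correction which is dominated by the final bound.

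Next I would compare the two probabilities via total variation. Because the joint law of $(S,X)$ is identical in the two settings and only the trained net differs,
\[|P_{\mathrm{test}}-P_{\mathrm{null}}|\;\le\;\E_S\,\mathrm{TV}(Q_S^{(T)},Q_\star^{(T)})\;=\;\tfrac12\cdot 2^{-n}\sum_{s\subseteq[n]}\|Q_s^{(T)}-Q_\star^{(T)}\|_1.\]
Plugging in the previous corollary yields
\[|P_{\mathrm{test}}-P_{\mathrm{null}}|\;\le\;\tfrac12\cdot 2^{-n}\cdot\gamma BT\sqrt{\frac{m\,2^{n+1}}{\pi\sigma^2}}\;=\;\gamma BT\sqrt{\frac{m\,2^{-n}}{2\pi\sigma^2}},\]
and combining this with $P_{\mathrm{null}}=1/2$ gives the claimed inequality.

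The main obstacle I expect is conceptual rather than computational: one must verify cleanly that when the net is trained on i.i.d.\ uniform labels its final weight distribution is genuinely independent of every individual target parity $p_S$, so that on a fresh random input the prediction really has probability exactly $1/2$ of matching $p_S(X)$ (apart from the negligible $X=\mathbf{1}$ event). Once that null baseline is pinned down, the rest is a standard coupling argument turning averaged total variation into a correctness-probability gap, finished off by the previous corollary.
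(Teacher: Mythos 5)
Your proof is correct and essentially identical to the paper's argument: both bound the test accuracy by the null accuracy ($\approx 1/2$, since the $\star$-trained net is independent of $S$) plus the averaged total-variation distance $\E_S\,\mathrm{TV}(Q_S^{(T)},Q_\star^{(T)})$, which the preceding corollary controls. The paper phrases this as a sum $\sum_{s,x}\pp[\text{correct}]$ divided by $2^{2n}$ rather than as an expectation, and silently absorbs the degenerate all-zeros input into the $2^{n-1}$ count, but the computation and the resulting bound coincide with yours.
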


\begin{proof}
Let $Q_\star$ be the probability distribution of the output of BoundedNoisyGradientDescentAlgorithm(f, g, $\star$, L, $\gamma$, $\mathcal{N}(0,\sigma^2 I)$, B, T) and $Q_s$ be the probability distribution of the output of BoundedNoisyGradientDescentAlgorithm(f, g, $\rho_s$, L, $\gamma$,$\mathcal{N}(0,\sigma^2 I)$, B, T) for each $s\subseteq[n]$. By the previous corollary, we know that
\[\sum_{s\subseteq[n]} ||Q_\star-Q_s||_1\le \gamma B T \sqrt{\frac{m 2^{n+1}}{\pi\sigma^2}}\]

Now, let $G^\star\sim Q_\star$ and $G^s\sim Q_s$ for each $s$. Also, for each $(x,y)\in\B^{n+1}$, let $R_{(x,y)}$ be the set of all neural nets that output $y$ when $x$ is input to them. For any $s\subseteq[n]$ and $x\in\B^n$, we have that
\begin{align*}
P[eval_{(f,G^s)}(x)=p_s(x)]&= P[G^s\in R_{(x,p_s(x)}]\\
&\le P[G^\star\in R_{(x,p_s(x))}]+||Q_\star-Q_s||_1/2
\end{align*}
That means that
\begin{align*}
&\sum_{s\subseteq[n],x\in\B^n} P[eval_{(f,G^s)}(x)=p_s(x)]\\
&\le \sum_{s\subseteq[n],x\in\B^n} P[G^\star\in R_{(x,\rho_s(x))}]+||Q_\star-Q_s||_1/2\\
&= \sum_{x\in\B^n} 2^{n-1}+2^n\sum_{s\subseteq[n]} ||Q_\star-Q_s||_1/2\\
&= 2^{2n-1}+2^n \gamma B T \sqrt{\frac{m 2^{n-1}}{\pi\sigma^2}}
\end{align*}
Dividing both sides by $2^{2n}$ yields the desired conclusion.
\end{proof}

\subsection{Proof of Theorem \ref{thm2'}}
We prove here an elaboration of Theorem \ref{thm2'}.
\begin{theorem}
Let $P_{\X}$ with $\X=\mathcal{D}^n$ for some set $\mathcal{D}$ and $P_{\F}$ such that the output distribution is balanced,\footnote{Non-balanced cases can be handled by modifying definitions appropriately.} i.e., $\pp\{F(X)=0\}=\pp\{F(X)=1\}+o_n(1)$ when $(X,F) \sim P_{\X^n} \times P_{\F}$. Let $\Pi:=\mathrm{Pred}( P_{\X}, P_{\F} )$ and $F \sim P_{\F}$.

For each $n > 0$, take a neural net of size $|E|$, with any differentiable non-linearity and any initialization of the weights $W^{(0)}$, and train it with gradient descent with learning rate $\gamma$, any differentiable loss function, gradients computed on the population distribution $P_\X$ with labels from $F$, an overflow range of $A$, additive Gaussian noise of variance $\sigma^2$, and $S$ steps, i.e.,        
\begin{align}
W^{(t)}= W^{(t-1)} - \gamma \left[\nabla \E_{X \sim P_\X} L(W^{(t-1)}(X),F(X)) \right]_A + Z^{(t)}, \quad t=1,\dots, S,
\end{align}
where $\{Z^{(t)}\}_{t \in [S]}$ are i.i.d.\ $\mathcal{N}(0,\sigma^2)$. 
Then,
\begin{align}
\pp\{ W^{(S)}(X)=F(X)\}&=1/2+\mathrm{grapes},\\
\mathrm{grapes}&=\gamma \frac{1}{\sigma} A \Pi^{1/4} |E|^{1/2} S. 
\end{align}
\end{theorem}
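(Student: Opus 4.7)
The strategy follows the program outlined in the proof techniques section: introduce a null model in which labels are drawn independently and uniformly, show that the trajectory of weights under the true model is indistinguishable from the null trajectory in total variation, and then convert this into an accuracy bound. Concretely, let $\tilde F$ be a ``null oracle'' that on each query $X$ returns an independent uniform bit $\tilde Y$, and let $\tilde W^{(t)}$ denote the weight vector produced by running the identical noisy GD procedure using $\tilde F$ in place of $F$. Because the null labels carry no information about any function and the output distribution of $F$ is balanced, $\pp\{\tilde W^{(S)}(X)=F(X)\}=1/2+o_n(1)$, and a bound of the form $\pp\{W^{(S)}(X)=F(X)\}\le 1/2+\mathrm{TV}(W^{(S)},\tilde W^{(S)})+o_n(1)$ will be immediate from the definition of total variation.

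\textbf{One-step Gaussian coupling.} The plan is to control the per-step discrepancy. Conditioned on a common weight vector $w$ at time $t-1$, both test and null models produce a Gaussian-perturbed update with the same variance $\sigma^2 I$ and means differing by $\gamma\,\Delta_F(w)$, where
\begin{equation*}
\Delta_F(w):=\bigl[\nabla\E_X L(w(X),F(X))\bigr]_A-\bigl[\nabla\E_{X,\tilde Y}L(w(X),\tilde Y)\bigr]_A.
\end{equation*}
A standard computation (identical to the Gaussian coupling used for the parity case) gives $\mathrm{TV}\bigl(\mathcal N(\mu_1,\sigma^2 I),\mathcal N(\mu_2,\sigma^2 I)\bigr)\le \|\mu_1-\mu_2\|_2/(\sigma\sqrt{2\pi})$. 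So the single-step TV between the two one-step updates, conditional on $w$, is at most $\gamma\|\Delta_F(w)\|_2/(\sigma\sqrt{2\pi})$.

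\textbf{Averaging over $F$ via cross-predictability.} The key estimate is that $\E_F\|\Delta_F(w)\|_2$ is small. Each coordinate $(v,v')$ of $\Delta_F(w)$ is a difference of the form $\E_X h_{v,v'}(X,F(X))-\E_{X,\tilde Y}h_{v,v'}(X,\tilde Y)$ where $h_{v,v'}$ is the clipped partial-derivative function, bounded by $A$ in absolute value. Applying the variance bound in Lemma \ref{lemma_pred} (valid for any bounded measurable $g$ after rescaling by $A$; this extension is routine since the inner-product expansion only uses boundedness) gives
\begin{equation*}
\E_F\bigl(\E_X h_{v,v'}(X,F(X))-\E_{X,\tilde Y}h_{v,v'}(X,\tilde Y)\bigr)^2\le A^2\sqrt{\mathrm{Pred}(P_\X,P_\F)}.
\end{equation*}
Summing over the $|E|$ edges and applying Jensen gives $\E_F\|\Delta_F(w)\|_2\le A\sqrt{|E|}\,\Pi^{1/4}$, uniformly in $w$. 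This is the main workhorse estimate.

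\textbf{Telescoping over $S$ steps.} I will use the standard hybrid (triangle) argument. Define intermediate measures $Q_{\star F}^{(t)}$ that run one test step starting from the null trajectory's state at time $t-1$; then $\mathrm{TV}(Q_\star^{(t)},Q_{\star F}^{(t)})$ is controlled by the per-step Gaussian bound above, and $\mathrm{TV}(Q_{\star F}^{(t)},Q_F^{(t)})\le\mathrm{TV}(Q_\star^{(t-1)},Q_F^{(t-1)})$ by the data-processing inequality (same update map applied to both inputs). Telescoping and averaging over $F$ yields
\begin{equation*}
\E_F\,\mathrm{TV}\bigl(Q_\star^{(S)},Q_F^{(S)}\bigr)\le \frac{\gamma S}{\sigma\sqrt{2\pi}}\cdot A\sqrt{|E|}\,\Pi^{1/4},
\end{equation*}
which combined with the null-model accuracy of $1/2+o_n(1)$ gives the stated bound, up to the absorbed constant $1/\sqrt{2\pi}$.

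\textbf{Expected main obstacle.} The cleanest part is the Gaussian coupling; the routine but delicate part is the extension of Lemma \ref{lemma_pred} to unbounded-valued (clipped but not $\{0,1\}$-valued) $g$ with the correct $A^2\sqrt{\Pi}$ dependence, since the original statement is for indicator functions. The most technically fragile step is the telescoping: one must verify that the data-processing step genuinely applies even though the gradient map depends on $F$, which is handled by conditioning on the noise realization and using that the deterministic map from $(w,\text{labels})$ to the next $w$ is shared. Once that is set up, all other steps are standard.
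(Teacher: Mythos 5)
Your proposal matches the paper's actual proof in all its essentials: the null model with uniformly random labels, the per-step Gaussian total-variation bound $\mathrm{TV}\le\|\mu_1-\mu_2\|_2/(\sigma\sqrt{2\pi})$, the cross-predictability control on $\E_F\|w_F-w_\star\|_2^2$, and the hybrid/data-processing telescoping over $S$ steps. The paper's multi-step argument introduces precisely your intermediate law (its $Q^{(t-1)}_{\star,F}$, one $F$-step applied to the null state), uses triangle inequality to split, bounds one leg by data processing (both arms receive the same $F$-dependent map, so the inequality holds pointwise in $F$) and the other by the one-step Gaussian estimate, exactly as you describe.

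The only place where you diverge in mechanics is the ``workhorse estimate.'' You propose extending Lemma \ref{lemma_pred} from $\{0,1\}$-valued $g$ to bounded real-valued $g$ by a rescaling argument. Be a bit careful here: the lemma as stated is genuinely an indicator-function result (its RHS $\min_i\pp\{g=i\}$ has no meaning for real-valued $g$), so you cannot literally rescale it — you would instead either redo the tensorized Cauchy-Schwarz for real-valued $g$, or use a layer-cake decomposition over level sets (the route taken by the commented-out Lemma \ref{new-pred2} in the source, which gives $2A\sqrt{\Pi}\,\E|\psi|$). The paper's live proof does the former: it directly writes each coordinate of the mean shift as $\langle g,h_F\rangle_{P_\Z}$ with $h_F(x,y)=\pm1$ according to whether $F(x)\ne y$, applies $\langle g^{\otimes 2},\E_F h_F^{\otimes 2}\rangle\le\|g\|_{P_\Z}^2(\E_{F,F'}\langle h_F,h_{F'}\rangle^2)^{1/2}=\|g\|_{P_\Z}^2\Pi^{1/2}$, and then bounds $\|g\|_{P_\Z}^2\le A^2$ from the overflow range. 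This gives the same $A^2\Pi^{1/2}$ per edge that you need, without the ``extension'' detour, and is the cleanest way to write what you intend. With that substitution, your outline is the paper's proof.
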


Note that this theorem uses the BoundedNoisyGradientDescentAlgorithm, we simply re-wrote it to make the statement self-contained, using $W^{(t)}(X)$ as the evaluation of the neural net with weights $W^{(t)}$ on an input $X$. 

\begin{proof}



For $t=1,\dots, S$ and $H \in \{F,\star\}$, define 
\begin{align}
W_H^{(t)}= W_H^{(t-1)} - \gamma \left[\nabla \E_{(X,Y) \sim \rho_H} L(W_H^{(t-1)}(X),Y) \right]_A + Z^{(t)},       
\end{align}
where 
\begin{align}
\rho_H(x,y)=
\begin{cases}
P_\X(x) U_\Y(y) & \text{ if } H= \star,\\
P_\X(x) \delta_{F(X)}(y) & \text{ if } H= F,
\end{cases}
\end{align}
and let $Q_H^{(t)}$ be the distribution of $W_H^{(t)}$. 


{\it One-step bound.} Denote $Q_H:=Q_H^{(1)}$ and $w_H:=\gamma \left[\nabla \E_{(X,Y) \sim \rho_H} L(W^{0}(X),Y) \right]_A$, $H \in \{F,\star\}$. We have
\begin{align}
 d( Q_{F} , Q_{\star} )_{TV} = 1 - 2 P_e(\rho_F,\rho_\star)  
\end{align}
where $P_e(\rho_F,\rho_\star)$ is the probability of error of the optimal (MAP) test for the hypothesis test between $\rho_F$ and $\star$ with equiprobable priors, i.e., 
\begin{align}
P_e(\rho_F,\rho_\star) =  \pp \{ N_\sigma  \ge \| w_F - w_\star \|_2/2 \}
\end{align}
and therefore 
\begin{align}
 d( Q_{F} , Q_{\star} )_{TV} &= \pp \{ N_\sigma  \in  [-\| w_F - w_\star \|_2/2, \| w_F - w_\star \|_2/2] \}\\
 &\le \frac{1}{\sqrt{2\pi\sigma^2}} \| w_F - w_\star \|_2.
\end{align}
Using Cauchy-Schwarz and previous inequality, we have
\begin{align}
 \E_F  d( Q_{F} , Q_{\star} )_{TV}
& \le (\E_F  d( Q_{F} , Q_{\star} )_{TV}^2 )^{1/2}\\
& \le  \frac{1}{\sqrt{2\pi\sigma^2}} (\E_F \| w_F - w_\star \|_2^2   )^{1/2}.\label{tv2}
\end{align}
Let us focus now on the term
\begin{align}
\E_F \| w_F - w_\star \|_2^2  &=
\sum_{(u,v) \in  E(g)} \E_F  (w_F(u,v) - w_\star(u,v) )^2.\label{f1}
\end{align}
Note that 
\begin{align}
 \E_F  (w_F(u,v) - w_\star(u,v) )^2 &= 
  \E_F  (\E_{(X,Y) \sim \rho_F} \psi_{u,v}(X,Y) -\E_{(X,Y) \sim \rho_\star} \psi_{u,v}(X,Y)  )^2 
\end{align}
where 
\begin{align}
\psi_{u,v}(x,y)&:=\gamma \left[ \frac{\partial  L(eval_{(\phi,g)}(x)-y)}{\partial w_{u,v}} \right]_A.
\end{align}
We have  
\begin{align}
& \E_F  (\E_{(X,Y) \sim \rho_F} \psi_{u,v}(X,Y) -\E_{(X,Y) \sim \rho_\star} \psi_{u,v}(X,Y)  )^2\\
 &=  \E_F  \left( \E_{(X,Y) \sim \rho_\star} \psi_{u,v}(X,Y)\left(1- \frac{P_{\X}(X) \delta_{F(X)}(Y)}{P_{\X}(X) U_\Y(Y)}\right)  \right)^2 \\
 &=  \E_F \left( \E_{(X,Y) \sim \rho_\star} \psi_{u,v}(X,Y) \left(1- 2\delta_{F(X)}(Y)\right)  \right)^2\\ 
 &=  \E_F  \left( \E_Z g(Z) h_F(Z)  \right)^2
\end{align}
where $Z:=(X,Y) \sim P_\Z := P_\X \times U_\Y$, $g(Z):=\psi_{u,v}(X,Y)$, and $h_F(Z)=1$ if $F(X)\ne Y$ and $h_F(Z)=-1$ otherwise. Therefore, 
\begin{align}
  \E_F  \left( \E_Z g(Z) h_F(Z)  \right)^2 &= \E_F  \langle g  ,h_F   \rangle_{P_\Z}^2  \\
&= \E_F  \langle g^{\otimes 2} ,h_F^{\otimes 2}  \rangle_{P_\Z}  \\
&=  \langle g^{\otimes 2} ,\E_F  h_F^{\otimes 2}  \rangle_{P_\Z}  \\ 
&\le  \| g^{\otimes 2} \|_{P_\Z}  \| \E_F  h_F^{\otimes 2}  \|_{P_\Z}  \\ 
&= \| g  \|_{P_\Z}^2  \langle \E_F  h_F^{\otimes 2}, \E_{F'}  h_{F'}^{\otimes 2} \rangle_{P_\Z}^{1/2}  \\ 
&=  \| g  \|_{P_\Z}^2   (\E_{F,F'} \langle   h_F^{\otimes 2},  h_{F'}^{\otimes 2}  \rangle_{P_\Z})^{1/2}  \\
&=  \| g  \|_{P_\Z}^2   (\E_{F,F'} \langle   h_F ,  h_{F'}   \rangle_{P_\Z}^2)^{1/2}  \\
&=  \| g  \|_{P_\Z}^2   (\mathrm{Pred}( P_{\X}, P_{\F} ) )^{1/2}.
\end{align}
Putting the pieces together, we obtain  
\begin{align}
\E_F \| w_F - w_\star \|_2^2  &\le   \mathrm{Pred}( P_{\X}, P_{\F} )^{1/2} \| \E_Z \psi(Z) \|_2^2.
\end{align}
and
\begin{align}
\E_F  d( Q_{F} , Q_{\star} )_{TV} &\le \frac{1}{\sigma} \mathrm{Pred}( P_{\X}, P_{\F} )^{1/4} \| \E_{X,Y}  [\nabla L(W^{(0)}(X),Y)]_A  \|_2
\end{align}

{\it Multi-step bound.} We now proceed with a cumulative argument.
For $t \in [S+1]$ $H,h \in \{F,\star\}$, define 
\begin{align}
W_{H,h}^{(t-1)}= W_H^{(t-1)} - \gamma \left[\nabla \E_{(X,Y) \sim \rho_h} L(W_H^{(t-1)}(X),Y) \right]_A + Z^{(t)},       
\end{align}
and denote by $Q^{(t-1)}_{H,h}$ the distribution of $W_{H,h}^{(t-1)}$.

Using the triangular and Data-Processing inequalities, we have 
\begin{align}
 \E_F d( Q^{(t)}_{F} , Q^{(t)}_{\star} )_{TV}
& \le \E_F d( Q^{(t-1)}_{F,F},Q^{(t-1)}_{\star, F})_{TV} + \E_F d(Q^{(t-1)}_{\star, F}, Q^{(t-1)}_{\star,\star} )_{TV}\\
& \le \E_F d( Q^{(t-1)}_{F,F},Q^{(t-1)}_{\star, F})_{TV} +  \frac{1}{\sigma} \Pi^{1/4} \| \E_{X,Y} \gamma_{t}  [\nabla L(W_{\star}^{(t-1)}(X),Y)]_A  \|_2  \\
& \le  \E_F d( Q^{(t-1)}_{F},Q^{(t-1)}_{\star})_{TV} +  \frac{1}{\sigma} \Pi^{1/4} \| \E_{X,Y} \gamma_{t} [\nabla L(W_{\star}^{(t-1)}(X),Y)]_A  \|_2.
\end{align}

We thus obtain 
\begin{align}
\E_F  d( Q^{(S)}_{F} , Q^{(S)}_{\star} )_{TV}& \le  \frac{\Pi^{1/4}}{\sigma}  \sum_{t=1}^S  \gamma_{t} \| \E_{X,Y}  [\nabla L(W_{\star}^{(t-1)}(X),Y)]_A  \|_2.
\end{align}

{\it Indistinguishability.} Finally, by the definition of the total variation distance, 
\begin{align}
\pp\{W_F^{(S)}(X) = F(X)  \} &\le 
\pp\{ W_\star^{(S)}(X)= F(X)  \} + \E_F d( Q^{(S)}_{F} , Q^{(S)}_{\star} )_{TV}\\
&\le 1/2 +  \frac{\Pi^{1/4}}{\sigma}  \sum_{t=1}^S  \gamma_{t} \| \E_{X,Y}  [\nabla L(W_{\star}^{(t-1)}(X),Y)]_A  \|_2,
\end{align}
and
\begin{align}
 \| \E_{X,Y}  [\nabla L(W_{\star}^{(t-1)}(X),Y)]_A  \|_2 \le A \sqrt{E}.
\end{align}

\end{proof}

\subsection{Proof of Theorem \ref{thm3}}

Our next goal is to make a similar argument for stochastic gradient descent. We argue that if we use noisy SGD to train a neural net on a random parity function, the probability distribution of the resulting net is similar to the probability distribution of the net we would get if we trained it on random values in $\B^{n+1}$. This will be significantly harder to prove than in the case of noisy gradient descent, because while the difference in the expected gradients is exponentially small, the gradient at a given sample may not be. As such, drowning out the signal will require much more noise. However, before we get into the details, we will need to formally define a noisy version of SGD, which is as follows.

\vspace{1 cm}
\noindent
{\em NoisySampleGradientDescentStep(f, G, $Y$, $X$, L, $\gamma$, B, $\delta$)}:
\begin{enumerate}
\item For each $(v,v')\in E(G)$: \begin{enumerate}
\item Set 
\[w'_{v,v'}=w_{v,v'}-\gamma \frac{\partial L(eval_{(f,G)}(X)-Y)}{\partial w_{v,v'}}+\delta_{v,v'}\]

\item If $w'_{v,v'}<-B$, set $w'_{v,v'}=-B$.

\item If $w'_{v,v'}>B$, set $w'_{v,v'}=B$.

\end{enumerate}

\item Return the graph that is identical to $G$ except that its edge weight are given by the $w'$.
\end{enumerate}

\vspace{1 cm}
\noindent
{\em NoisyStochasticGradientDescentAlgorithm(f, G, $P_\mathcal{Z}$, L, $\gamma$, B, $\Delta$, t)}:
\begin{enumerate}
\item Set $G_0=G$.

\item If any of the edge weights in $G_0$ are less than $-B$, set all such weights to $-B$.

\item If any of the edge weights in $G_0$ are greater than $B$, set all such weights to $B$.

\item For each $0\le i<t$:
\begin{enumerate}

\item Draw $(X_i, Y_i)\sim P_\mathcal{Z}$, independently of all previous values.

\item Generate $\delta^{(i)}$ by independently drawing $\delta^{(i)}_{v,v'}$ from $\Delta$ for each $(v,v')\in E(G)$.

\item Set $G_{i+1}=NoisySampleGradientDescentStep(f, G_i, Y_i, X_i, L, \gamma, B, \delta)$
\end{enumerate}

\item Return $G_t$.
\end{enumerate}

\subsubsection{Uniform noise and SLAs}

The simplest way to add noise in order to impede learning a parity function would be to add noise drawn from a uniform distribution in order to drown out the information provided by the changes in edge weights. More precisely, consider setting $\Delta$ equal to the uniform distribution on $[-C,C]$. If the change in each edge weight prior to including the noise always has an absolute value less than $D$ for some $D<C$, then with probability $\frac{C-D}{C}$, the change in a given edge weight including noise will be in $[-(C-D),C-D]$. Furthermore, any value in this range is equally likely to occur regardless of what the change in weight was prior to the noise term, which means that the edge's new weight provides no information on the sample used in that step. If $D/C=o(nE(G)/\ln(n))$ then this will result in there being $o(n/\log(n))$ changes in weight that provide any relevant information in each timestep. So, the resulting algorithm will not be able to learn the parity function by an extension of corollary $\ref{memLimit}$. This leads to the following result:

\begin{theorem}
Let $n>0$, $\gamma>0$, $D>0$, $t=2^{o(n)}$, $(f,G)$ be a normal neural net of size polynomial in $n$, and $L:\mathbb{R}\rightarrow\mathbb{R}$ be a smooth, convex, symmetric function with $L(0)=0$. Also, let $\Delta$ be the uniform probability distribution on $[-D|E(G)|,D|E(G)|]$. Now, let $S$ be a random subset of $[n]$ and $P_\mathcal{Z}$ be the probability distribution $(X,p_S(X))$ when $X$ is drawn randomly from $\B^n$. Then when NoisyStochasticGradientDescentAlgorithm(f, G, $P_\mathcal{Z}$, L, $\gamma$, $\infty$, $\Delta$, t) is run on a computer that uses $O(\log(n))$ bits to store each edge's weight, with probability $1-o(1)$ either there is at least one step when the adjustment to one of the weights prior to the noise term has absolute value greater than $D$ or the resulting neural net fails to compute $p_S$ with nontrivial accuracy.
\end{theorem}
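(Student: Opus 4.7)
The plan is to reduce this to the memory-bounded sequential learning setting of Corollary~\ref{memLimit}. One alternative of the theorem (some gradient adjustment exceeding $D$) is disposed of by conditioning: if at any step the raw gradient-based change exceeds $D$ in magnitude, the first alternative holds, so throughout I may assume every pre-noise change lies in $[-D,D]$.

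The heart of the argument is a coupling exploiting the uniform-plus-bounded structure of the update. If $\eta \sim U[-C,C]$ with $C = D|E(G)|$ and $\Delta \in [-D,D]$, then $\eta + \Delta$ falls in $[-(C-D), C-D]$ with probability $1 - 1/|E(G)|$ and, conditional on that event, is uniform on $[-(C-D), C-D]$ \emph{independently of $\Delta$}. I therefore rewrite the per-edge update as a mixture: with probability $1 - 1/|E(G)|$ (``quiet'' edge) the new weight is obtained by adding a sample-independent uniform draw, and with probability $1/|E(G)|$ (``informative'' edge) it receives a sample-dependent draw from the tail conditional. The number of informative edges per step is Binomial with mean $1$; combining $\Pr[\mathrm{Binomial} \ge k] \le 1/k!$ with $T = 2^{o(n)}$, I can fix $k = o(n/\log n)$ so that with probability $1-o(1)$ every step has at most $k$ informative edges. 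Call this event $\mathcal{E}$.

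On $\mathcal{E}$ I reinterpret the trajectory as a randomized SLA whose private randomness (the coin flips and the quiet uniform draws) is generated ahead of time and is independent of the data. The $O(\log n)$-bit weights are touched by the sample only at the $\le k$ informative edges per step, giving sample-dependent per-step writes of at most $O(k \log n) = o(n)$ bits and hence an SLA output alphabet of size $|\mathcal{W}| \le 2^{o(n)}$. A routine adaptation of Theorem~\ref{SLAfail} to randomized SLAs---obtained by conditioning on the private tape, which enters the pointwise-in-trace bounds in that proof only multiplicatively---then gives that the distribution of the trace under samples from $\rho_S$ and under samples from $\star$ differ by $O(2^{-n/24}) = o(1)$ in total variation. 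Since the final net is a deterministic function of the trace together with the sample-independent tape, its distribution is likewise close under the two sample laws; under $\star$ it is independent of $S$ and predicts $p_S(X)$ with probability exactly $1/2$ on a fresh input, giving the second alternative. The main obstacle is the bookkeeping that lets the sample-independent noise writes---which touch every edge at every step---not count toward the memory budget; the coupling above resolves this by cleanly separating each update into a sample-independent draw and a rare sample-dependent tail draw, after which the SLA machinery applies as written.
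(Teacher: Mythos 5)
Your proposal is correct and mirrors the paper's own proof essentially step for step: the same truncated-uniform coupling (conditioned on landing in $[-(C-D),C-D]$, the post-noise change is uniform there and independent of the pre-noise gradient), the same binomial bound on the number of ``informative'' edges per step, the same reduction to a sequential learning algorithm with per-step output of $o(n)$ bits by pre-drawing the sample-independent randomness, and the same invocation of Theorem~\ref{SLAfail}/Corollary~\ref{memLimit} followed by the disjunction argument. The only cosmetic difference is that you let the informative-edge threshold $k$ adapt to the exponent in $t=2^{o(n)}$ rather than fixing it at $n/\ln^2 n$, which is a slightly more careful way of phrasing the same calculation.
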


This is a side result and we provide a concise proof. 
\begin{proof}

Consider the following attempt to simulate NoisyStochasticGradientDescentAlgorithm(f, G, $P_\mathcal{Z}$, L, $\gamma$, $\infty$, $\Delta$, t) with a sequential learning algorithm. First, independently draw $b^{t'}_{v,v'}$ from the uniform probability distribution on $[-D|E(G)|+D,D|E(G)|-D]$ for each $(v,v')\in E(G)$ and $t'\le t$. Next, simulate NoisyStochasticGradientDescentAlgorithm(f, G, $P_\mathcal{Z}$, L, $\gamma$, $\infty$, $\Delta$, t) with the following modifications. If there is ever a step where one of the adjustments to the weights before the noise term is added in is greater than $D$, record ``failure" and give up. If there is ever a step where more than $n/\ln^2(n)$ of the weights change by more than $D|E(G)|-D$ after including the noise record "failure" and give up. Otherwise, record a list of which weights changed by more than $D|E(G)|-D$ and exactly what they changed by. In all subsequent steps, assume that $W_{v,v'}$ increased by $b^{t'}_{v,v'}$ in step $t'$ unless the amount it changed by in that step is recorded.

First, note that if the values of $b$ are computed in advance, the rest of this algorithm is a sequential learning algorithm that records $O(n/\log(n))$ bits of information per step and runs for a subexponential number of steps. As such, any attempt to compute $p_S(X)$ based on the information provided by its records will have accuracy $1/2+o(1)$ with probability $1-o(1)$. Next, observe that in  a given step in which all of the adjustments to weights before the noise is added in are at most $D$, each weight has a probability of changing by more than $D|E(G)|-D$ of at most $1/|E(G)|$ and these probabilities are independent. As such, with probability $1-o(1)$, the algorithm will not record "failure" as a result of more than $n/\ln^2(n)$ of the weights changing by more than $D|E(G)|-D$. Furthermore, the probability distribution of the change in the weight of a given vertex conditioned on the assumption that said change is at most $D|E(G)|-D$ and a fixed value of said change prior to the inclusion of the noise term that has an absolute value of at most $D$ is the uniform probability distribution on $[-D|E(G)|+D,D|E(G)|-D]$. As such, substituting the values of $b^{t'}_{v,v'}$ for the actual changes in weights that change by less than $D|E(G)|-D$ has no effect on the probability distribution of the resulting graph. As such, the probability distribution of the network resulting from NoisyStochasticGradientDescentAlgorithm(f, G, $P_\mathcal{Z}$, L, $\gamma$, $\infty$, $\Delta$, t) if none of the weights change by more than $D$ before noise is factored in differs from the probabiliy distribution of the network generated by this algorithm if it suceeds by $o(1)$. Thus, the fact that the SLA cannot generate a network that computed $p_S$ with nontrivial accuracy implies that NoisyStochasticGradientDescentAlgorithm(f, G, $P_\mathcal{Z}$, L, $\gamma$, $\infty$, $\Delta$, t) also fails to generate a network that computes $p_S$ with nontrivial accuracy.
\end{proof}

\begin{remark}
At first glance, the amount of noise required by this theorem is ridiculously large, as it will almost always be the dominant contribution to the change in any weight in any given step. However, since the noise is random it will tend to largely cancel out over a longer period of time. As such, the result of this noisy version of stochastic gradient descent will tend to be similar to the result of regular stochastic gradient descent if the learning rate is small enough. In particular, this form of noisy gradient descent will be able to learn to compute most reasonable functions with nontrivial accuracy for most sets of starting weights, and it will be able to learn to compute some functions with nearly optimal accuracy. Admittedly, it still requires a learning rate that is smaller than anything people are likely to use in practice.
\end{remark}
We next move to handling lower levels of noise. 

\subsubsection{Gaussian noise, noise accumulation, and blurring}
While the previous result works, it requires more noise than we would really like. The biggest problem with it is that it ultimately argues that even given a complete list of the changes in all edge weights at each time step, there is no way to determine the parity function with nontrivial accuracy, and this requires a lot of noise. However, in order to prove that a neural net optimized by NSGD cannot learn to compute the parity function, it suffices to prove that one cannot determine the parity function from the edge weights at a single time step. Furthermore, in order to prove this, we can use the fact that noise accumulates over multiple time steps and argue that the amount of accumulated noise is large enough to drown out the information on the function provided by each input.

More formally, we plan to do the following. First of all, we will be running NSGD with a small amount of Gaussian noise added to each weight in each time step, and a larger amount of Gaussian noise added to the initial weights. Under these circumstances, the probability distribution of the edge weights resulting from running NSGD on truly random input for a given number of steps will be approximately equal to the convolution of a multivariable Gaussian distribution with something else. As such, it would be possible to construct an oracle approximating the edge weights such that the probability distribution of the edge weights given the oracle's output is essentially a multivariable Gaussian distribution. Next, we show that given any function on $\B^{n+1}$, the expected value of the function on an input generated by a random parity function is approximately equal to its expected value on a true random input. Then, we use that to show that given a slight perturbation of a Gaussian distribution for each $z\in\B^{n+1}$, the distribution resulting from averaging togetherthe perturbed distributions generated by a random parity function is approximately the same as the distribution resulting from averaging together all of the perturbed distributions. Finally, we conclude that the probability distribution of the edge weights after this time step is essentially the same when the input is generated by a random parity function is it is when the input is truly random.

Our first order of business is to establish that the probability distribution of the weights will be approximately equal to the convolution of a multivariable Gaussian distribution with something else, and to do that we will need the following definition.

\begin{definition}
For $\sigma,\epsilon\ge 0$ and a probability distribution $\widehat{P}$, a probability distribution $P$ over $\mathbb{R}^m$ is a $(\sigma,\epsilon)$-blurring of $\widehat{P}$ if
\[||P-\widehat{P} * \mathcal{N}(0,\sigma I)||_1\le 2\epsilon\]
In this situation we also say that $P$ is a $(\sigma,\epsilon)$-blurring. If $\sigma\le 0$ we consider every probability distribution as being a $(\sigma,\epsilon)$-blurring for all $\epsilon$.
\end{definition}

The following are obvious consequences of this definition:
\begin{lemma}
Let $\mathcal{P}$ be a collection of $(\sigma,\epsilon)$-blurrings for some given $\sigma$ and $\epsilon$. Now, select $P\sim \mathcal{P}$ according to some probability distribution, and then randomly select $x\sim P$. The probability distribution of $x$ is also a $(\sigma,\epsilon)$-blurring.
\end{lemma}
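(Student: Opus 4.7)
The plan is to verify this by averaging the witnessing ``base'' distributions and then invoking the triangle inequality for the $L_1$ norm. Concretely, for each $P \in \mathcal{P}$, since $P$ is a $(\sigma,\epsilon)$-blurring, I can fix a distribution $\widehat{P}$ such that $\|P - \widehat{P} * \mathcal{N}(0,\sigma I)\|_1 \le 2\epsilon$ (in the degenerate case $\sigma \le 0$ there is nothing to prove). Writing the mixture as $Q = \int P \, d\mu(P)$ where $\mu$ is the mixing measure on $\mathcal{P}$, the natural candidate for the base distribution of $Q$ is $\widehat{Q} := \int \widehat{P} \, d\mu(P)$.

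Next I would use the fact that convolution with $\mathcal{N}(0,\sigma I)$ is linear in its first argument, so that $\widehat{Q} * \mathcal{N}(0,\sigma I) = \int (\widehat{P} * \mathcal{N}(0,\sigma I))\, d\mu(P)$. Then
\begin{align}
\|Q - \widehat{Q} * \mathcal{N}(0,\sigma I)\|_1
&= \left\| \int \bigl(P - \widehat{P} * \mathcal{N}(0,\sigma I)\bigr) d\mu(P) \right\|_1 \\
&\le \int \|P - \widehat{P} * \mathcal{N}(0,\sigma I)\|_1 \, d\mu(P) \le 2\epsilon,
\end{align}
where the first inequality is Jensen/triangle inequality for the $L_1$ norm (a standard fact: the total variation of a mixture is at most the mixture of total variations). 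This shows $Q$ is itself a $(\sigma,\epsilon)$-blurring.

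There is no real obstacle here; the only mild subtlety is that the choice of witness $\widehat{P}$ for each $P$ need not be canonical, so one should either invoke the axiom of choice or, more prosaically, observe that $\mathcal{P}$ may be replaced by pairs $(P,\widehat{P})$ with a measurable selection, which is harmless since the statement only asserts the existence of some base distribution for the mixture. The computation of $x$ conditional on $P$ is just the definition of the mixture $Q$, so no further work is needed.
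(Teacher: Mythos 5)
Your proof is correct and is exactly the argument the paper leaves implicit (the paper states this lemma as an "obvious consequence" of the definition, with no written proof): average the witness base distributions, use linearity of convolution, and apply the triangle inequality for the $L_1$ norm to the mixture. Nothing further is needed.
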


\begin{lemma} \label{addBlur}
Let $P$ be a $(\sigma,\epsilon)$-blurring and $\sigma'>0$. Then $P*\mathcal{N}(0,\sigma' I)$ is a $(\sigma+\sigma',\epsilon)$-blurring
\end{lemma}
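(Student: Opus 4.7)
The plan is to reduce this directly to the definition of $(\sigma,\epsilon)$-blurring and exploit two standard facts: that convolution is a contraction in $L_1$ norm when one factor is a probability measure, and that convolving two centered isotropic Gaussians adds their variance parameters.

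First I would unpack the hypothesis: since $P$ is a $(\sigma,\epsilon)$-blurring of some $\widehat{P}$, we have
\[
\|P - \widehat{P} * \mathcal{N}(0,\sigma I)\|_1 \le 2\epsilon.
\]
I would then convolve both ``sides'' with $\mathcal{N}(0,\sigma' I)$. Because $\mathcal{N}(0,\sigma' I)$ is a probability measure and convolution with a probability measure is a contraction in total variation (equivalently, in $L_1$, by Young's inequality with $\|\mathcal{N}(0,\sigma' I)\|_1 = 1$), the $L_1$ distance can only shrink:
\[
\|P * \mathcal{N}(0,\sigma' I) - (\widehat{P} * \mathcal{N}(0,\sigma I)) * \mathcal{N}(0,\sigma' I)\|_1 \le \|P - \widehat{P} * \mathcal{N}(0,\sigma I)\|_1 \le 2\epsilon.
\]

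Next I would use associativity of convolution and the Gaussian semigroup identity $\mathcal{N}(0,\sigma I) * \mathcal{N}(0,\sigma' I) = \mathcal{N}(0,(\sigma+\sigma') I)$ (which holds componentwise since the covariances are scalar multiples of $I$, and the Fourier transform of a Gaussian is a Gaussian whose variance parameters multiply appropriately). This rewrites the right-hand term as $\widehat{P} * \mathcal{N}(0,(\sigma+\sigma') I)$, yielding
\[
\|P * \mathcal{N}(0,\sigma' I) - \widehat{P} * \mathcal{N}(0,(\sigma+\sigma') I)\|_1 \le 2\epsilon,
\]
which is exactly the statement that $P * \mathcal{N}(0,\sigma' I)$ is a $(\sigma+\sigma',\epsilon)$-blurring of $\widehat{P}$.

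There is no real obstacle here — the only step worth being careful about is the contraction inequality for convolution with a probability measure, and the edge case $\sigma \le 0$ in the definition (where the conclusion is vacuous if $\sigma+\sigma' \le 0$, and otherwise $\sigma+\sigma' > 0$ forces $\sigma'>|\sigma|$ so one can absorb the ``negative'' $\sigma$ into the Gaussian via the same convolution identity, or simply invoke the stated convention that every distribution is a $(\sigma,\epsilon)$-blurring when $\sigma\le 0$). The whole proof is two or three lines once the two facts above are invoked.
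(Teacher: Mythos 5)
Your proof is correct and is exactly what the paper intends: the paper states this lemma (together with the preceding one) as an ``obvious consequence of this definition'' and gives no proof. Your two-step argument — convolution with the probability measure $\mathcal{N}(0,\sigma' I)$ is an $L_1$ contraction, and the Gaussian semigroup identity $\mathcal{N}(0,\sigma I)*\mathcal{N}(0,\sigma' I)=\mathcal{N}(0,(\sigma+\sigma')I)$ — is the natural fill-in, and your handling of the degenerate $\sigma\le 0$ convention (taking $\widehat{Q}=P*\mathcal{N}(0,-\sigma I)$ when $\sigma+\sigma'>0$) correctly covers the edge case the paper leaves implicit.
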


We want to prove that if the probability distribution of the weights at one time step is a blurring, then the probability distribution of the weights at the next time step is also a blurring. In order to do that, we need to prove that a slight distortion of a blurring is still a bluring. The first step towards that proof is the following lemma:

\begin{lemma}
Let $\sigma, B>0$, $m$ be a positive integer, $m\sqrt{2\sigma/\pi}<r\le1/(mB)$, and $f:\mathbb{R}^m\rightarrow\mathbb{R}^m$ such that $f(0)=0$, $|\frac{\partial f_i}{\partial x_j}(0)|=0$ for all $i$ and $j$, and $|\frac{\partial^2 f_i}{\partial x_j\partial x_{j'}}(x)|\le B$ for all $i$, $j$, $j'$, and all $x$ with $||x||_1<r$. Next, let $P$ be the probability distribution of $X+f(X)$ when $X\sim \mathcal{N}(0,\sigma I)$. Then $P$ is a $(\sigma,\epsilon)$-blurring for $\epsilon=\frac{4(m+2)m^2B\sqrt{2\sigma/\pi}+3m^5B^2\sigma}{8}+(1-Bmr)e^{-(r/2\sqrt{\sigma}-m/\sqrt{2\pi})^2/m}$.
\end{lemma}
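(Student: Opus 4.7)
My plan is to take $\widehat{P}$ to be the Dirac mass at the origin, so that $\widehat{P} * \mathcal{N}(0,\sigma I) = \mathcal{N}(0,\sigma I)$, and then show $\|P - \mathcal{N}(0,\sigma I)\|_1 \le 2\epsilon$ directly. The hypotheses $f(0)=0$, $\partial_j f_i(0)=0$ and $|\partial^2_{jk} f_i(x)| \le B$ for $\|x\|_1 < r$ give, via Taylor's theorem with integral remainder, the pointwise estimates $|f_i(x)| \le \tfrac{B}{2}\|x\|_1^2$ and $|\partial_j f_i(x)| \le B\|x\|_1$ on $G:=\{x:\|x\|_1<r\}$. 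Since $X\sim\mathcal{N}(0,\sigma I)$ typically has $\|X\|_1 \approx m\sqrt{2\sigma/\pi} < r$, the pushforward $\Phi(X):=X+f(X)$ is only a small perturbation of $X$, and the total-variation distance is small.

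First I would handle the tails. Because $x\mapsto\|x\|_1$ is $\sqrt m$-Lipschitz in $\ell_2$ and has Gaussian expectation $m\sqrt{2\sigma/\pi}$, the Gaussian concentration inequality for Lipschitz functions yields
\[
\mathbb{P}[X\notin G] \;\le\; \exp\!\bigl(-(r-m\sqrt{2\sigma/\pi})^2/(4m\sigma)\bigr),
\]
which after elementary algebra is precisely the exponential term in $\epsilon$. The $(1-Bmr)$ prefactor arises when transferring this bound to the pushforward $\Phi(X)$ outside $\Phi(G)$: on $G$ one has $\|Df(x)\|_{\mathrm{op}}\le mBr\le 1$ and hence $|{\det}(I+Df(x))|\ge 1-mBr$, which controls the contributions of both $P$ and $\mathcal{N}(0,\sigma I)$ to the $L^1$ error outside $\Phi(G)$.

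On $G$, $\Phi$ is a diffeomorphism onto $\Phi(G)$ and the density of $P$ there is $p(y)=\phi_\sigma(\Phi^{-1}(y))/|{\det}(I+Df(\Phi^{-1}(y)))|$. Changing variables by $y=\Phi(x)$, the interior $L^1$ error becomes
\begin{align*}
\int_{\Phi(G)}|p(y)-\phi_\sigma(y)|\,dy \;=\; \int_G \bigl|\phi_\sigma(x)-\phi_\sigma(\Phi(x))\det(I+Df(x))\bigr|\,dx,
\end{align*}
which I split into a Jacobian piece $\phi_\sigma(\Phi(x))\,|1-\det(I+Df(x))|$ and a density-shift piece $|\phi_\sigma(x)-\phi_\sigma(\Phi(x))|$. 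For the Jacobian I Taylor-expand $\det(I+Df(x))=1+\mathrm{tr}(Df(x))+R_2(x)$ with $|\mathrm{tr}(Df(x))|\le mB\|x\|_1$ and $|R_2(x)|\lesssim m^2B^2\|x\|_1^2$; integrated against $\phi_\sigma$, the linear piece contributes $\mathbb{E}|\mathrm{tr}(Df(X))|\le m^2B\sqrt{2\sigma/\pi}$ (the linear-in-$B$ order in $\epsilon$) and the quadratic remainder contributes $O(m^5B^2\sigma)$ via $\mathbb{E}\|X\|_1^2 = O(m^2\sigma)$. For the density shift I write $\phi_\sigma(\Phi(x))/\phi_\sigma(x) = \exp\!\bigl(-(2x\cdot f(x)+\|f(x)\|^2)/(2\sigma)\bigr)$, expand the exponential, and bound each term using $|x\cdot f(x)|\le \tfrac{B}{2}\|x\|_1^3$ and $\|f(x)\|^2\le \tfrac{mB^2}{4}\|x\|_1^4$; integrated against $\phi_\sigma$, these produce further contributions of order $m^3B\sqrt{\sigma}$ and $m^5B^2\sigma$ respectively, which combine with the Jacobian contributions to yield the $\tfrac{1}{8}(4(m+2)m^2B\sqrt{2\sigma/\pi}+3m^5B^2\sigma)$ part of $\epsilon$.

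The main obstacle I expect is pinning down the precise combinatorial constants $\tfrac{4(m+2)}{8}$ and $\tfrac{3}{8}$: both the Jacobian and the density-shift expansions produce several pieces at orders $mB\|x\|_1$ and $m^2B^2\|x\|_1^2$, and collecting them cleanly requires the sharp Gaussian moment identities $\mathbb{E}\|X\|_1=m\sqrt{2\sigma/\pi}$ and $\mathbb{E}\|X\|_1^2 = m\sigma + m(m-1)\cdot 2\sigma/\pi$, together with the domain constraints $mBr\le 1$ and $r>m\sqrt{2\sigma/\pi}$ to absorb cross-terms and dominate the quadratic remainders. Once this moment bookkeeping is done carefully, the coefficients fall into place and the overall $L^1$ bound on $P-\mathcal{N}(0,\sigma I)$ matches $2\epsilon$.
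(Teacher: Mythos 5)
Your proposal matches the paper's proof in all essentials: both take $\widehat{P}=\delta_0$ so that the task reduces to bounding $\|P-\mathcal{N}(0,\sigma I)\|_1$, both derive the pointwise bounds $|f_i(x)|\le B\|x\|_1^2/2$ and $|\partial_j f_i(x)|\le B\|x\|_1$ on $\{\|x\|_1<r\}$, both split the $L^1$ error into an interior piece controlled via the change-of-variables density comparison (Jacobian factor $|\det(I+Df)|\ge 1-mB\|x\|_1$ plus the exponential shift from $\|x+f(x)\|_2^2-\|x\|_2^2$) and a tail piece, and both use the Gaussian $\ell_1$ moments $\E\|X\|_1=m\sqrt{2\sigma/\pi}$, $\E\|X\|_1^2=O(m^2\sigma)$ together with a sub-Gaussian tail estimate whose exponent simplifies to $-(r/2\sqrt\sigma-m/\sqrt{2\pi})^2/m$. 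Your presentation via the triangle-inequality split $|p-q|\le |\phi_\sigma(x)-\phi_\sigma(\Phi(x))|+\phi_\sigma(\Phi(x))|1-\det(I+Df(x))|$ is algebraically equivalent to the paper's $2-2\int\min(p,q)$ route, so this is the same argument organized slightly differently.
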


\begin{proof}
First, note that for any $x$ with $||x||_1<r$ and any $i$ and $j$, it must be the case that $|\frac{\partial f_i}{\partial x_j}(x)|\le B||x||_1< Br$. That in turn means that for any $x,x'$ with $|x||_1,||x'||_1<r$ and any $i$, it must be the case that $|f(x)_i-f(x')_i|\le Br||x-x'||_1$ with equality only if $x=x'$. In particular, this means that for any such $x,x'$, it must be the case that $||f(x)-f(x')||_1\le mBr||x-x'||_1\le ||x-x'||_1$ with equality only if $x=x'$. Thus, $x+f(x)\ne x'+f(x')$ unless $x=x'$. Also, note that the bound on the second derivatives of $f$ implies that $|f_i(x)|\le B||x||_1^2/2$ for all $||x||_1<r$ and all $i$. This means that

\begin{align*}
&||P-\mathcal{N}(0,\sigma I)||_1\\
&\le 2-2\int_{x:||x||_1<r} \min\left((2\pi \sigma)^{-m/2}e^{-||x||_2^2 /2\sigma},(2\pi \sigma)^{-m/2}e^{-||x+f(x)||_2^2 /2\sigma} |I+[\nabla f^{T}](x)|\right) dx\\
&\le 2-2\int_{x:||x||_1<r}(2\pi \sigma)^{-m/2}e^{-(||x||_2^2+B||x||_1^3/2+mB^2||x||_1^4/4)/2\sigma} (1-Bm||x||_1) dx\\
&= 2(2\pi \sigma)^{-m/2}\int_{x:||x||_1<r}e^{-||x||_2^2/2\sigma}-e^{-(||x||_2^2+B||x||_1^3/2+mB^2||x||_1^4/4)/2\sigma} (1-Bm||x||_1) dx\\
&\indent\indent +2(2\pi \sigma)^{-m/2}\int_{x:||x||_1\ge r} e^{-||x||_2^2/2\sigma} dx\\
&= 2(2\pi \sigma)^{-m/2}\int_{x:||x||_1<r}e^{-||x||_2^2/2\sigma}-e^{-(||x||_2^2+B||x||_1^3/2+mB^2||x||_1^4/4)/2\sigma} dx\\
&\indent\indent +2(2\pi \sigma)^{-m/2}\int_{x:||x||_1< r} Bm||x||_1e^{-(||x||_2^2+B||x||_1^3/2+mB^2||x||_1^4/4)/2\sigma} dx\\
&\indent\indent +2(2\pi \sigma)^{-m/2}\int_{x:||x||_1\ge r} e^{-||x||_2^2/2\sigma} dx\\
&\le 2(2\pi \sigma)^{-m/2}\int_{x:||x||_1<r}\frac{2B||x||_1^3+mB^2||x||_1^4}{8\sigma} e^{-||x||_2^2/2\sigma} dx\\
&\indent\indent +2(2\pi \sigma)^{-m/2}\int_{x:||x||_1< r} Bm||x||_1e^{-||x||_2^2/2\sigma} dx+2(2\pi \sigma)^{-m/2}\int_{x:||x||_1\ge r} e^{-||x||_2^2/2\sigma} dx\\
&\le 2(2\pi \sigma)^{-m/2}\int_{x\in\mathbb{R}^m}\frac{2B||x||_1^3+mB^2||x||_1^4}{8\sigma} e^{-||x||_2^2/2\sigma} dx\\
& \indent\indent+2(2\pi \sigma)^{-m/2}\int_{x\in\mathbb{R}^m} Bm||x||_1e^{-||x||_2^2/2\sigma} dx\\
&\indent\indent +2(2\pi \sigma)^{-m/2}(1-Bmr)\int_{x:||x||_1\ge r} e^{-||x||_2^2/2\sigma} dx\\
&\le \frac{m^3B}{2\sigma}\sqrt{8\sigma^3/\pi}+\frac{m^5B^2}{4\sigma}\cdot 3\sigma^2+2m^2B\sqrt{2\sigma/\pi}+2(2\pi \sigma)^{-m/2}(1-Bmr)\int_{x:||x||_1\ge r} e^{-||x||_2^2/2\sigma} dx\\
&= m^3B\sqrt{2\sigma/\pi}+\frac{3m^5B^2\sigma}{4}+2m^2B\sqrt{2\sigma/\pi}+2(2\pi \sigma)^{-m/2}(1-Bmr)\int_{x:||x||_1\ge r} e^{-||x||_2^2/2\sigma} dx\\
&=\frac{4(m+2)m^2B\sqrt{2\sigma/\pi}+3m^5B^2\sigma}{4}+2(2\pi \sigma)^{-m/2}(1-Bmr)\int_{x:||x||_1\ge r} e^{-||x||_2^2/2\sigma} dx
\end{align*}

Next, observe that for any $\lambda\ge 0$, it must be the case that
\begin{align*}
&(2\pi \sigma)^{-m/2}\int_{x:||x||_1\ge r} e^{-||x||_2^2/2\sigma} dx\\
&\le (2\pi \sigma)^{-m/2} e^{-\lambda r/\sigma} \int_{x:||x||_1\ge r} e^{\lambda||x||_1/\sigma} e^{-||x||_2^2/2\sigma} dx\\
&\le (2\pi \sigma)^{-m/2} e^{-\lambda r/\sigma} \int_{x\in\mathbb{R}^m} e^{\lambda||x||_1/\sigma} e^{-||x||_2^2/2\sigma} dx\\
&=e^{-\lambda r/\sigma}\left[ (2\pi \sigma)^{-1/2} \int_{x_1\in\mathbb{R}} e^{\lambda |x_1|/\sigma} e^{-x_1^2/2\sigma}
 dx_1\right]^m\\
&=e^{-\lambda r/\sigma}\left[ 2(2\pi \sigma)^{-1/2} \int_0^\infty e^{\lambda x_1/\sigma} e^{-x_1^2/2\sigma} dx_1\right]^m\\
&=e^{-\lambda r/\sigma}\left[ 2(2\pi \sigma)^{-1/2} \int_0^\infty e^{\lambda ^2/2\sigma} e^{-(x_1-\lambda)^2/2\sigma} dx_1\right]^m\\
&=e^{-\lambda r/\sigma}\left[ 2e^{\lambda ^2/2\sigma} (2\pi \sigma)^{-1/2} \int_{-\lambda}^\infty e^{-x_1^2/2\sigma} dx_1\right]^m\\
&\le e^{-\lambda r/\sigma}\left[ e^{\lambda^2/2\sigma}(1+2\lambda/\sqrt{2\pi\sigma})\right]^m\\
&\le e^{-\lambda r/\sigma+m\lambda^2/2\sigma+2m\lambda/\sqrt{2\pi\sigma}}
\end{align*}

In particular, if we set $\lambda=r/m-\sqrt{2\sigma/\pi}$, this shows that $(2\pi \sigma)^{-m/2}\int_{x:||x||_1\ge r} e^{-||x||_2^2/2\sigma} dx\le e^{-(r/2\sqrt{\sigma}-m/\sqrt{2\pi})^2/m}$. The desired conclusion follows.
\end{proof}



\begin{lemma}
Let $\sigma, B_1, B_2>0$, $m$ be a positive integer with $m<1/B_1$, $m\sqrt{2\sigma/\pi}<r\le(1-mB_1)/(mB_2)$, and $f:\mathbb{R}^m\rightarrow\mathbb{R}^m$ such that $|\frac{\partial f_i}{\partial x_j}(0)|\le B_1$ for all $i$ and $j$, and $|\frac{\partial^2 f_i}{\partial x_j\partial x_{j'}}(x)|\le B_2$ for all $i$, $j$, $j'$, and all $x$ with $||x||_1<r$. Next, let $P$ be the probability distribution of $X+f(X)$ when $X\sim \mathcal{N}(0,\sigma I)$. Then $P$ is a $((1-mB_1)^2\sigma,\epsilon)$-blurring for $\epsilon=\frac{4(m+2)m^2B_2\sqrt{2\sigma/\pi}/(1-mB_1)+3m^5B_2^2\sigma/(1-mB_1)^2}{8}+(1-(1+mB_1)B_2mr)e^{-(r/2\sqrt{\sigma}-m/\sqrt{2\pi})^2/m}$.
\end{lemma}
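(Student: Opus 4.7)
The plan is to reduce the statement to the previous lemma by absorbing the first-order part of $f$ at the origin into a linear change of variables, so that only the quadratic remainder enters the blurring bound. First I would write $f(x)=f(0)+Ax+g(x)$ with $A:=\nabla f(0)$ and $g(x):=f(x)-f(0)-Ax$; by construction $g(0)=0$, $\nabla g(0)=0$, and the second partials of $g$ agree with those of $f$, so $|\partial^2 g_i/\partial x_j\partial x_{j'}|\le B_2$ on $\{\|x\|_1<r\}$. A direct computation gives the identity
\begin{align*}
X+f(X)\;=\;(I+A)\bigl(X+(I+A)^{-1}g(X)\bigr)+f(0),
\end{align*}
so the law of $X+f(X)$ is the pushforward of the law of $\tilde X:=X+\tilde g(X)$, with $\tilde g:=(I+A)^{-1}g$, under the affine bijection $T(\tilde x):=(I+A)\tilde x+f(0)$.

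The first key step is to control $(I+A)^{-1}$ using the entrywise hypothesis $|A_{ij}|\le B_1$. This gives $\|A\|_{\ell_\infty\to\ell_\infty}=\max_i\sum_j|A_{ij}|\le mB_1<1$, so the Neumann series for $(I+A)^{-1}$ converges in this norm and yields $\|(I+A)^{-1}\|_{\ell_\infty\to\ell_\infty}\le 1/(1-mB_1)$; in particular $\sum_k|(I+A)^{-1}_{ik}|\le 1/(1-mB_1)$ for every $i$. Consequently $\tilde g$ satisfies $\tilde g(0)=0$, $\nabla\tilde g(0)=0$, and
\begin{align*}
\Bigl|\tfrac{\partial^2 \tilde g_i}{\partial x_j\partial x_{j'}}(x)\Bigr|\;=\;\Bigl|\sum_k (I+A)^{-1}_{ik}\tfrac{\partial^2 g_k}{\partial x_j\partial x_{j'}}(x)\Bigr|\;\le\;\tilde B,\qquad \tilde B:=\tfrac{B_2}{1-mB_1},
\end{align*}
for all $\|x\|_1<r$. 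Since the hypothesis $r\le(1-mB_1)/(mB_2)$ is exactly $r\le 1/(m\tilde B)$ and $m\sqrt{2\sigma/\pi}<r$ is given, the previous lemma applies to $\tilde g$ and shows that the law of $\tilde X$ is a $(\sigma,\tilde\epsilon)$-blurring (in fact within $2\tilde\epsilon$ of $\mathcal{N}(0,\sigma I)$), where $\tilde\epsilon$ is obtained from the previous lemma's formula by the substitution $B\mapsto\tilde B$.

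Next I would push this blurring through $T$. Total variation is preserved under the bijection $T$, so the law of $X+f(X)=T(\tilde X)$ is within $2\tilde\epsilon$ (in $L_1$) of $T_{*}\mathcal{N}(0,\sigma I)=\mathcal{N}(f(0),\sigma(I+A)(I+A)^T)$. From $|A_{ij}|\le B_1$ one also has $\|A\|_{\rm op}\le\sqrt{\|A\|_{\ell_1\to\ell_1}\|A\|_{\ell_\infty\to\ell_\infty}}\le mB_1$, so every singular value of $I+A$ is at least $1-mB_1$ and $\sigma(I+A)(I+A)^T\succeq\sigma'I$ with $\sigma':=(1-mB_1)^2\sigma$. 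Therefore $\mathcal{N}(f(0),\sigma(I+A)(I+A)^T)$ factors as $\mathcal{N}(f(0),\sigma(I+A)(I+A)^T-\sigma'I)*\mathcal{N}(0,\sigma'I)$, exhibiting the comparison distribution as $\widehat P*\mathcal{N}(0,\sigma'I)$; hence the law of $X+f(X)$ is a $(\sigma',\tilde\epsilon)$-blurring.

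It remains to verify $\tilde\epsilon\le\epsilon$. The polynomial summand matches immediately after the substitution $B\mapsto B_2/(1-mB_1)$. The tail summand from the previous lemma becomes $(1-\tilde B mr)e^{-(r/2\sqrt\sigma-m/\sqrt{2\pi})^2/m}=(1-B_2mr/(1-mB_1))e^{-(r/2\sqrt\sigma-m/\sqrt{2\pi})^2/m}$, and the elementary inequality $(1+mB_1)(1-mB_1)\le 1$, i.e.\ $1+mB_1\le 1/(1-mB_1)$, gives $1-B_2mr/(1-mB_1)\le 1-(1+mB_1)B_2mr$, so $\tilde\epsilon$ is dominated by the $\epsilon$ in the statement. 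The main subtlety in this plan is obtaining the sharp second-derivative bound $\tilde B=B_2/(1-mB_1)$ instead of a loose one like $\sqrt m\,B_2/(1-mB_1)$; the crucial point is to use the $\ell_\infty\to\ell_\infty$ Neumann series for $(I+A)^{-1}$, which converts the entrywise bound $|A_{ij}|\le B_1$ into the precise row-sum bound $\sum_k|(I+A)^{-1}_{ik}|\le 1/(1-mB_1)$ used above.
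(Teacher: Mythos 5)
Your proposal is correct and follows the same route as the paper: factor out the first-order behavior of $f$ at the origin via the affine map $T(\tilde x)=(I+A)\tilde x+f(0)$ with $A=\nabla f(0)$, apply the preceding lemma to the residual $\tilde g=(I+A)^{-1}\bigl(f(\cdot)-f(0)-A\,\cdot\bigr)$, and push the resulting blurring through $T$. You are, however, noticeably more careful in three places that the paper's write-up glosses over, and each is worth flagging. First, the bound $|\partial^2\tilde g_i/\partial x_j\partial x_{j'}|\le B_2/(1-mB_1)$ is asserted in the paper via a one-line inequality on maxima that is hard to parse; your derivation via the $\ell_\infty\to\ell_\infty$ Neumann series, which turns the entrywise hypothesis $|A_{ij}|\le B_1$ into the row-sum bound $\sum_k|(I+A)^{-1}_{ik}|\le 1/(1-mB_1)$, is the right way to get exactly this constant (a naive entrywise bound would lose a factor of $m$). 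Second, the paper's chain $x\cdot Mx\ge\|x\|_2^2-2B_1\|x\|_1^2-mB_1^2\|x\|_1^2\ge(1-mB_1)^2\|x\|_2^2$ has a sign slip (the middle expression is $1-2mB_1-m^2B_1^2$ after using $\|x\|_1^2\le m\|x\|_2^2$, which is smaller than $(1-mB_1)^2$); your singular-value argument $\|A\|_{\rm op}\le\sqrt{\|A\|_{1\to1}\|A\|_{\infty\to\infty}}\le mB_1$, hence $\sigma(I+A)(I+A)^T\succeq(1-mB_1)^2\sigma I$, is clean and correct. Third, you correctly observe that the substitution $B\mapsto B_2/(1-mB_1)$ in the previous lemma's tail term yields $1-B_2mr/(1-mB_1)$, which is dominated by (but not equal to) the stated $1-(1+mB_1)B_2mr$ via $1/(1-mB_1)\ge 1+mB_1$; the paper presents this as if it were an exact match, so your inequality is the needed justification. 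These are improvements in rigor rather than a different method.
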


\begin{proof}
First, define $h:\mathbb{R}^m\rightarrow\mathbb{R}^m$ such that $h(x)=f(0)+x+[\nabla f^{(t)}]^T(0) x$ for all $x$. Every eigenvalue of $[\nabla f](0)$ has a magnitude of at most $mB_1$, so $h$ is invertible. Next, define $f^\star:\mathbb{R}^m\rightarrow\mathbb{R}^m$ such that $f^\star(x)=h^{-1}(x+f(x))-x$ for all $x$. Clearly, $f^\star(0)=0$, and $\frac{\partial f^\star_i}{\partial x_j}(0)=0$ for all $i$ and $j$. Furthermore, for any given $x$ it must be the case that
$\max_{i,j,j'} |\frac{\partial^2 f_i}{\partial x_j\partial x_{j'}}|\ge (1-mB_1) \max_{i,j,j'} |\frac{\partial^2 f^\star_i}{\partial x_j\partial x_{j'}}|$. So, $|\frac{\partial^2 f_i}{\partial x_j\partial x_{j'}}|\le B_2/(1-mB_1)$ for all $i$, $j$, $j'$, and all $x$ with $||x||_1<r$. Now, let $P^\star$ be the probability distribution of $x+f^\star(x)$ when  $x\sim \mathcal{N}(0,\sigma I)$. By the previous lemma, $P^\star$ is a $(\sigma,\epsilon)$-blurring for $\epsilon=\frac{4(m+2)m^2B_2\sqrt{2\sigma/\pi}/(1-mB_1)+3m^5B_2^2\sigma/(1-mB_1)^2}{8}+(1-(1+mB_1)B_2mr)e^{-(r/2\sqrt{\sigma}-m/\sqrt{2\pi})^2/m}$.

Now, let $\widehat{P^\star}$ be a probability distribution such that $P^\star$ is a $(\sigma,\epsilon)$-blurring of $\widehat{P^\star}$. Next, let $\widehat{P}$ be the probability distribution of $h(x)$ when $x$ is drawn from $\widehat{P^\star}$. Also, let $M= (I+[\nabla f^T]^T(0))(I+[\nabla f^T](0))$. The fact that $||P^\star-\widehat{P^\star} * \mathcal{N}(0,\sigma I)||_1\le 2\epsilon$ implies that 
\[||P-\widehat{P} * \mathcal{N}(0,\sigma M)||_1\le 2\epsilon\]
For any $x\in\mathbb{R}^m$, it must be the case that 
\begin{align*}
x\cdot M x&\ge ||x||_2^2-2B_1||x||_1^2-mB_1^2||x||_1^2\\
&\ge ||x||_2^2-2mB_1||x||_2^2-m^2B_1^2||x||_2^2=(1-mB_1)^2||x||_2^2\\
\end{align*}
That in turn means that $\sigma M-\sigma(1-mB_1)^2I$ is positive semidefinite. So, $\widehat{P} * \mathcal{N}(0,\sigma M)=\widehat{P} * \mathcal{N}(0,\sigma M-\sigma(1-mB_1)^2I) * \mathcal{N}(0,\sigma(1-mB_1)^2I)$, which proves that $P$ is a $((1-mB_1)^2\sigma,\epsilon)$-blurring of $\widehat{P} * \mathcal{N}(0,\sigma M-\sigma(1-mB_1)^2I)$. 
\end{proof}

Any blurring is approximately equal to a linear combination of Gaussian distributions, so this should imply a similar result for $X$ drawn from a $(\sigma,\epsilon)$ blurring. However, we are likely to use functions that have derivatives that are large in some places. Not all of the Gaussian distributions that the blurring combines will necessarily have centers that are far enough from the high derivative regions. As such, we need to add an assumption that the centers of the distributions are in regions where the derivatives are small. We formalize the concept of being in a region where the derivatives are small as follows.

\begin{definition}
Let $f:\mathbb{R}^m\rightarrow\mathbb{R}^m$, $x\in\mathbb{R}^m$, and $r, B_1, B_2>0$. Then $f$ is $(r, B_1, B_2)$-stable at $x$ if $|\frac{\partial f_i}{\partial x_j}(0)|\le B_1$ for all $i$ and $j$ and all $x'$ with $||x'-x||_1<r$, and $|\frac{\partial^2 f_i}{\partial x_j\partial x_{j'}}|\le B_2$ for all $i$, $j$, $j'$, and all $x'$ with $||x'-x||_1<2r$. Otherwise, $f$ is $(r, B_1, B_2)$-unstable at $x$.
\end{definition}

This allows us to state the following variant of the previous lemma.

\begin{lemma}\label{stableBlur}
Let $\sigma, B_1, B_2>0$, $m$ be a positive integer with $m<1/B_1$, $m\sqrt{2\sigma/\pi}<r\le(1-mB_1)/(mB_2)$, and $f:\mathbb{R}^m\rightarrow\mathbb{R}^m$ such that there exists $x$ with $||w||_1<r$ such that $f$ is $(r, B_1, B_2)$-stable at $x$. Next, let $P$ be the probability distribution of $X+f(X)$ when $X\sim \mathcal{N}(0,\sigma I)$. Then $P$ is a $((1-mB_1)^2\sigma,\epsilon)$-blurring for $\epsilon=\frac{4(m+2)m^2B_2\sqrt{2\sigma/\pi}/(1-mB_1)+3m^5B_2^2\sigma/(1-mB_1)^2}{8}+(1-(1+mB_1)B_2mr)e^{-(r/2\sqrt{\sigma}-m/\sqrt{2\pi})^2/m}$.
\end{lemma}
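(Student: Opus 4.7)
The plan is to observe that this lemma is actually an immediate corollary of the preceding lemma, obtained by noting that the stability hypothesis at a point $x$ inside the radius-$r$ ball around the origin automatically yields the uniform derivative bounds that the previous lemma imposes at and around the origin. The $2r$ safety margin built into the definition of $(r, B_1, B_2)$-stability is exactly the slack needed to translate a localized bound at $x$ into the localized bound at $0$ that the previous lemma wants.

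Concretely, let $x$ be the point supplied by the hypothesis, so $\|x\|_1 < r$. Since $\|0 - x\|_1 = \|x\|_1 < r$, the origin lies in the radius-$r$ $L^1$-ball around $x$, and stability at $x$ gives $|\partial_j f_i(0)| \le B_1$ for all $i, j$; this is the first-derivative hypothesis of the previous lemma. For the Hessian bound, let $x'$ be an arbitrary point with $\|x'\|_1 < r$. Then by the triangle inequality
\[
\|x' - x\|_1 \;\le\; \|x'\|_1 + \|x\|_1 \;<\; 2r,
\]
so $x'$ lies inside the radius-$2r$ $L^1$-ball around $x$ on which the stability hypothesis provides the bound $|\partial^2_{j,j'} f_i(x')| \le B_2$. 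Thus $|\partial^2_{j,j'} f_i(x')| \le B_2$ holds for every $x'$ with $\|x'\|_1 < r$, which is exactly the Hessian hypothesis of the previous lemma.

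With both hypotheses of the previous lemma verified using the same constants $\sigma$, $B_1$, $B_2$, $r$, $m$, we apply it verbatim to $f$ and $X\sim\mathcal N(0,\sigma I)$ and conclude that the distribution $P$ of $X+f(X)$ is a $((1-mB_1)^2\sigma,\epsilon)$-blurring with the stated $\epsilon$. No genuinely new estimate is needed; the only content is the triangle-inequality observation above. Consequently, there is no real obstacle to carry out, beyond reading the stability definition carefully (the natural interpretation of the first-derivative clause is $|\partial_j f_i(x')|\le B_1$ for all $\|x'-x\|_1<r$, which is what makes the deduction of $|\partial_j f_i(0)|\le B_1$ valid).
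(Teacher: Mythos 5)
Your proposal is correct and is exactly the paper's argument: verify that stability at $x$ with $\|x\|_1<r$ yields the two derivative bounds (first derivative at $0$, second derivative on the $L^1$ ball of radius $r$ around $0$) required by the previous lemma, then invoke that lemma verbatim. You also correctly read through the typo in the stability definition (the first-derivative clause should read $|\partial_j f_i(x')|\le B_1$ for all $\|x'-x\|_1<r$, not $|\partial_j f_i(0)|$); the paper's two-line proof implicitly assumes the same reading, and your triangle-inequality step spells out what the paper leaves tacit.
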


\begin{proof}
$|\frac{\partial f_i}{\partial x_j}(0)|\le B_1$ for all $i$ and $j$, and $|\frac{\partial^2 f_i}{\partial x_j\partial x_{j'}}(x')|\le B_2$ for all $i$, $j$, $j'$, and all $x'$ with $||x'||_1<r$. Then, the desired conclusion follows by the previous lemma.
\end{proof}

This lemma could be relatively easily used to prove that if we draw $X$ from a $(\sigma,\epsilon)$-blurring instead of drawing it from $\mathcal{N}(0,\sigma I)$ and $f$ is stable at $X$ with high probability then the probability distribution of $X+f(X)$ will be a $(\sigma',\epsilon')$-blurring for $\sigma'\approx\sigma$ and $\epsilon'\approx\epsilon$. However, that is not quite what we will need. The issue is that we are going to repeatedly apply a transformation along these lines to a variable. If all we know is that its probability distribution is a $(\sigma^{(t)},\epsilon^{(t)})$-blurring in each step, then we potentially have a probability of $\epsilon^{(t)}$ each time step that it behaves badly in that step. That is consistent with there being a probability of $\sum \epsilon^{(t)}$ that it behaves badly eventually, which is too high.

In order to avoid this, we will think of these blurrings as approximations of a $(\sigma,0)$ blurring. Then, we will need to show that if $X$ is good in the sense of being present in the idealized form of the blurring then $X+f(X)$ will also be good. In order to do that, we will need the following definition.

\begin{definition}
Let $P$ be a $(\sigma,\epsilon)$-blurring of $\widehat{P}$, and $X\sim P$. A $\sigma$-revision of $X$ to $\widehat{P}$ is a random pair $(X',M)$ such that the probability distribution of $M$ is $\widehat{P}$, the probability distribution of $X'$ given that $M=\mu$ is $\mathcal{N}(\mu,\sigma I)$, and $P[X'\ne X]=||P-\mathcal{N}(0,\sigma I) *  \widehat{P}||_1/2$. Note that a $\sigma$-revision of $X$ to $\widehat{P}$ will always exist.
\end{definition}

\subsubsection{Means, SLAs, and Gaussian distributions}
Our plan now is to consider a version of NoisyStochasticGradientDescent in which the edge weights get revised after each step and then to show that under suitable assumptions when this algorithm is executed none of the revisions actually change the values of any of the edge weights. Then, we will show that whether the samples are generated randomly or by a parity function has minimal effect on the probability distribution of the edge weights after each step, allowing us to revise the edge weights in both cases to the same probability distribution. That will allow us to prove that the probability distribution of the final edge weights is nearly independent of which probability distribution the samples are drawn from.

The next step towards doing that is to show that if we run NoisySampleGradientDescentStep on a neural network with edge weights drawn from a linear combination of Gaussian distributions, the probability distribution of the resulting graph is essentially independent of what parity function we used to generate the sample. In order to do that, we are going to need some more results on the difficulty of distinguishing an unknown parity function from a random function. First of all, recall that corollary \ref{parityAverage} says that

\begin{corollary}
Let $n>0$ and $f:\B^{n+1}\rightarrow \mathbb{R}$. Also, let $X$ be a random element of $\B^n$ and $Y$ be a random element of $\B$. Then
\[\sum_{s\subseteq[n]} |E[f((X,Y))]-E[f((X,p_s(X)))]|\le 2^{n/2}\sqrt{E[f^2((X,Y))]}\]
\end{corollary}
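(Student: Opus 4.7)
The plan is to obtain this corollary as an almost immediate consequence of Lemma \ref{new-pred}, which already provides the corresponding bound on the sum of squared discrepancies
\[\sum_{s\subseteq[n]} \bigl(\E f(X,Y)-\E f(X,p_s(X))\bigr)^2 \le \E f^2(X,Y).\]
The only additional ingredient needed is the Cauchy--Schwarz inequality, used to convert an $\ell_1$ bound over the $2^n$ subsets $s\subseteq[n]$ into an $\ell_2$ bound at the cost of a factor of $\sqrt{2^n}=2^{n/2}$.

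Concretely, I would let $a_s := \E f(X,Y)-\E f(X,p_s(X))$ for each $s\subseteq[n]$, and then write
\[\sum_{s\subseteq[n]} |a_s| \;=\; \sum_{s\subseteq[n]} 1\cdot |a_s| \;\le\; \Bigl(\sum_{s\subseteq[n]} 1^2\Bigr)^{1/2}\Bigl(\sum_{s\subseteq[n]} a_s^2\Bigr)^{1/2} \;=\; 2^{n/2}\Bigl(\sum_{s\subseteq[n]} a_s^2\Bigr)^{1/2}.\]
Substituting the bound from Lemma \ref{new-pred} for the inner sum yields
\[\sum_{s\subseteq[n]} |a_s| \;\le\; 2^{n/2}\sqrt{\E f^2(X,Y)},\]
which is the claim.

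There is no real obstacle here; the entire content of the corollary is a one-line application of Cauchy--Schwarz to the inequality already proven. The more substantive work has been done in Lemma \ref{new-pred} via the Parseval identity for the Fourier--Walsh basis, and this corollary is simply the form in which the bound will be most convenient later (e.g.\ when summing $\ell_1$ distances between probability distributions indexed by $s$, as will be needed in the blurring-based analysis of noisy SGD).
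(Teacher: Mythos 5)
Your proof is correct and matches the paper exactly: the paper derives this corollary from Lemma \ref{new-pred} by precisely the same one-line Cauchy--Schwarz argument, trading the $\ell_2$ bound over the $2^n$ subsets for an $\ell_1$ bound at the cost of a factor $2^{n/2}$.
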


We can apply this to probability distributions to get the following.

\begin{theorem}
Let $m>0$, and for each $z\in\B^{n+1}$, let $P_z$ be a probability distribution on $\mathbb{R}^m$ with probability density function $f_z$. Now, randomly select $Z\in\B^{n+1}$ and $X\in\B^n$ uniformly and independently. Next, draw $W$ from $P_Z$ and $W'_s$ from $P_{(X,p_s(X))}$ for each $s\subseteq[n]$. Let $P^\star$ be the probability distribution of $W$ and $P^{\star}_s$ be the probability distribution of $W'_s$ for each $s$. Then
\[2^{-n} \sum_{s\subseteq[n]} ||P^\star-P^\star_s||_1\le 2^{-n/2}\int_{\mathbb{R}^m} \max_{z\in \B^{n+1}} f_z(w) dw\]
\end{theorem}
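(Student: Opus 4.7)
The plan is to apply Corollary \ref{parityAverage} pointwise in $w$ to the function $g_w:\B^{n+1}\to\mathbb{R}$ defined by $g_w(x,y)=f_{(x,y)}(w)$, and then integrate. First I would observe that since $Z$ is uniform on $\B^{n+1}$, the mixture $P^\star$ admits the density $p^\star(w)=2^{-(n+1)}\sum_{z\in\B^{n+1}} f_z(w)$, and likewise $P^\star_s$ has density $p^\star_s(w)=2^{-n}\sum_{x\in\B^n} f_{(x,p_s(x))}(w)$. Rewriting these as expectations over the uniform inputs gives $p^\star(w)=\E[g_w(X,Y)]$ and $p^\star_s(w)=\E[g_w(X,p_s(X))]$, which puts us in position to invoke the corollary with $g_w$ in place of $f$.

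Next, for each fixed $w$ I would apply Corollary \ref{parityAverage} to $g_w$ to obtain
\[
\sum_{s\subseteq[n]}|p^\star(w)-p^\star_s(w)|\le 2^{n/2}\sqrt{\E[g_w^2(X,Y)]}=2^{n/2}\sqrt{2^{-(n+1)}\sum_{z}f_z(w)^2}.
\]
To convert the $L_2$-type quantity on the right into the $L_\infty$-type quantity appearing in the target bound, I would use the standard inequality
\[
\sum_{z}f_z(w)^2\le\Bigl(\max_{z}f_z(w)\Bigr)\sum_{z}f_z(w)=2^{n+1}\Bigl(\max_{z}f_z(w)\Bigr)p^\star(w),
\]
and then the trivial bound $p^\star(w)\le\max_z f_z(w)$ (since $p^\star$ is an average of the $f_z$'s). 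Combining these yields $\sqrt{\E[g_w^2(X,Y)]}\le\max_z f_z(w)$, so
\[
\sum_{s\subseteq[n]}|p^\star(w)-p^\star_s(w)|\le 2^{n/2}\max_{z\in\B^{n+1}}f_z(w).
\]

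Finally, integrating this inequality over $w\in\mathbb{R}^m$ and exchanging the sum and integral gives
\[
\sum_{s\subseteq[n]}\|P^\star-P^\star_s\|_1\le 2^{n/2}\int_{\mathbb{R}^m}\max_{z\in\B^{n+1}}f_z(w)\,dw,
\]
and dividing through by $2^n$ yields the desired bound. The only nontrivial step is the pointwise application of Corollary \ref{parityAverage}; once that is in place, everything else is elementary. I do not anticipate any real obstacle, but I would be careful that the $\max_z f_z(w)$ is measurable (it is, as a finite maximum of densities) so that the Fubini-style swap of sum and integral is justified.
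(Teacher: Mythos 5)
Your proof is correct and follows essentially the same route as the paper: apply Corollary \ref{parityAverage} pointwise to $g_w(x,y)=f_{(x,y)}(w)$, bound the resulting $L_2$ quantity by $\max_z f_z(w)$, and integrate. The only cosmetic difference is that the paper bounds $\sqrt{\E[f_Z^2(w)]}\le\max_z f_z(w)$ directly (a bounded nonnegative random variable has second moment at most the square of its supremum), whereas you route through the slightly longer Cauchy--Schwarz-style chain $\sum_z f_z(w)^2\le(\max_z f_z(w))\sum_z f_z(w)$ together with $p^\star(w)\le\max_z f_z(w)$, which is equivalent.
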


\begin{proof}
Let $f^\star=2^{-n-1}\sum_{z\in\B^{n+1}} f_z$ be the probability density function of $P^\star$, and for each $s\subseteq[n]$, let $f^\star_s=2^{-n}\sum_{x\in\B^n} f_{(x,p_s(x))}$ be the probability density function of $P^\star_s$.

For any $w\in \mathbb{R}^m$, we have that
\begin{align*}
&\sum_{s\subseteq[n]} |f^\star(w)-f^\star_s(w)|\\
&=\sum_{s\subseteq[n]} |E[f_{Z}(w)]-E[f_{(X,p_s(X))}(w)|\\
&\le 2^{n/2}\sqrt{E[f^2_{Z}(w)]}\\
&\le 2^{n/2}\max_{z\in\B^{n+1}} f_z(w)\\
\end{align*}
That means that

\begin{align*}
& \sum_{s\subseteq[n]} ||P^\star-P^\star_s||_1\\
&= \sum_{s\subseteq[n]} \int_{\mathbb{R}^m} |f^\star(w)-f^\star_s(w)| dw\\
&\le \int_{\mathbb{R}^m} \sum_{s\subseteq[n]} |f^\star(w)-f^\star_s(w)| dw\\
&\le 2^{n/2}\int_{\mathbb{R}^m} \max_{z\in \B^{n+1}} f_z(w) dw
\end{align*}
\end{proof}

In particular, if these probability distributions are the result of applying a well-behaved distortion function to a Gaussian distribution, we have the following.

\begin{theorem} \label{gausParity}
Let $\sigma, B_0, B_1>0$, and $n$ and $m$ be positive integers with $m<1/B_1$. Also, for every $z\in\B^{n+1}$, let $f^{(z)}:\mathbb{R}^m\rightarrow\mathbb{R}^m$ be a function such that $|f^{(z)}_i(w)|\le B_0$ for all $i$ and $w$ and $|\frac{\partial f^{(z)}_i}{\partial w_j}(w)|\le B_1$ for all $i$, $j$, and $w$. Now, randomly select $Z\in\B^{n+1}$ and $X\in\B^n$ uniformly and independently. Next, draw $W_0$ from $\mathcal{N}(0,\sigma I)$, set $W=W_0+f^{(Z)}(W_0)$ and $W'_s=W_0+f^{(X,p_s(X))}(W_0)$ for each $s\subseteq[n]$. Let $P^\star$ be the probability distribution of $W$ and $P^\star_s$ be the probability distribution of $W'_s$ for each $s$. Then
\[2^{-n}\sum_{s\subseteq[n]}||P^\star-P^\star_s||_1\le 2^{-n/2}\cdot e^{2m B_0/\sqrt{2\pi\sigma}}/(1-mB_1)\]
\end{theorem}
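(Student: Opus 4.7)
The plan is to reduce to the theorem immediately preceding the statement and then bound $\int_{\mathbb{R}^m} \max_{z \in \B^{n+1}} f_z(w)\,dw$, where $f_z$ denotes the density of $P_z$. Since every entry of $\nabla f^{(z)}$ has absolute value at most $B_1$, the operator norm of $\nabla f^{(z)}$ is at most $mB_1 < 1$, so $g_z(w_0) := w_0 + f^{(z)}(w_0)$ is a contractive perturbation of the identity, hence a $C^1$-diffeomorphism of $\mathbb{R}^m$ onto itself. The change-of-variables formula then yields
\begin{align}
f_z(w) = \frac{\phi_\sigma(g_z^{-1}(w))}{\bigl|\det(I + \nabla f^{(z)}(g_z^{-1}(w)))\bigr|},
\end{align}
where $\phi_\sigma$ denotes the density of $\mathcal{N}(0,\sigma I)$.

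For the Jacobian, I would use that every eigenvalue $\lambda$ of $\nabla f^{(z)}$ satisfies $|\lambda| \le mB_1$, so $|\det(I + \nabla f^{(z)})| \ge (1-mB_1)^m$ via the naive bound $\prod_i |1+\lambda_i| \ge \prod_i (1 - |\lambda_i|)$. A finer argument bounding $\sum_i |\lambda_i|$ by the nuclear norm and using concavity of $\log(1+x)$ can replace this by the tighter $1/(1-mB_1)$ appearing in the statement. In either case we obtain a $z$-uniform pointwise bound $f_z(w) \le \phi_\sigma(g_z^{-1}(w))/(1 - mB_1)$.

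The decisive geometric observation is that, uniformly in $z$, $w_0 := g_z^{-1}(w)$ satisfies $\|w_0 - w\|_\infty = \|f^{(z)}(w_0)\|_\infty \le B_0$. Hence, for every $w$,
\begin{align}
\max_{z \in \B^{n+1}} \phi_\sigma(g_z^{-1}(w)) \;\le\; \sup_{u:\,\|u - w\|_\infty \le B_0} \phi_\sigma(u) \;=\; \prod_{i=1}^m \frac{1}{\sqrt{2\pi\sigma}}\, e^{-\max(0,\,|w_i| - B_0)^2/(2\sigma)},
\end{align}
where the product factorization exploits that $\phi_\sigma$ is a product of one-dimensional Gaussians and the supremum is attained coordinate-wise. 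Integrating coordinate-wise,
\begin{align}
\int_{\mathbb{R}^m} \sup_{u:\,\|u-w\|_\infty \le B_0} \phi_\sigma(u)\, dw = \left(1 + \frac{2B_0}{\sqrt{2\pi\sigma}}\right)^{\!m} \le e^{\,2mB_0/\sqrt{2\pi\sigma}},
\end{align}
using $\int_{\mathbb{R}} e^{-\max(0,|t|-B_0)^2/(2\sigma)}\,dt/\sqrt{2\pi\sigma} = 1 + 2B_0/\sqrt{2\pi\sigma}$.

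Combining these estimates gives $\int \max_z f_z(w)\,dw \le e^{2mB_0/\sqrt{2\pi\sigma}}/(1-mB_1)$, and plugging this into the previous theorem delivers the claimed bound $2^{-n/2}\,e^{2mB_0/\sqrt{2\pi\sigma}}/(1-mB_1)$. The step I expect to be the most delicate is the Jacobian lower bound in the stated form $(1-mB_1)$ rather than the crude $(1-mB_1)^m$; the rest of the argument reduces to routine coordinate-wise Gaussian integration and the change of variables, which should go through cleanly once the diffeomorphism and the sup-over-$z$ geometric estimate are in place.
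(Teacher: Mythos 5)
Your proposal follows the same route as the paper's proof: use the derivative bound to see that $g_z(w_0)=w_0+f^{(z)}(w_0)$ is injective (hence a change of variables applies), bound the density pointwise by combining $\|g_z^{-1}(w)-w\|_\infty\le B_0$ with a lower bound of $1-mB_1$ on the Jacobian determinant, integrate coordinate-wise to get $(1+2B_0/\sqrt{2\pi\sigma})^m\le e^{2mB_0/\sqrt{2\pi\sigma}}$, and feed the result into the preceding theorem. So structurally you and the paper agree.

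The one place your sketch is weaker than you suggest is the Jacobian step, and it is worth being precise about why. Your proposed ``finer argument'' is to bound $\sum_i|\lambda_i|$ by the nuclear norm of $\nabla f^{(z)}$ and then invoke concavity of $\log(1+x)$. Neither half works as stated: the nuclear norm of an $m\times m$ matrix with entries bounded by $B_1$ can be of order $m^{3/2}B_1$ (take $B_1$ times a Hadamard matrix, whose singular values are all $\sqrt{m}$), so this does not give $\sum_i|\lambda_i|\le mB_1$; and concavity of $\log(1+x)$ gives an upper bound on $\sum_i\log(1+\lambda_i)$ via Jensen, not the lower bound you need. Your fallback $(1-mB_1)^m$ is, by contrast, easy (each $|\lambda_i|\le\|\nabla f^{(z)}\|_{\mathrm{op}}\le mB_1$). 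To be fair, the paper's own proof also asserts $|\det(I+\nabla f^{(z)})|\ge 1-mB_1$ without justification, so you are on the same footing as the authors; but if you want to actually prove the stated constant, you will need a genuinely different argument than the one you gestured at, and that step is not routine.
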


\begin{proof}
First, note that the bound on $|\frac{\partial f^{(z)}_i}{\partial w_j}(w)|$ ensures that if $w+f^{(z)}(w)=w'+f^{(z)}(w')$ then $w=w'$. So, for any $z$ and $w$, the probability density function of $W_0+f^{(z)}(W_0)$ at $w$ is less than or equal to 
\[(2\pi\sigma)^{-m/2} e^{-\sum_{i=1}^m \max^2(|w_i|-B_0,0)/2\sigma}/ |I+[\nabla f^{(z)}]^T(w)|\]
which is less than or equal to 
\[(2\pi\sigma)^{-m/2} e^{-\sum_{i=1}^m \max^2(|w_i|-B_0,0)/2\sigma}/(1-mB_1)\]

By the previous theorem, that implies that
\begin{align*}
&2^{-n}\sum_{s\subseteq[n]}||P^\star-P^\star_s||_1\\
&\le 2^{-n/2}\int_{\mathbb{R}^m} (2\pi\sigma)^{-m/2} e^{-\sum_{i=1}^m \max^2(|w_i|-B_0,0)/2\sigma}/(1-mB_1) dw\\
&=2^{-n/2}\left[\int_{\mathbb{R}}  (2\pi\sigma)^{-1/2} e^{- \max^2(|w'|-B_0,0)/2\sigma} dw'\right]^m/(1-mB_1)\\
&=2^{-n/2} [1+2B_0/\sqrt{2\pi\sigma}]^m/(1-mB_1)\\
&\le 2^{-n/2}\cdot e^{2m B_0/\sqrt{2\pi\sigma}}/(1-mB_1)
\end{align*}
\end{proof}

The problem with this result is that it requires $f$ to have values and derivatives that are bounded everywhere, and the functions that we will encounter in practice will not necessarily have that property. We can reasonably require that our functions have bounded values and derivatives in the regions we are likely to evaluate them on, but not in the entire space. Our solution to this will be to replace the functions with new functions that have the same value as them in small regions that we are likely to evaluate them on, and that obey the desired bounds. The fact that we can do so is established by the following theorem.

\begin{theorem}
Let $B_0, B_1, B_2, r, \sigma>0$, $\mu\in\mathbb{R}^m$, and $f:\mathbb{R}^m\rightarrow \mathbb{R}^m$ such that there exists $x$ with $||x-\mu||_1< r$ such that $f$ is $(r,B_1, B_2)$-stable at $x$ and $|f_i(x)|\le B_0$ for all $i$. Then there exists a function $f^\star:\mathbb{R}^m\rightarrow\mathbb{R}^m$ such that $f^\star(x)=f(x)$ for all $x$ with $||x-\mu||_1< r$, and $|f_i(x)|\le B_0+2rB_1+2r^2B_2$ and $|\frac{\partial f_i}{\partial x_j}(x)|\le 2B_1+2rB_2$ for all $x\in\mathbb{R}^m$ and $i,j\in[m]$.
\end{theorem}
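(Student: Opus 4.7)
First I would observe that since $\|x-\mu\|_1 < r$, any $y$ with $\|y-\mu\|_1 < r$ satisfies $\|y-x\|_1 < 2r$, so $B_r(\mu) \subset B_{2r}(x)$, and by the $(r,B_1,B_2)$-stability of $f$ at $x$ all second partials of $f$ are bounded by $B_2$ on this region. Using $|u^{\top}Hv|\le B_2\|u\|_1\|v\|_1$ whenever all entries of $H$ are bounded in absolute value by $B_2$, Taylor's theorem with integral remainder at $x$ gives
\begin{align*}
|\partial_j f_i(y)| &\le B_1+B_2\|y-x\|_1 \le B_1+2rB_2,\\
|f_i(y)| &\le B_0+B_1\|y-x\|_1+\tfrac12 B_2\|y-x\|_1^2 \le B_0+2rB_1+2r^2B_2,
\end{align*}
for every $y\in B_r(\mu)$. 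These are precisely the bounds demanded of $f^\star$, so the task reduces to extending $f|_{B_r(\mu)}$ to all of $\mathbb{R}^m$ without inflating them by more than the factor of $2$ on derivatives.

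\textbf{Construction via cutoff and truncated Taylor.} I would interpolate between $f$ and the linear Taylor polynomial $T_i(y) := f_i(x)+\nabla f_i(x)\cdot(y-x)$, which has $|\partial_j T_i|\le B_1$ globally. Pick $r'\in(r,\,2r-\|x-\mu\|_1)$ so that $B_{r'}(\mu)\subset B_{2r}(x)$, and fix a smooth cutoff $\phi:\mathbb{R}^m\to[0,1]$ with $\phi\equiv 1$ on $B_r(\mu)$, $\operatorname{supp}\phi\subset B_{r'}(\mu)$, and $\|\nabla\phi\|_\infty \le C/(r'-r)$; such a $\phi$ is obtained by mollifying the indicator of a smoothed $\ell_1$-ball. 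Let $\Pi:\mathbb{R}\to\mathbb{R}$ be a smooth, nondecreasing, $1$-Lipschitz saturation with $\Pi(t)=t$ on $[-(B_0+2rB_1),\,B_0+2rB_1]$ and $|\Pi(t)|\le B_0+2rB_1+2r^2B_2$ globally. Define
\[
f^\star_i(y) := \phi(y)\,f_i(y) + \bigl(1-\phi(y)\bigr)\,\Pi\bigl(T_i(y)\bigr),
\]
with the convention that $\phi(y)f_i(y):=0$ wherever $\phi(y)=0$, so $f_i$ is never evaluated outside $B_{r'}(\mu)\subset B_{2r}(x)$, where it is well defined and $C^2$.

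\textbf{Verification and main obstacle.} On $B_r(\mu)$ we have $\phi\equiv 1$, hence $f^\star=f$. For an arbitrary $y$, expand
\[
\partial_j f^\star_i \;=\; \phi\,\partial_j f_i \;+\; (1-\phi)\,\Pi'(T_i)\,\partial_j T_i \;+\; (\partial_j\phi)\bigl(f_i-\Pi(T_i)\bigr),
\]
bound $|\partial_j f_i|\le B_1+2rB_2$ and $|\partial_j T_i|\le B_1$ by Step 1, use $|\Pi'|\le 1$, and note that on $\operatorname{supp}\phi$ the Taylor remainder gives $|f_i-T_i|\le 2r^2B_2$, so $|f_i-\Pi(T_i)|$ is also $O(r^2B_2)$. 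Choosing $r'-r$ large enough (possible by the strict slack $r-\|x-\mu\|_1>0$) to make $\|\nabla\phi\|_\infty\cdot 2r^2B_2\le B_1$ yields $|\partial_j f^\star_i|\le (B_1+2rB_2)+B_1 = 2B_1+2rB_2$, and the value bound $|f^\star_i|\le B_0+2rB_1+2r^2B_2$ follows because both $\phi f_i$ and $\Pi(T_i)$ obey it. The principal obstacle is this cutoff trade-off: the transition annulus must simultaneously sit inside $B_{2r}(x)$ (to control $f$ and its remainder) and be wide enough that $\|\nabla\phi\|_\infty$ does not overspend the derivative budget $B_1$. In the regime $r-\|x-\mu\|_1\ll r^2B_2/B_1$ the naive product cutoff is tight, and one would either replace the sharp $\ell_1$ cutoff by a lightly-regularised distance function or invoke a Whitney-type extension to produce $f^\star$ directly meeting the stated bounds.
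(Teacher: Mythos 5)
Your proposal takes a genuinely different route from the paper, and it has a real gap that you yourself flag but do not close. You interpolate between $f$ and a saturated first-order Taylor polynomial $\Pi(T_i)$ via a cutoff $\phi$, and the problematic term is precisely the one you isolate: $(\partial_j\phi)\bigl(f_i-\Pi(T_i)\bigr)$, controlled by $\|\nabla\phi\|_\infty\cdot 2r^2B_2$. For this to cost at most the available derivative slack $B_1$, the transition annulus needs width on the order of $r^2B_2/B_1$. But the annulus must lie inside $B_{2r}(x)$, which forces its width below $r-\|x-\mu\|_1$, and nothing in the hypotheses prevents $\|x-\mu\|_1$ from being arbitrarily close to $r$. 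So in the regime $r-\|x-\mu\|_1 \ll r^2B_2/B_1$ your construction cannot meet the stated bound, and the phrase ``invoke a Whitney-type extension'' names a possible escape hatch without actually carrying it out; Whitney-type extensions do not generally come with the specific constant $2$ that the theorem requires.

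The paper sidesteps the trade-off entirely by \emph{not} interpolating. It extends $f$ past the $\ell_1$-ball $B_{r'}(\mu)$ by radial projection, $\bar f(x')=f\bigl(\mu+(x'-\mu)\,r'/\|x'-\mu\|_1\bigr)$, and a direct Jacobian computation for this $\ell_1$-projection shows $|\partial_j\bar f_i(x')|\le 2r'(B_1+rB_2)/\|x'-\mu\|_1\le 2B_1+2rB_2$ for $\|x'-\mu\|_1\ge r'$, because the Jacobian row-sums are controlled by $2r'/\|x'-\mu\|_1\le 2$. Thus the glued function already satisfies the target derivative bound pointwise wherever it is differentiable, regardless of how thin the annulus $r<\|x'-\mu\|_1<r'$ is; the only remaining defect is non-differentiability on the sphere $\|x'-\mu\|_1=r'$ and the coordinate hyperplanes $x'_j=\mu_j$, and this is cured by a vanishingly small variable-width mollification $f^\star * \mathcal{N}(0,\delta h^2(\cdot)I)$ with $h\equiv 0$ on $B_r(\mu)$. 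Since the smoothing is an $O(\delta)$ perturbation rather than a bridge across a gap, there is no budget to overspend. The key idea you are missing is to extend by a map that freezes $f$ along rays (so the extension inherits, after a factor of $2$ from the projection's Jacobian, the interior bound) rather than pushing toward a global affine proxy; the latter forces a bona fide interpolation and hence the cutoff trade-off you ran into.
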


\begin{proof}
First, observe that the $(r,B_1, B_2)$-stability of $f$ at $x$ implies that for every $x'$ with $||x-x'||\le 2r$, we have that $|\frac{\partial f_i}{\partial x_j}(x')|\le B_1+rB_2$ and $|f_i(x')|\le B_0+2r(B_1+r B_0)$. In particular, this holds for all $x'$ with $||x'-\mu||_1\le 2r-||x-\mu||_1<r$. 

 That means that there exists $r'>r$ such that the values and derivatives of $f$ satisfy the desired bounds for all $x'$ with $||x'-\mu||_1\le r'$. Now, define the function $\overline{f}:\mathbb{R}^m\rightarrow\mathbb{R}^m$ such that $\overline{f}(x')=f(\mu+(x'-\mu)\cdot r'/||x'-\mu||_1)$. This function satisfies the bounds for all $x'$ with $||x'||_1>r'$, except that it may not be differentiable when $x'_j=\mu_j$ for some $j$. Consider defining $f^\star(x')$ to be equal to $f(x')$ when $||x'-\mu||_1\le r'$ and $\overline{f'}(x')$ otherwise. This would almost work, except that it may not be differentiable when $||x'-\mu||_1=r'$, or $||x'-\mu||_1>r'$ and $x'_j=\mu_j$ for some $j$.
 
 In order to fix this, we define a smooth function $h$ of bounded derivative such that $h(x')=0$ whenever $||x'-\mu||_1\le r$, and $h(x')\ge 1$ whenever $||x'-\mu||_1\ge r'$. Then, for all sufficiently small positive constants $\delta$, $f^\star * \mathcal{N}(0,\delta\cdot h^2(x') I)$ has the desired properties.
\end{proof}

Combining this with the previous theorem yields the following.
\begin{corollary}
Let $\sigma, B_0, B_1,B_2,r>0$, $\mu\in\mathbb{R}^m$, and $n$ and $m$ be positive integers with $m<1/(2B_1+2rB_2)$. Then, for every $z\in\B^{n+1}$, let $f^{(z)}:\mathbb{R}^m\rightarrow\mathbb{R}^m$ be a function such that there exists $x$ with $||x-\mu||_1< r$ such that $f$ is $(r,B_1, B_2)$-stable at $x$ and $|f_i(x)|\le B_0$ for all $i$. Next, draw $W_0$ from $\mathcal{N}(\mu,\sigma I)$. Now, randomly select $Z\in\B^{n+1}$ and $X\in\B^n$ uniformly and independently. Then, set $W=W_0+f^{(Z)}(W_0)$ and $W'_s=W_0+f^{(X,p_s(X))}(W_0)$ for each $s\subseteq[n]$. Let $P^\star$ be the probability distribution of $W$ and $P^\star_s$ be the probability distribution of $W'_s$ for each $s$. Then
\[2^{-n}\sum_{s\subseteq[n]}||P^\star-P^\star_s||_1\le 2^{-n/2}\cdot e^{2m ( B_0+2rB_1+2r^2B_2)/\sqrt{2\pi\sigma}}/(1-2mB_1-2rmB_2)+2e^{-(r/2\sqrt{\sigma}-m/\sqrt{2\pi})^2/m}\]
\end{corollary}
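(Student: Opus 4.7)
The plan is to reduce the corollary to Theorem~\ref{gausParity} by first globalizing each $f^{(z)}$ via the function-extension theorem stated immediately before, and then absorbing the resulting error outside the stability ball with a Gaussian tail bound taken from the earlier blurring lemma. For each $z\in\B^{n+1}$, the preceding theorem (applied with the given $\mu$, $B_0$, $B_1$, $B_2$, $r$) produces $f^{\star(z)}:\mathbb{R}^m\to\mathbb{R}^m$ that agrees with $f^{(z)}$ on $\{x:\|x-\mu\|_1<r\}$ and satisfies the \emph{global} bounds $|f^{\star(z)}_i(x)|\le B_0':=B_0+2rB_1+2r^2B_2$ and $|\partial f^{\star(z)}_i/\partial x_j(x)|\le B_1':=2B_1+2rB_2$ for all $x\in\mathbb{R}^m$ and all $i,j\in[m]$. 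I then introduce the auxiliary random variables $\tilde W:=W_0+f^{\star(Z)}(W_0)$ and $\tilde W'_s:=W_0+f^{\star(X,p_s(X))}(W_0)$, with laws $\tilde P^\star$ and $\tilde P^\star_s$.

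Translating by $-\mu$ to recenter, the shifted family $\tilde f^{(z)}(\cdot):=f^{\star(z)}(\cdot+\mu)$ satisfies the hypotheses of Theorem~\ref{gausParity} with constants $B_0'$ and $B_1'$; the corollary's assumption $m<1/(2B_1+2rB_2)=1/B_1'$ is exactly the bound on $m$ required there. Applying Theorem~\ref{gausParity} to the recentered family yields
\[
2^{-n}\sum_{s\subseteq[n]}\|\tilde P^\star-\tilde P^\star_s\|_1 \;\le\; 2^{-n/2}\cdot e^{2m B_0'/\sqrt{2\pi\sigma}}/(1-mB_1'),
\]
which, once $B_0'$ and $B_1'$ are expanded in terms of $B_0,B_1,B_2,r$, is exactly the first summand of the advertised bound.

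The remaining discrepancy between $P^\star$ and $\tilde P^\star$ (and between $P^\star_s$ and $\tilde P^\star_s$) is handled by coupling through the common $W_0$: on the event $\{\|W_0-\mu\|_1<r\}$ the agreement $f^{(z)}=f^{\star(z)}$ forces $W=\tilde W$, so $\|P^\star-\tilde P^\star\|_1\le 2\pp[\|W_0-\mu\|_1\ge r]$ and likewise $\|P^\star_s-\tilde P^\star_s\|_1\le 2\pp[\|W_0-\mu\|_1\ge r]$ for every $s$. The Gaussian $\ell_1$-tail estimate carried out inside the earlier blurring lemma (take $\lambda=r/m-\sqrt{2\sigma/\pi}$ in the Chernoff bound derived there) gives $\pp[\|W_0-\mu\|_1\ge r]\le e^{-(r/2\sqrt{\sigma}-m/\sqrt{2\pi})^2/m}$. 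Averaging the triangle inequality $\|P^\star-P^\star_s\|_1\le\|P^\star-\tilde P^\star\|_1+\|\tilde P^\star-\tilde P^\star_s\|_1+\|\tilde P^\star_s-P^\star_s\|_1$ over $s$ (the two outer contributions being constant in $s$) combines these ingredients into the stated inequality. The main obstacle is conceptual rather than computational: the whole purpose of the preceding extension theorem is to upgrade local regularity to global regularity so that Theorem~\ref{gausParity} can be applied at all; once that replacement is in place the proof is a matter of chaining two pre-proved estimates, and the only subtlety is tracking how the constants $B_0,B_1,B_2$ inflate to $B_0',B_1'$ without violating the structural constraint $mB_1'<1$ — which is precisely why the corollary hypothesises $m<1/(2B_1+2rB_2)$.
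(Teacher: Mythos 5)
Your plan is the paper's plan: extend each $f^{(z)}$ globally via the preceding extension theorem, feed the extensions into Theorem~\ref{gausParity} (after recentering), and charge the discrepancy to the Gaussian tail event $E^c=\{\|W_0-\mu\|_1\ge r\}$. The hypothesis $m<1/(2B_1+2rB_2)=1/B_1'$ is indeed exactly what Theorem~\ref{gausParity} needs for the inflated constants $B_0'=B_0+2rB_1+2r^2B_2$, $B_1'=2B_1+2rB_2$.

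There is, however, a genuine constant-tracking gap in your last step. You bound $\|P^\star-\tilde P^\star\|_1\le 2\pp[E^c]$ and $\|P^\star_s-\tilde P^\star_s\|_1\le 2\pp[E^c]$ and then run the three-term triangle inequality, which yields
\[
2^{-n}\sum_{s}\|P^\star-P^\star_s\|_1\;\le\; 2^{-n/2}\,e^{2mB_0'/\sqrt{2\pi\sigma}}/(1-mB_1')\;+\;4\,e^{-(r/2\sqrt\sigma-m/\sqrt{2\pi})^2/m},
\]
i.e.\ a trailing term of $4e^{-\cdots}$, not the advertised $2e^{-\cdots}$. You cannot get the stated constant by passing through the \emph{full} laws $\tilde P^\star,\tilde P^\star_s$, because each of the two substitutions $P^\star\leftrightarrow\tilde P^\star$ and $P^\star_s\leftrightarrow\tilde P^\star_s$ independently costs $2\pp[E^c]$ in $L_1$. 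The fix is to route the Theorem~\ref{gausParity} argument through the \emph{sub}-densities restricted to the event $E$. Concretely, let $q^{(z)}$ be the density of $W_0+f^{(z)}(W_0)$ contributed by $W_0\in E$; since $f^{(z)}=f^{(z)\star}$ on $E$, one has $q^{(z)}=q^{(z)\star}$ exactly, and the pointwise upper bound $(2\pi\sigma)^{-m/2}e^{\cdots}/(1-mB_1')$ used in the theorem's proof applies to these sub-densities (the underlying estimate, via Corollary~\ref{parityAverage}, never uses normalization). Writing $p_W=q^\star+(p_W-q^\star)$ and $p_{W'_s}=q^\star_s+(p_{W'_s}-q^\star_s)$ with the remainders being nonnegative and integrating to $\pp[E^c]$, the split $\|P^\star-P^\star_s\|_1\le\|q^\star-q^\star_s\|_1+\|p_W-q^\star\|_1+\|p_{W'_s}-q^\star_s\|_1$ gives the single error term $2\pp[E^c]$ and recovers the stated bound. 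Your version is otherwise sound, and the weaker $4e^{-\cdots}$ bound would serve equally well downstream, but as written it does not prove the corollary's constant.
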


\begin{proof}
For each $z$, we can define $f^{(z)\star}$ as an approximation of $f^{(z)}$ as explained in the previous theorem. $||W_0-\mu||_1\le r$ with a probability of at least $1-e^{-(r/2\sqrt{\sigma}-m/\sqrt{2\pi})^2/m}$, in which case $f^{(z)\star}(W_0)=f^{(z)}(W_0)$ for all $z$. For a random $s$, the probability distributions of $W_0+f^{(Z)\star}(W_0)$ and $W_0+f^{(X,p_s(X))\star}(W_0)$ have an $L_1$ difference of at most $2^{-n/2}\cdot e^{2m ( B_0+2rB_1+2r^2B_2)/\sqrt{2\pi\sigma}}/(1-2mB_1-2rmB_2)$ on average by $\ref{gausParity}$. Combining these yields the desired result.
\end{proof}

That finally gives us the components needed to prove the following.

\begin{theorem}
Let $m,n>0$ and define $f^{[z]}:\mathbb{R}^m\rightarrow \mathbb{R}^m$ to be a smooth function for all $z\in \B^{n+1}$. Also, let $\sigma, B_0, B_1, B_2>0$ such that $B_1<1/2m$, $m\sqrt{2\sigma/\pi}<r\le(1-2mB_1)/(2mB_2)$, $T$ be a positive integer, and $\mu_0\in \mathbb{R}^m$. Then, let $\star$ be the uniform distribution on $\B^{n+1}$, and for each $s\subseteq [n]$, let $\rho_s$ be the probability distribution of $(X,p_s(X))$ when $X$ is chosen randomly from $\B^n$. Next, let $P_{\mathcal{Z}}$ be a probability distribution on $\B^{n+1}$ that is chosen by means of the following procedure. First, with probability $1/2$, set $P_{\mathcal{Z}}=\star$. Otherwise, select a random $S\subseteq[n]$ and set $P_{\mathcal{Z}}=\rho_S$.

Now, draw $W^{(0)}$ from $\mathcal{N}(\mu_0,\sigma I)$, independently draw $Z_i\sim P_{\mathcal{Z}}$ and $\Delta^{(i)}\sim \mathcal{N}(0,[2mB_1-m^2B_1^2]\sigma I)$ for all $0<i\le T$. Then, set $W^{(i)}=W^{(i-1)}+f^{[Z_i]}(W^{(i-1)})+\Delta^{(i)}$ for each $0<i\le T$, and let $p$ be the probability that there exists $0\le i\le T$ such that $F^{[Z_i]}$ is $(r, B_1, B_2)$-unstable at $W^{(i)}$ or $||F^{[Z_i]}(W^{(i)})||_\infty>B_0$. Finally, let $Q$ and $Q'_s$ be the probability distribution of $W^{(T)}$ given that $P_{\mathcal{Z}}=\star$ and the probability distribution of $W^{(T)}$ given that $P_{\mathcal{Z}}=\rho_s$. Then
\[2^{-n}\sum_{s\subseteq[n]}||Q-Q'_s||_1\le 4p+T(4\epsilon+\epsilon'+4\epsilon'')\]
where
 \begin{align*}
      &\epsilon=\frac{4(m+2)m^2B_2\sqrt{2\sigma/\pi}/(1-mB_1)+3m^5B_2^2\sigma/(1-mB_1)^2}{8}\\
 &+(1-(1+mB_1)B_2mr)e^{-(r/2\sqrt{\sigma}-m/\sqrt{2\pi})^2/m}
  \end{align*}

\[\epsilon'=2^{-n/2}\cdot e^{2m ( B_0+2rB_1+2r^2B_2)/\sqrt{2\pi\sigma}}/(1-2mB_1-2rmB_2)+2e^{-(r/2\sqrt{\sigma}-m/\sqrt{2\pi})^2/m}\]
\[\epsilon''=e^{-(r/2\sqrt{\sigma}-m/\sqrt{2\pi})^2/m}\]
\end{theorem}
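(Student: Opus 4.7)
The plan is to couple, for each hypothesis ($P_{\mathcal{Z}}=\star$ and $P_{\mathcal{Z}}=\rho_s$), a \emph{revised} process that, after every step, re-samples the weight vector from an idealized Gaussian $\mathcal{N}(\mu^{(i)},\sigma I)$, and to track the $L_1$ discrepancy between the test and null revised processes step by step. The key arithmetic observation driving the whole argument is the identity
\begin{align*}
(1-mB_1)^2\sigma + [2mB_1-m^2B_1^2]\sigma \;=\; \sigma,
\end{align*}
which says that the variance of $\Delta^{(i)}$ is tuned precisely to lift a $(1-mB_1)^2\sigma$-Gaussian back to a $\sigma$-Gaussian. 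On the good event that $f^{[Z_i]}$ is $(r,B_1,B_2)$-stable at $W^{(i-1)}$ and has bounded values there, Lemma \ref{stableBlur} shows that applying $x\mapsto x+f^{[Z_i]}(x)$ to a $\mathcal{N}(\mu,\sigma I)$ input yields a $((1-mB_1)^2\sigma,\epsilon)$-blurring, and Lemma \ref{addBlur} combined with the identity above upgrades this to a $(\sigma,\epsilon)$-blurring after convolution with $\Delta^{(i)}$. The $\sigma$-revision then alters the state with probability at most $\epsilon$ per step.

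First I would accumulate the revision cost: across $T$ steps and both hypotheses, summing the per-step $L_1$ contribution of the revisions (each bounded by $2\epsilon$ from the definition of blurring) yields the $T\cdot 4\epsilon$ term. Next I would control the parity-versus-random gap per step, conditional on the revised state at step $i-1$ being an exact Gaussian $\mathcal{N}(\mu^{(i-1)},\sigma I)$: this is precisely the setup of the corollary following Theorem \ref{gausParity}, which bounds $2^{-n}\sum_s\|P^\star-P^\star_s\|_1$ for a single step by $\epsilon'$, with the tail contribution $2\epsilon''$ already absorbed. Since the added noise $\Delta^{(i)}$ is common to both hypotheses, convolving with it does not increase the $L_1$ distance, and iterating via the triangle inequality over $T$ steps produces the $T\epsilon'$ portion; the extra $4T\epsilon''$ accounts separately for the tail events where $W^{(i)}$ strays outside the stability ball of radius $r$, contributing in each of the two processes on each of the $T$ steps (up to the usual $L_1$-versus-TV factor). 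Finally, the $4p$ term absorbs, via a union bound across the two processes, the event that the stability/boundedness assumption actually fails at some step, so that the per-step blurring and parity-comparison bounds need not be invoked on that event.

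The main obstacle will be the bookkeeping: keeping both revised processes well-defined even on the bad set, and justifying that once the revised distribution at step $i-1$ is exactly $\mathcal{N}(\mu^{(i-1)},\sigma I)$ with $\mu^{(i-1)}$ random, the single-step corollary to Theorem \ref{gausParity} still applies after averaging over $\mu^{(i-1)}$. Concretely, one must check that all the per-step quantities $\epsilon$, $\epsilon'$, $\epsilon''$ are uniform in the Gaussian center so that Fubini lets us pull the $L_1$ bounds outside the $\mu^{(i-1)}$ expectation; this is what makes the induction close. Once the revision coupling is set up carefully, the three error sources are additive across steps by the triangle inequality, and the claimed bound $4p+T(4\epsilon+\epsilon'+4\epsilon'')$ follows.
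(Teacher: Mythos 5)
Your proposal matches the paper's proof in all essential respects: the variance identity $(1-mB_1)^2\sigma + [2mB_1 - m^2B_1^2]\sigma = \sigma$ that keeps the Gaussian width stationary, the use of Lemma~\ref{stableBlur} and Lemma~\ref{addBlur} to show each step preserves a $(\sigma,\epsilon)$-blurring, the corollary to Theorem~\ref{gausParity} applied conditionally on the Gaussian center to bound the per-step parity-versus-random $L_1$ gap by $\epsilon'$, the $\epsilon''$ tail term for the state straying farther than $r$ from the center, and the $p$ term for the first/second/third-derivative stability event, assembled into $4p + T(4\epsilon + \epsilon' + 4\epsilon'')$.

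One place your description is imprecise, and where I'd caution you as you write this out: the phrase ``track the $L_1$ discrepancy between the test and null revised processes step by step, iterating via the triangle inequality'' reads like the GD-style cumulative $L_1$ argument, which does not transfer directly to SGD because the sample $Z_i$ enters the update map itself, so one cannot apply a data-processing step as in the GD proof. The actual mechanism in the paper is that \emph{both} processes are revised to the \emph{same} idealized base distribution $\widehat{P}^{(i)}$ (defined from the $\star$-process alone), so that the revised processes are literally equal in distribution at every step; $\epsilon$ and $\epsilon'$ do not accumulate as $L_1$ drift but as per-step revision-failure probabilities, with $\epsilon'$ entering because the $\rho_s$-process is revised against a $\widehat{P}^{(i)}$ fitted to $\star$. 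The final bound is then a coupling/union bound on the event $\widetilde{W}^{(T)\prime} \neq W^{(T)}$ across both hypotheses, not an iterated triangle inequality on $\|Q^{(i)}_\star - Q^{(i)}_{\rho_s}\|_1$. Your closing paragraph (averaging the Gaussian-conditional bound over $\mu^{(i-1)}$, uniformity in the center, keeping both processes well-defined on the bad set via the ``quasistable'' fallback) shows you see the right structure, so this is a matter of tightening the narrative rather than fixing a broken step.
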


\begin{proof}
In order to prove this, we plan to define new variables $\widetilde{W}^{(i)\prime}$ such that $\widetilde{W}^{(i)\prime}=W^{(i)}$ with high probability for each $i$ and the probability distribution of $\widetilde{W}^{(i)\prime}$ is independent of $P_{\mathcal{Z}}$. More precisely, we define the variables $\widetilde{W}^{(i)}$, $\widetilde{W}^{(i)\prime}$, and $\widetilde{M}^{(i)}$ for each $i$ as follows. First, set $\widetilde{M}^{(0)}=\mu_0$ and $\widetilde{W}^{(0)\prime}=\widetilde{W}^{(0)}=W^{(0)}$. 

Next, for a function $f$ and a point $w$, we say that $f$ is quasistable at $w$ if there exists $w'$ such that $||w'-w||_1\le r$, $f^{[Z_i]}$ is $(r, B_1,B_2)$-stable at $w'$, and $||f^{[Z_i]}(w')||_\infty\le B_0$, and that it is quasiunstable at $w$ otherwise.

for each $0<i\le T$, if $f^{[Z_i]}$ is quasistable at $\widetilde{M}^{(i-1)}$, set 
\[\widetilde{W}^{(i)}=\widetilde{W}^{(i-1)\prime}+f^{[Z_i]}(\widetilde{W}^{(i-1)\prime})+\Delta^{(i)}\]
Otherwise, set 
\[\widetilde{W}^{(i)}=\widetilde{W}^{(i-1)\prime}+\Delta^{(i)}\]

Next, for each $\rho$, let $P^{(i)}_{\rho}$ be the probability distribution of $\widetilde{W}^{(i)}$ given that $P_{\mathcal{Z}}=\rho$. Then, define $\widehat{P}^{(i)}$ as a probability distribution such that $P^{(i)}_{\star}$ is a $(\sigma,\epsilon_0)$-blurring of $\widehat{P}^{(i)}$ with $\epsilon_0$ as small as possible. Finally, for each $\rho$, if $P_{\mathcal{Z}}=\rho$, let $(\widetilde{W}^{(i)\prime},\widetilde{M}^{(i)})$ be a $\sigma$-revision of $\widetilde{W}^{(i)}$ to $\widehat{P}^{(i)}$.


In order to analyse the behavior of these variables, we will need to make a series of observations. First, note that for every $i$, $\rho$, and $\mu$ the probability distribution of $\widetilde{W}^{(i-1)\prime}$ given that $P_Z=\rho$ and $M^{(i-1)}=\mu$
is $\mathcal{N}(\mu,\sigma I)$. Also, either $f^{[Z_i]}$ is quasistable at $\mu$ or $0$ is quasistable at $\mu$. Either way, the probability distribution of $\widetilde{W}^{(i)}$ under these circumstances must be a $(\sigma,\epsilon)$-blurring by Lemma \ref{stableBlur} and Lemma \ref{addBlur}. That in turn means that $P^{(i)}_\rho$ is a $(\sigma,\epsilon)$ blurring for all $i$ and $\rho$, and thus that $P^{(i)}_\star$ must be a $(\sigma,\epsilon)$ blurring of $\widehat{P}^{(i)}$. Furthermore, by the previous corollary, 

\[2^{-n}\sum_{s\subseteq[n]}||P^{(i)}_\star-P^{(i)}_{\rho_s}||_1\le \epsilon'\]

The combination of these implies that 
\[2^{-n}\sum_{s\subseteq[n]}||\mathcal{N}(0,\sigma I) * \widehat{P}^{(i)}-P^{(i)}_{\rho_s}||_1\le 2\epsilon+\epsilon'\]
which in turn means that $P[\widetilde{W}^{(i)\prime}\ne \widetilde{W}^{(i)}]\le \epsilon+\epsilon'/4$. That in turn means that with probability at least $1-T(\epsilon+\epsilon'/4)$ it is the case that $\widetilde{W}^{(i)\prime}= \widetilde{W}^{(i)}$ for all $i$.

If $\widetilde{W}^{(i)\prime}= \widetilde{W}^{(i)}$ for all $i$ and $\widetilde{W}^{(T)\prime}\ne W^{(T)}$ then there must exist some $i$ such that $\widetilde{W}^{(i-1)\prime}= W^{(i-1)}$ but $\widetilde{W}^{(i)}\ne W^{(i)}$. That in turn means that
\begin{align*}
\widetilde{W}^{(i)} &\ne W^{(i)}\\
&=W^{(i-1)}+f^{[Z_i]}(W^{(i-1)})+\Delta^{(i)}\\
&=\widetilde{W}^{(i-1)\prime}+f^{[Z_i]}(\widetilde{W}^{(i-1)\prime})+\Delta^{(i)}
\end{align*}

If $F^{[Z_i]}$ were quasistable at $M^{(i-1)}$, that is exactly the formula that would be used to calculate $\widetilde{W}^{(i)}$, so $F^{[Z_i]}$ must be quasiunstable at $M^{(i-1)}$. That in turn requires that either $F^{[Z_i]}$ is $(r, B_1,B_2)$-unstable at $\widetilde{W}^{(i-1)\prime}= W^{(i-1)}$,  $||f^{[Z_i]}(W^{(i-1)})||_\infty> B_0$, or $||\widetilde{W}^{(i-1)\prime}-M^{(i-1)}||_1>r$. With probability at least $1-p$, neither of the first two scenarios occur for any $i$, while for any given $i$ the later occurs with a probability of at most $\epsilon''$. Thus, 
\[P[\widetilde{W}^{(T)\prime}\ne W^{(T)}]\le p+T(\epsilon+\epsilon'/4+\epsilon'')\]

The probability distribution of $\widetilde{W}^{(T)\prime}$ is independent of $P_{\mathcal{Z}}$, so it must be the case that
\[2^{-n}\sum_{s\subseteq[n]}||Q-Q'||_1\le 2P[\widetilde{W}^{(T)\prime}\ne W^{(T)}|P_{\mathcal{Z}}=\star]+2P[\widetilde{W}^{(T)\prime}\ne W^{(T)}|P_{\mathcal{Z}}\ne \star]\le 4p+T(4\epsilon+\epsilon'+4\epsilon'')\]
\end{proof}

In particular, if we let $(h,G)$ be a neural net, $G_W$ be $G$ with its edge weights changed to the elements of $W$, $L$ be a loss function,  
\[\overline{f}^{(x,y)}(W)=L(eval_{(h,G_W)}(x)-y)\]
, and $f^{(x,y)}=-\gamma\nabla \overline{f}^{(x,y)}$ for each $x,y$ then this translates to the following.

\begin{corollary}
 Let $(h,G)$ be a neural net with $n$ inputs and $m$ edges, $G_W$ be $G$ with its edge weights changed to the elements of $W$, and $L$ be a loss function.
Also, let $\gamma, \sigma, B_0, B_1, B_2>0$ such that $B_1<1/2m$, $m\sqrt{2\sigma/\pi}<r\le(1-2mB_1)/(2mB_2)$, and $T$ be a positive integer. Then, let $\star$ be the uniform distribution on $\B^{n+1}$, and for each $s\subseteq [n]$, let $\rho_s$ be the probability distribution of $(X,p_s(X))$ when $X$ is chosen randomly from $\B^n$. Next, let $P_{\mathcal{Z}}$ be a probability distribution on $\B^{n+1}$ that is chosen by means of the following procedure. First, with probability $1/2$, set $P_{\mathcal{Z}}=\star$. Otherwise, select a random $S\subseteq[n]$ and set $P_{\mathcal{Z}}=\rho_S$.

Now, let $G'$ be $G$ with each of its edge weights perturbed by an independently generated variable drawn from $\mathcal{N}(0,\sigma I)$ and run 
\\$NoisyStochasticGradientDescentAlgorithm(h, G', P_{\mathcal{Z}}, L, \gamma, \infty, \mathcal{N}(0,[2mB_1-m^2B_1^2]\sigma I), T)$. Then, let $p$ be the probability that there exists $0\le i<T$ such that at least one of the following holds: \begin{enumerate}
\item One of the first derivatives of $L(eval_{(h,G_i)}(X_i)-Y_i)$ with respect to the edge weights has magnitude greater than $B_0/\gamma$. 
\item There exists a perturbation $G'_i$ of $G_i$ with no edge weight changed by more than $r$ such that one of the second derivatives of $L(eval_{(h,G'_i)}(X_i)-Y_i)$ with respect to the edge weights has magnitude greater than $B_1/\gamma$. 
\item There exists a perturbation $G'_i$ of $G_i$ with no edge weight changed by more than $2r$ such that one of the third derivatives of $L(eval_{(h,G'_i)}(X_i)-Y_i)$ with respect to the edge weights has magnitude greater than $B_2/\gamma$. 
\end{enumerate}
Finally, let $Q$ be the probability distribution of the final edge weights given that $P_{\mathcal{Z}}=\star$ and $Q'_s$ be the probability distribution of the final edge weights given that $P_{\mathcal{Z}}=\rho_s$. Then
\[2^{-n}\sum_{s\subseteq[n]}||Q-Q'_s||_1\le 4p+T(4\epsilon+\epsilon'+4\epsilon'')\]
where
 \begin{align*}
     &\epsilon=\frac{4(m+2)m^2B_2\sqrt{2\sigma/\pi}/(1-mB_1)+3m^5B_2^2\sigma/(1-mB_1)^2}{8}\\
     &+(1-(1+mB_1)B_2mr)e^{-(r/2\sqrt{\sigma}-m/\sqrt{2\pi})^2/m}
      \end{align*} 
\[\epsilon'=2^{-n/2}\cdot e^{2m ( B_0+2rB_1+2r^2B_2)/\sqrt{2\pi\sigma}}/(1-2mB_1-2rmB_2)+2e^{-(r/2\sqrt{\sigma}-m/\sqrt{2\pi})^2/m}\]

\[\epsilon''=e^{-(r/2\sqrt{\sigma}-m/\sqrt{2\pi})^2/m}\]
\end{corollary}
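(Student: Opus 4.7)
The plan is to derive the corollary as a direct specialization of the preceding theorem, with the substitution
\[f^{[z]}(W) \;=\; -\gamma\,\nabla_W\, L\bigl(\mathrm{eval}_{(h,G_W)}(x)-y\bigr)\qquad \text{for } z=(x,y)\in\B^{n+1},\]
and $\mu_0$ equal to the vector of (unperturbed) edge weights of $G$. The noise schedule in the corollary, $\mathcal{N}(0,[2mB_1-m^2B_1^2]\sigma I)$ at every step together with the initial Gaussian perturbation of variance $\sigma$, was chosen precisely to match the $W^{(0)}\sim\mathcal{N}(\mu_0,\sigma I)$ and $\Delta^{(i)}\sim\mathcal{N}(0,[2mB_1-m^2B_1^2]\sigma I)$ used in the theorem. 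Thus the stochastic process produced by the NSGD algorithm coincides in distribution with the process $W^{(i)}=W^{(i-1)}+f^{[Z_i]}(W^{(i-1)})+\Delta^{(i)}$ of the theorem, and the distributions $Q,Q_s'$ in the corollary are exactly the $Q,Q_s'$ of the theorem.

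Next I would check that the smoothness / stability hypotheses on $f^{[z]}$ required by the theorem translate exactly into the three derivative conditions used to define $p$ in the corollary. Since the loss and the evaluation map $W\mapsto \mathrm{eval}_{(h,G_W)}(x)$ are smooth, $f^{[z]}$ is smooth and
\[\tfrac{\partial f^{[z]}_i}{\partial W_j}(W) = -\gamma\,\tfrac{\partial^2 L}{\partial W_i\,\partial W_j}(W),\qquad \tfrac{\partial^2 f^{[z]}_i}{\partial W_j\partial W_k}(W) = -\gamma\,\tfrac{\partial^3 L}{\partial W_i\,\partial W_j\,\partial W_k}(W),\]
where $L$ is shorthand for $L(\mathrm{eval}_{(h,G_W)}(x)-y)$. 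Hence $f^{[Z_i]}$ is $(r,B_1,B_2)$-stable at $W^{(i)}$ in the sense of the theorem if and only if the second (resp.\ third) derivatives of $L$ with respect to the edge weights are bounded by $B_1/\gamma$ on the $r$-ball around $W^{(i)}$ (resp.\ by $B_2/\gamma$ on the $2r$-ball). Similarly, $\|f^{[Z_i]}(W^{(i)})\|_\infty\le B_0$ is equivalent to every first derivative of $L$ at $W^{(i)}$ having magnitude at most $B_0/\gamma$. The event measured by the corollary's $p$ is then exactly the event measured by the theorem's $p$, so substituting into the theorem's conclusion gives the claimed bound $4p+T(4\epsilon+\epsilon'+4\epsilon'')$ with the same values of $\epsilon,\epsilon',\epsilon''$.

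There is no substantive new obstacle; the whole argument is a bookkeeping translation from the $f^{[z]}$-language of the theorem to the loss-derivative language used by practitioners of SGD, plus a verification that the Gaussian initialization combined with per-step Gaussian noise of the prescribed variance realizes the exact stochastic process analyzed in the theorem. The only place one must be a little careful is ensuring that the balls of radius $r$ and $2r$ referenced by the stability definition match the radii appearing in conditions 2 and 3 of the corollary; this is why the third-derivative condition uses the larger radius $2r$ while the second-derivative condition uses radius $r$. Once this matching is made, the bound transfers verbatim and the corollary follows.
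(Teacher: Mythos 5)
Your proposal is correct and matches the paper's own approach exactly: the paper states the corollary follows by taking $f^{(x,y)}(W) = -\gamma\nabla_W L(\mathrm{eval}_{(h,G_W)}(x)-y)$, $\mu_0$ the original edge weights, and reading the theorem's hypotheses ($\|f^{[z]}\|_\infty\le B_0$ and $(r,B_1,B_2)$-stability) off as the three loss-derivative bounds that define $p$. Your careful matching of the radius-$r$ ball to the second-derivative condition and the radius-$2r$ ball to the third-derivative condition is precisely the bookkeeping the paper leaves implicit.
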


\begin{corollary}
 Let $(h,G)$ be a neural net with $n$ inputs and $m$ edges, $G_W$ be $G$ with its edge weights changed to the elements of $W$, $L$ be a loss function, and $B>0$. Next, define $\gamma$ such that $0<\gamma\le \pi n/80m^2B$, and let $T$ be a positive integer. Then, let $\star$ be the uniform distribution on $\B^{n+1}$, and for each $s\subseteq [n]$, let $\rho_s$ be the probability distribution of $(X,p_s(X))$ when $X$ is chosen randomly from $\B^n$. Next, let $P_{\mathcal{Z}}$ be a probability distribution on $\B^{n+1}$ that is chosen by means of the following procedure. First, with probability $1/2$, set $P_{\mathcal{Z}}=\star$. Otherwise, select a random $S\subseteq[n]$ and set $P_{\mathcal{Z}}=\rho_S$.

Next, set $\sigma=\left(\frac{40m\gamma B}{n}\right)^2/2\pi$. Now, let $G'$ be $G$ with each of its edge weights perturbed by an independently generated variable drawn from $\mathcal{N}(0,\sigma I)$ and run 
\\$NoisyStochasticGradientDescentAlgorithm(h, G', P_{\mathcal{Z}}, L, \gamma, \infty, \mathcal{N}(0,[2mB\gamma-m^2B^2\gamma^2]\sigma I), T)$. Let $p$ be the probability that there exists $0\le i<T$ such that there exists a perturbation $G'_i$ of $G_i$ with no edge weight changed by more than $160m^2 \gamma B/\pi n$ such that one of the first three derivatives of $L(eval_{(h,G_i)}(X_i)-Y_i)$ with respect to the edge weights has magnitude greater than $B$. Finally, let $Q$ be the probability distribution of the final edge weights given that $P_{\mathcal{Z}}=\star$ and $Q'_s$ be the probability distribution of the final edge weights given that $P_{\mathcal{Z}}=\rho_s$. Then
\[2^{-n}\sum_{s\subseteq[n]}||Q-Q'_s||_1\le 4p+T(720m^4B^2\gamma^2/\pi n+14[e/4]^{n/4})\]
\end{corollary}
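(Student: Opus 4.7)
The plan is to derive this statement as a specialization of the preceding corollary by setting $B_0=B_1=B_2=B\gamma$ and identifying the preceding $r$ with $80m^2\gamma B/(\pi n)$, so that the $r$- and $2r$-balls of the preceding second- and third-derivative conditions together lie inside the single $160m^2\gamma B/(\pi n)$-ball appearing in the current statement. Under this translation, each of the three derivative bounds in the preceding corollary (which in the present notation read $B_i/\gamma = B$) is dominated by the single event whose probability is here denoted $p$, so the preceding $p$ is upper bounded by the current $p$. The noise variance $[2mB_1-m^2B_1^2]\sigma$ becomes exactly $[2mB\gamma-m^2B^2\gamma^2]\sigma$, matching the algorithm specified here.

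First I would verify the quantitative hypotheses of the preceding corollary. A direct computation gives $\sqrt{2\sigma/\pi}=40m\gamma B/(\pi n)$, so $m\sqrt{2\sigma/\pi}=r/2<r$; and the conditions $B_1<1/(2m)$ and $r\le(1-2mB_1)/(2mB_2)$ both follow from $\gamma\le\pi n/(80m^2 B)$ together with the implicit regime in which $mB\gamma$ is bounded away from $1$ (the regime in which the overall bound is nonvacuous anyway). The preceding corollary then yields the total-variation bound $4p+T(4\epsilon+\epsilon'+4\epsilon'')$, and what remains is to simplify $\epsilon$, $\epsilon'$, and $\epsilon''$ under the stated parameter choices.

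For $\epsilon$, the dominant contribution is the first summand $\tfrac{1}{8}\cdot 4(m+2)m^2B_2\sqrt{2\sigma/\pi}/(1-mB_1)$, which evaluates to $\Theta(m^4 B^2\gamma^2/n)$; the second polynomial summand $\tfrac{1}{8}\cdot 3m^5B_2^2\sigma/(1-mB_1)^2$ carries an extra factor of $\sigma=O((m\gamma B/n)^2)$ and is of lower order, while the exponential correction is negligible. Tracking constants yields $4\epsilon\le 720m^4 B^2\gamma^2/(\pi n)$. For $\epsilon'$ and $\epsilon''$, I compute $r/(2\sqrt{\sigma})-m/\sqrt{2\pi}=m/\sqrt{2\pi}$, so the exponential remainders $e^{-(r/2\sqrt{\sigma}-m/\sqrt{2\pi})^2/m}$ decay like $e^{-m/(2\pi)}$, which is dominated by $(e/4)^{n/4}$ whenever $m$ is polynomially lower bounded in $n$. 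The remaining Gaussian-masking piece of $\epsilon'$ has the shape $2^{-n/2}\cdot\exp(n(1+2r+2r^2)/20)/(1-2mB\gamma(1+r))$; since $r=O(m^2\gamma B/n)\to 0$ in the regime of interest, its exponent is at most $n/4$ and $2^{-n/2}\cdot e^{n/4}=(e/4)^{n/4}$ absorbs it up to a constant. Summing the three contributions with their explicit numerical constants gives $\epsilon'+4\epsilon''\le 14(e/4)^{n/4}$. The main obstacle is this final bookkeeping: to ensure simultaneously that $1/(1-mB\gamma)$ and $1/(1-2mB\gamma(1+r))$ stay bounded by fixed constants, and that the exponent $n(1+2r+2r^2)/20$ stays below $n\ln(4)/4$, so that the $2^{-n/2}$ factor exactly absorbs it into the target shape $(e/4)^{n/4}$.
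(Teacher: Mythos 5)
Your proposal follows the paper's proof essentially step for step: both set $r = 80m^2\gamma B/(\pi n)$ and $B_0=B_1=B_2=\gamma B$, apply the preceding corollary, compute $\sqrt{2\sigma/\pi}=40m\gamma B/(\pi n)$ and $r/(2\sqrt{\sigma})-m/\sqrt{2\pi}=m/\sqrt{2\pi}$, and collect the terms into $720m^4B^2\gamma^2/(\pi n)+14(e/4)^{n/4}$ after disposing of the vacuous regime. A few small bookkeeping slips are worth flagging, none fatal. First, you justify the $\epsilon'$-exponent bound by ``$r\to 0$,'' but the hypothesis $\gamma\le\pi n/(80m^2B)$ only gives $r\le 1$, which is what actually yields $1+2r+2r^2\le 5$ and hence exponent at most $n/4$; your conclusion is right but the stated reason is not. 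Second, the threshold in your closing sentence should be $n/4$, not $n\ln(4)/4$: we have $2^{-n/2}e^{n/4}=(e/4)^{n/4}$ exactly, whereas $2^{-n/2}e^{n\ln(4)/4}=1$ — an exponent as large as $n\ln 4/4$ would not be absorbed into the target shape. Third, the exponential correction inside $\epsilon$ is not literally negligible compared to the polynomial term; the paper carries it forward as an extra copy of $\epsilon''$ (giving $4\epsilon\le 720m^4\gamma^2B^2/(\pi n)+4\epsilon''$), and it is the slack between $10(e/4)^{n/4}$ and $14(e/4)^{n/4}$ in the final constant that absorbs those extra $4\epsilon''$, so your budget does cover it even though you did not explicitly account for it.
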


\begin{proof}
First, set $r=80m^2\gamma B/\pi n$. Also, set $B_1=B_2=B_3=\gamma B$. By the previous corollary, we have that 
\[2^{-n}\sum_{s\subseteq[n]}||Q-Q'_s||_1\le 4p+T(4\epsilon+\epsilon'+4\epsilon'')\]
where
 \begin{align*}
      &\epsilon=\frac{4(m+2)m^2B_2\sqrt{2\sigma/\pi}/(1-mB_1)+3m^5B_2^2\sigma/(1-mB_1)^2}{8}\\
      &+(1-(1+mB_1)B_2mr)e^{-(r/2\sqrt{\sigma}-m/\sqrt{2\pi})^2/m}
      \end{align*}

\[\epsilon'=2^{-n/2}\cdot e^{2m ( B_0+2rB_1+2r^2B_2)/\sqrt{2\pi\sigma}}/(1-2mB_1-2rmB_2)+2e^{-(r/2\sqrt{\sigma}-m/\sqrt{2\pi})^2/m}\]
\[\epsilon''=e^{-(r/2\sqrt{\sigma}-m/\sqrt{2\pi})^2/m}\]

If $720m^4B^2\gamma^2/\pi n\ge 2$, then the conclusion of this corollary is uninterestingly true. Otherwise, $\epsilon\le 180m^4\gamma^2B^2/\pi n+\epsilon''$. Either way, $\epsilon'\le 4[e/4]^{n/4}+2\epsilon''$, and $\epsilon''\le e^{-m/2\pi}$. $m\ge n$ and $e^{1/2\pi}\ge [4/e]^{1/4}$, so $\epsilon''\le [e/4]^{n/4}$. The desired conclusion follows.
\end{proof}

That allows us to prove the following elaboration of theorem $3$.

\begin{theorem}
Let $(h,G)$ be a neural net with $n$ inputs and $m$ edges, $G_W$ be $G$ with its edge weights changed to the elements of $W$, $L$ be a loss function, and $B>0$. Next, define $\gamma$ such that $0<\gamma\le \pi n/80m^2B$, and let $T$ be a positive integer. Then, for each $s\subseteq [n]$, let $\rho_s$ be the probability distribution of $(X,p_s(X))$ when $X$ is chosen randomly from $\B^n$. Now, select $S\subseteq[n]$ at random. Next, set $\sigma=\left(\frac{40m\gamma B}{n}\right)^2/2\pi$. Now, let $G'$ be $G$ with each of its edge weights perturbed by an independently generated variable drawn from $\mathcal{N}(0,\sigma I)$ and run 
\\$NoisyStochasticGradientDescentAlgorithm(h, G', P_{\mathcal{Z}}, L, \gamma, \infty, \mathcal{N}(0,[2mB\gamma-m^2B^2\gamma^2]\sigma I), T)$. 
Let $p$ be the probability that there exists $0\le i<T$ such that there exists a perturbation $G'_i$ of $G_i$ with no edge weight changed by more than $160m^2 \gamma B/\pi n$ such that one of the first three derivatives of $L(eval_{(h,G_i)}(X_i)-Y_i)$ with respect to the edge weights has magnitude greater than $B$.
For a random $X\in\B^n$, the probability that the resulting net computes $p_S(X)$ correctly is at most $1/2+2p+T(360m^4B^2\gamma^2/\pi n+7[e/4]^{n/4})$.
\end{theorem}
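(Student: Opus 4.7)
The plan is to derive this theorem as a direct corollary of the previous result, which bounds the average $L^1$ distance between the final-weight distribution $Q$ under the uniform null $\star$ and the distribution $Q'_s$ under each parity model $\rho_s$. My strategy will be the standard coupling/TV argument already used in the proof of Theorem \ref{thm2}: for any $s \subseteq [n]$ and any $x \in \B^n$, if $R_{(x,y)}$ denotes the set of neural nets that (after thresholding) output $y$ on input $x$, then
\[
P_{G \sim Q'_s}[G \in R_{(x,p_s(x))}] \;\le\; P_{G \sim Q}[G \in R_{(x,p_s(x))}] + \tfrac{1}{2}\|Q - Q'_s\|_1.
\]

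To exploit this, I will average over a uniformly random $S \subseteq [n]$ and a uniformly random $X \in \B^n$. For the ``null'' term on the right, fix any realization of $G \sim Q$; crucially, $G$ is produced from samples drawn from $\star$ and is therefore independent of $S$. For any fixed $x$ other than the single input on which all parity functions collapse, exactly half the subsets $s$ satisfy $p_s(x) = eval_{(h,G)}(x)$, so averaging over $S$ gives exactly $1/2$. Averaging the exceptional input over $X$ contributes only $O(2^{-n})$, so the null term is at most $1/2 + O(2^{-n})$.

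For the error term, the previous corollary gives
\[
2^{-n}\sum_{s \subseteq [n]} \|Q - Q'_s\|_1 \;\le\; 4p + T\!\left(\tfrac{720 m^4 B^2 \gamma^2}{\pi n} + 14\,[e/4]^{n/4}\right),
\]
so averaging $\tfrac12 \|Q - Q'_s\|_1$ over $S$ yields $2p + T(360 m^4 B^2 \gamma^2/\pi n + 7[e/4]^{n/4})$. Summing the two contributions and absorbing the $O(2^{-n})$ edge-case correction into the $[e/4]^{n/4}$ term gives the claimed accuracy bound.

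The main obstacle is essentially bookkeeping rather than new mathematics, since the previous corollary already carries the analytic weight of the argument (the blurring framework, the three-derivative control, and the parity-vs-null comparison). The subtleties to be careful about are: (i) fixing a measurable thresholding convention so that $eval_{(h,G)}(x) \in \B$ and the sets $R_{(x,y)}$ are well-defined; (ii) keeping the various independent sources of randomness separate, namely the initialization Gaussian, the per-step noise $\Delta^{(i)}$, the random samples $Z_i$, and the test-time $(X,S)$; and (iii) handling the single collapsed input without losing the factor of $1/2$. None of these steps is substantive.
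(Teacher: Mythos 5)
Your proposal is correct and takes essentially the same approach as the paper's own proof: define the null-model distribution $Q$ from running the same noisy SGD on uniform samples $\star$, bound the success probability under each $Q'_s$ via $\tfrac12\|Q-Q'_s\|_1$, observe that under $Q$ the net is independent of $S$ so the null success probability is $\approx 1/2$, and plug in the preceding corollary's bound on $2^{-n}\sum_s \|Q-Q'_s\|_1$. The only difference is that you explicitly track the single degenerate input on which all parities agree (contributing $O(2^{-n})$, easily absorbed since $2^{-n}\ll [e/4]^{n/4}$), whereas the paper's proof silently treats the null term as exactly $1/2$; your version is slightly more careful, but it is the same argument.
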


\begin{proof}
Let $Q'_s$ be the probability distribution of the resulting neural net given that $S=s$, and let $Q$ be the probability distribution of the net output by NoisyStochasticGradientDescentAlgorithm $(h, G', \star, L, \gamma, \infty, \mathcal{N}(0,[2mB\gamma-m^2B^2\gamma^2]\sigma I), T)$, where $\star$ is the uniform distribution on $\B^{n+1}$. Also, for each $(x,y)\in\B^{n+1}$, let $R_(x,y)$ be the set of all neural nets that output $y$ when given $x$ as input. The probability that the neural net in question computes $p_S(X)$ correctly is at most
\begin{align*}
&2^{-2n}\sum_{s\subseteq[n],x\in\B^n} P_{G\sim Q'_s}[(f,G)\in R_{x,p_s(y)}]\\
&\le 2^{-2n}\sum_{s\subseteq[n],x\in\B^n} P_{G\sim Q}[(f,G)\in R_{x,p_S(x)}]+||Q-Q'_s||_1/2\\ 
&\le 1/2+2p+T(360m^4B^2\gamma^2/\pi n+7[e/4]^{n/4})
\end{align*}
\end{proof}

\section{Universality of deep learning}\label{universality}
In previous sections, we were attempting to show that under some set of conditions a neural net trained by SGD is unable to learn a function that is reasonably learnable. However, there are some fairly reasonable conditions under which we actually can use a neural net trained by SGD to learn any function that is reasonably learnable. More precisely, we claim that given any probability distribution of functions from $\B^n\to \B$ such that there exists an algorithm that learns a random function drawn from this distribution with accuracy $1/2+\epsilon$ using a polynomial amount of time, memory, and samples, there exists a series of polynomial-sized neural networks that can be constructed in polynomial time and that can learn a random function drawn from this distribution with an accuracy of at least $1/2+\epsilon$ after being trained by SGD on a polynomial number of samples.

\subsection{Emulation of Arbitrary Algorithms 1}
It is easiest to show that we can emulate an arbitrary learning algorithm by running SGD on a neural net if we do not require the neural net to be normal or require a low learning rate. In that case, we use the following argument. Any learning algorithm that uses polynomial time and memory can be reexpressed as a circuit that computes a predicted output and new values for its memory given the actual output from the old values of its memory and an input.

Our neural net for emulating this algorithm will work as follows. First of all, for every bit of memory that the original algorithm uses, $b_i$, our net will have a corresponding vertex $v_{b[i]}$ such that the weight of the edge from the constant vertex to $v_{b[i]}$ will encode the current value of the bit. Secondly, the net will contain a section that computes the predicted value of the output, and the desired new values of each input contingent on each possible value of the actual output. This section will also work in such a way that when the net is evaluated the values of the section's output vertices will always be generated by computing the evaluation function on an input that it has a derivative of $0$ on in order to ensure that nothing can backpropagate through this section, and thus that its edge weights will never change. Thirdly, the net will contain a section that in each step decides whether the net will actually try to get the output right, or whether it will try to learn more about the function. Generally, this section will attempt to learn for a fixed number of steps and then start seriously estimating the output. This section will also be backpropagation proofed.

Fourth, there will be vertices with edges from the vertex computing the desired output for the network and edges to the official output vertex, with edge weights chosen such that if it sets the output correctly the edge weights do not change, and if it sets it incorrectly, these edge weights change in ways that effectively cancel out. Fifth, for every bit of memory that the original algorithm uses, there will be two paths connecting $v_{b[i]}$ to the output vertex. The middle vertices of these paths will also have control paths leading from the part of the network that determines what to change the values in memory to. It will be set up in such a way that these vertices will always evaluate to $0$, but depending on the values of the vertices they are connected to by the control paths, their values might or might not have a nonzero derivative with respect to the input they receive from $v_{b[i]}$. This will allow the network to control whether or not backpropagation can change the value of $b_i$. 

In steps where the network tries to learn more about the function, the network will randomly guess an output, set the value of the output vertex to the opposite of the value it guessed, and ensure that the output has a nonzero derivative with respect to any bits it wants to change the value of if it guessed correctly. That way, if it guesses wrong about the output, the loss function and its derivative are $0$ so nothing changes. If it guesses right, then the values in memory change in exactly the desired manner. No matter what the true output is, it has a $1/2$ chance of guessing right, so it is still updating based on samples drawn from the correct probability distribution. So, running SGD on the neural network described emulates the desired algorithm.



\subsection{Emulation of Arbitrary Algorithms 2}
Of course, the previous result uses choices of a neural net and SGD parameters that are in many ways unreasonable. The activation function is badly behaved, many of the vertices do not have edges from the constant vertex, and the learning rate is deliberately chosen to be so high that it keeps overshooting the minima. If one wanted to do something normal with a neural net trained by SGD one is unlikely to do it that way, and using it to emulate an algorithm is much less efficient than just running the algorithm directly, so this is unlikely to come up.

In order to emulate a learning algorithm with a more reasonable neural net and choice of parameters, we will need to use the following ideas in addition to the ideas from the previous result. First of all, we can control which edges tend to have their weights change significantly by giving edges that we want to change a very low starting weight and then putting high weight edges after them to increase the derivative of the output with respect to them. Secondly, rather than viewing the algorithm we are trying to emulate as a fixed circuit, we will view it as a series of circuits that each compute a new output and new memory values from the previous memory values and the current inputs. Thirdly, a lower learning rate and tighter restrictions on how quickly the network can change prevent us from setting memory values in one step. Instead, we initialize the memory values to a local maximum so that once we perturb them, even slightly, they will continue to move in that direction until they take on the final value. Fourth, in most steps the network will not try to learn anything, so that with high probability all memory values that were set in one step will have enough time to stabilize before the algorithm tries to adjust anything else. Finally, once we have gotten to the point that the algorithm is ready to approximate the function, its estimates will be connected to the output vertex, and the output will gradually become more influenced by it over time as a basic consequence of SGD.


\section{Acknowledgements}
This research was partly supported by the NSF CAREER Award CCF-1552131 and the Google Faculty Research Award.
We thank Jean-Baptiste Cordonnier for helping with the experiment of Section \ref{exp}, as well as Sanjeev Arora, Etienne Bamas, Sebastien Bubeck, Vitaly Feldman, Ran Raz, Oded Regev, Ola Svensson, Santosh Vempala for useful comments and/or references.

\bibliographystyle{amsalpha}
\bibliography{deep}

\newcommand{\etalchar}[1]{$^{#1}$}
\providecommand{\bysame}{\leavevmode\hbox to3em{\hrulefill}\thinspace}
\providecommand{\MR}{\relax\ifhmode\unskip\space\fi MR }
\providecommand{\MRhref}[2]{%
  \href{http://www.ams.org/mathscinet-getitem?mr=#1}{#2}
}
\providecommand{\href}[2]{#2}
\begin{thebibliography}{SVWX17}

\bibitem[AB18]{boix}
Emmanuel {Abbe} and Enric {Boix}, \emph{{An Information-Percolation Bound for
  Spin Synchronization on General Graphs}}, arXiv e-prints (2018),
  arXiv:1806.03227.

\bibitem[All96]{allender}
Eric Allender, \emph{Circuit complexity before the dawn of the new millennium},
  Foundations of Software Technology and Theoretical Computer Science (Berlin,
  Heidelberg) (V.~Chandru and V.~Vinay, eds.), Springer Berlin Heidelberg,
  1996, pp.~1--18.

\bibitem[BFJ{\etalchar{+}}94]{query}
Avrim Blum, Merrick Furst, Jeffrey Jackson, Michael Kearns, Yishay Mansour, and
  Steven Rudich, \emph{Weakly learning dnf and characterizing statistical query
  learning using fourier analysis}, Proceedings of the Twenty-sixth Annual ACM
  Symposium on Theory of Computing (New York, NY, USA), STOC '94, ACM, 1994,
  pp.~253--262.

\bibitem[BKW03]{parity_blum}
Avrim Blum, Adam Kalai, and Hal Wasserman, \emph{Noise-tolerant learning, the
  parity problem, and the statistical query model}, J. ACM \textbf{50} (2003),
  no.~4, 506--519.

\bibitem[BOY17]{bogy}
Paul {Beame}, Shayan {Oveis Gharan}, and Xin {Yang}, \emph{{Time-Space
  Tradeoffs for Learning from Small Test Spaces: Learning Low Degree Polynomial
  Functions}}, arXiv e-prints (2017), arXiv:1708.02640.

\bibitem[CG88]{two_source}
B.~Chor and O.~Goldreich, \emph{Unbiased bits from sources of weak randomness
  and probabilistic communication complexity}, SIAM Journal on Computing
  \textbf{17} (1988), no.~2, 230--261.

\bibitem[CLB17]{bruna_cd}
Zhengdao {Chen}, Xiang {Li}, and Joan {Bruna}, \emph{{Supervised Community
  Detection with Line Graph Neural Networks}}, arXiv e-prints (2017),
  arXiv:1705.08415.

\bibitem[DSS16]{daniely}
Amit Daniely and Shai Shalev-Shwartz, \emph{Complexity theoretic limitations on
  learning dnf's}, 29th Annual Conference on Learning Theory (Columbia
  University, New York, New York, USA) (Vitaly Feldman, Alexander Rakhlin, and
  Ohad Shamir, eds.), Proceedings of Machine Learning Research, vol.~49, PMLR,
  23--26 Jun 2016, pp.~815--830.

\bibitem[FGR{\etalchar{+}}17]{vempala2}
Vitaly Feldman, Elena Grigorescu, Lev Reyzin, Santosh~S. Vempala, and Ying
  Xiao, \emph{Statistical algorithms and a lower bound for detecting planted
  cliques}, J. ACM \textbf{64} (2017), no.~2, 8:1--8:37.

\bibitem[FGV17]{query2}
Vitaly Feldman, Crist\'{o}bal Guzm\'{a}n, and Santosh Vempala,
  \emph{Statistical query algorithms for mean vector estimation and stochastic
  convex optimization}, Proceedings of the Twenty-Eighth Annual ACM-SIAM
  Symposium on Discrete Algorithms (Philadelphia, PA, USA), SODA '17, Society
  for Industrial and Applied Mathematics, 2017, pp.~1265--1277.

\bibitem[GBC16]{deep_book}
Ian Goodfellow, Yoshua Bengio, and Aaron Courville, \emph{Deep learning}, MIT
  Press, 2016, \url{http://www.deeplearningbook.org}.

\bibitem[GHJY15]{perturbed_sgd}
Rong Ge, Furong Huang, Chi Jin, and Yang Yuan, \emph{Escaping from saddle
  points --- online stochastic gradient for tensor decomposition}, Proceedings
  of The 28th Conference on Learning Theory (Paris, France) (Peter Grünwald,
  Elad Hazan, and Satyen Kale, eds.), Proceedings of Machine Learning Research,
  vol.~40, PMLR, 03--06 Jul 2015, pp.~797--842.

\bibitem[GRT18]{grt}
Sumegha Garg, Ran Raz, and Avishay Tal, \emph{Extractor-based time-space lower
  bounds for learning}, Proceedings of the 50th Annual ACM SIGACT Symposium on
  Theory of Computing (New York, NY, USA), STOC 2018, ACM, 2018, pp.~990--1002.

\bibitem[H{\aa}s87]{hastad}
Johan H{\aa}stad, \emph{Computational limitations of small-depth circuits}, MIT
  Press, Cambridge, MA, USA, 1987.

\bibitem[HDY{\etalchar{+}}12]{speech}
G.~Hinton, L.~Deng, D.~Yu, G.~E. Dahl, A.~Mohamed, N.~Jaitly, A.~Senior,
  V.~Vanhoucke, P.~Nguyen, T.~N. Sainath, and B.~Kingsbury, \emph{Deep neural
  networks for acoustic modeling in speech recognition: The shared views of
  four research groups}, IEEE Signal Processing Magazine \textbf{29} (2012),
  no.~6, 82--97.

\bibitem[HRS16]{faster}
Moritz Hardt, Benjamin Recht, and Yoram Singer, \emph{Train faster, generalize
  better: Stability of stochastic gradient descent}, Proceedings of the 33rd
  International Conference on International Conference on Machine Learning -
  Volume 48, ICML'16, JMLR.org, 2016, pp.~1225--1234.

\bibitem[HZRS15]{imagenet2}
Kaiming He, Xiangyu Zhang, Shaoqing Ren, and Jian Sun, \emph{Delving deep into
  rectifiers: Surpassing human-level performance on imagenet classification},
  2015 {IEEE} International Conference on Computer Vision, {ICCV} 2015,
  Santiago, Chile, December 7-13, 2015, 2015, pp.~1026--1034.

\bibitem[Kea98]{kearns}
Michael Kearns, \emph{Efficient noise-tolerant learning from statistical
  queries}, J. ACM \textbf{45} (1998), no.~6, 983--1006.

\bibitem[KLY18]{bad_min2}
R.~{Kleinberg}, Y.~{Li}, and Y.~{Yuan}, \emph{{An Alternative View: When Does
  SGD Escape Local Minima?}}, ArXiv e-prints (2018).

\bibitem[KRT17]{ran_sparse}
Gillat Kol, Ran Raz, and Avishay Tal, \emph{Time-space hardness of learning
  sparse parities}, Proceedings of the 49th Annual ACM SIGACT Symposium on
  Theory of Computing (New York, NY, USA), STOC 2017, ACM, 2017,
  pp.~1067--1080.

\bibitem[KS09]{net_hard}
Adam~R. Klivans and Alexander~A. Sherstov, \emph{Cryptographic hardness for
  learning intersections of halfspaces}, J. Comput. Syst. Sci. \textbf{75}
  (2009), no.~1, 2--12.

\bibitem[KSH12]{imagenet}
Alex Krizhevsky, Ilya Sutskever, and Geoffrey~E Hinton, \emph{Imagenet
  classification with deep convolutional neural networks}, Advances in Neural
  Information Processing Systems 25 (F.~Pereira, C.~J.~C. Burges, L.~Bottou,
  and K.~Q. Weinberger, eds.), Curran Associates, Inc., 2012, pp.~1097--1105.

\bibitem[LBBH98]{document}
Y.~Lecun, L.~Bottou, Y.~Bengio, and P.~Haffner, \emph{Gradient-based learning
  applied to document recognition}, Proceedings of the IEEE \textbf{86} (1998),
  no.~11, 2278--2324.

\bibitem[LBH15]{deep_nature}
Yann Lecun, Yoshua Bengio, and Geoffrey Hinton, \emph{Deep learning}, Nature
  \textbf{521} (2015), no.~7553, 436--444 (English (US)).

\bibitem[LeC16]{lecun_perso}
Y.~LeCun, Personal communication, 2016.

\bibitem[LSJR16]{random_init}
Jason~D. Lee, Max Simchowitz, Michael~I. Jordan, and Benjamin Recht,
  \emph{Gradient descent only converges to minimizers}, 29th Annual Conference
  on Learning Theory (Columbia University, New York, New York, USA) (Vitaly
  Feldman, Alexander Rakhlin, and Ohad Shamir, eds.), Proceedings of Machine
  Learning Research, vol.~49, PMLR, 23--26 Jun 2016, pp.~1246--1257.

\bibitem[MP87]{perceptron}
Marvin Minsky and Seymour Papert, \emph{Perceptrons - an introduction to
  computational geometry}, {MIT} Press, 1987.

\bibitem[MRT12]{mohri}
Mehryar Mohri, Afshin Rostamizadeh, and Ameet Talwalkar, \emph{Foundations of
  machine learning}, The MIT Press, 2012.

\bibitem[Par94]{parberry}
Ian Parberry, \emph{Circuit complexity and neural networks}, MIT Press,
  Cambridge, MA, USA, 1994.

\bibitem[PGC{\etalchar{+}}17]{paszke2017automatic}
Adam Paszke, Sam Gross, Soumith Chintala, Gregory Chanan, Edward Yang, Zachary
  DeVito, Zeming Lin, Alban Desmaison, Luca Antiga, and Adam Lerer,
  \emph{Automatic differentiation in pytorch}, NIPS-W, 2017.

\bibitem[PP17]{bad_min}
Ioannis Panageas and Georgios Piliouras, \emph{Gradient descent only converges
  to minimizers: Non-isolated critical points and invariant regions}, ITCS,
  2017.

\bibitem[{Raz}16]{ran_memory}
R.~{Raz}, \emph{{Fast Learning Requires Good Memory: A Time-Space Lower Bound
  for Parity Learning}}, ArXiv e-prints (2016).

\bibitem[Raz17]{ran17}
Ran Raz, \emph{A time-space lower bound for a large class of learning
  problems}, 2017 IEEE 58th Annual Symposium on Foundations of Computer Science
  (FOCS) (2017), 732--742.

\bibitem[Reg05]{regev}
Oded Regev, \emph{On lattices, learning with errors, random linear codes, and
  cryptography}, Proceedings of the Thirty-seventh Annual ACM Symposium on
  Theory of Computing (New York, NY, USA), STOC '05, ACM, 2005, pp.~84--93.

\bibitem[RRT17]{langevin2}
M.~{Raginsky}, A.~{Rakhlin}, and M.~{Telgarsky}, \emph{{Non-convex learning via
  Stochastic Gradient Langevin Dynamics: a nonasymptotic analysis}}, ArXiv
  e-prints (2017).

\bibitem[Sha18]{ohad2}
Ohad Shamir, \emph{Distribution-specific hardness of learning neural networks},
  Journal of Machine Learning Research \textbf{19} (2018), no.~32, 1--29.

\bibitem[SSBD14]{shaishai}
Shai Shalev-Shwartz and Shai Ben-David, \emph{Understanding machine learning:
  From theory to algorithms}, Cambridge University Press, New York, NY, USA,
  2014.

\bibitem[SSS17]{ohad}
S.~{Shalev-Shwartz}, O.~{Shamir}, and S.~{Shammah}, \emph{{Failures of
  Gradient-Based Deep Learning}}, ArXiv e-prints (2017).

\bibitem[SVW15]{parity_conj}
Jacob Steinhardt, Gregory Valiant, and Stefan Wager, \emph{Memory,
  communication, and statistical queries}, Electronic Colloquium on
  Computational Complexity, 2015.

\bibitem[SVWX17]{song}
Le~{Song}, Santosh {Vempala}, John {Wilmes}, and Bo~{Xie}, \emph{{On the
  Complexity of Learning Neural Networks}}, arXiv e-prints (2017),
  arXiv:1707.04615.

\bibitem[VW18]{wilmes}
Santosh {Vempala} and John {Wilmes}, \emph{{Gradient Descent for
  One-Hidden-Layer Neural Networks: Polynomial Convergence and SQ Lower
  Bounds}}, arXiv e-prints (2018), arXiv:1805.02677.

\bibitem[WT11]{langevin1}
Max Welling and Yee~Whye Teh, \emph{Bayesian learning via stochastic gradient
  langevin dynamics}, Proceedings of the 28th International Conference on
  International Conference on Machine Learning (USA), ICML'11, Omnipress, 2011,
  pp.~681--688.

\bibitem[ZBH{\etalchar{+}}16]{rethink}
Chiyuan Zhang, Samy Bengio, Moritz Hardt, Benjamin Recht, and Oriol Vinyals,
  \emph{Understanding deep learning requires rethinking generalization}, CoRR
  \textbf{abs/1611.03530} (2016).

\end{thebibliography}

\end{document}